\newcommand{\cov}{\text{cov}}
\newcommand{\Sd}{\mathbb{S}^{d-1}}
\newcommand{\Rd}{\mathbb{R}^d}
\newcommand{\x}{\mathbf{x}}
\newcommand{\calX}{\mathcal{X}}
\newcommand{\R}{\mathbb{R}}
\newcommand{\Ss}{\mathbb{S}}
\newcommand{\N}{\mathbb{N}}
\newcommand{\calH}{\mathcal{H}}
\newcommand{\MVN}{\mathcal{MVN}}
\newcommand{\kGaus}{k_{Gaus}}
\DeclareMathOperator*{\argmin}{\arg\min}
\newtheorem{definition}{Definition}[section]
\newtheorem{theorem}{Theorem}[section]
\newtheorem{prop}{Proposition}
\begin{document}

    \titleone{An Empirical Analysis of the}
    \titletwo{Laplace and Neural Tangent Kernels}
    \titlethree{}
    \doctype{Thesis}
    \doctypeUp{Thesis}
    \degree{Master of Science}
    \field{Mathematics}
    \Author{Ronaldas Paulius Lencevi\v{c}ius}
    \Advisor{James Risk}
    \MemberA{Manuchehr Aminian}
    \MemberB{Adam King}
    \Semester{Summer}
    \Year{2022}
    
    \Abstract{
        The neural tangent kernel is a kernel function defined over the parameter distribution of an infinite width neural network.  Despite the impracticality of this limit, the neural tangent kernel has allowed for a more direct study of neural networks and a gaze through the veil of their black box.  More recently, it has been shown theoretically that the Laplace kernel and neural tangent kernel share the same reproducing kernel Hilbert space in the space of $\Sd$ alluding to their equivalence.  In this work, we analyze the practical equivalence of the two kernels.  We first do so by matching the kernels exactly and then by matching posteriors of a Gaussian process.  Moreover, we analyze the kernels in $\Rd$ and experiment with them in the task of regression.
    }
    
    \Acknowledgments{
        \vfill
        \begin{center}
            \textit{To Ada, Percy, and Sabrina,} \\
            \textit{to Raimis, Diana, Boson, and Curie,} \\
            \textit{to all my friends, family, and faculty,} \\
            \textit{...and to Dr. Risk's care, patience, and mentorship.} \\ 
            \vspace{1em}
            \begin{tikzpicture}
                \duck[
                    scale=0.5,
                    graduate=gray!20!black,
                    tassel=yellow
                ]
            \end{tikzpicture}
        \end{center}
        \vfill
    }
    
    \titlepage
    \signaturepage{\addcontentsline{toc}{chapter}{Signature Page}}
    \acknowledgmentspage{\addcontentsline{toc}{chapter}{Acknowledgements}}
    \abstractpage{\addcontentsline{toc}{chapter}{Abstract}}
    \tableofcontents{}

    \listoftables{\addcontentsline{toc}{chapter}{List of Tables}}
    \listoffigures{\addcontentsline{toc}{chapter}{List of Figures}}
    \listofalgorithms{\addcontentsline{toc}{chapter}{List of Algorithms}}
    \printnomenclature[2.5cm]
    
    \nomenclature[1]{$\Rd$}{$d$-dimensional real space}
    \nomenclature[1]{$\Sd$}{Unit $d$-sphere space}
    \nomenclature[1]{$\mathcal{X}$}{Data space}
    \nomenclature[1]{$\mathcal{H}$}{Hilbert space}
    \nomenclature[1]{$\mathcal{H}_0$}{Pre-Hilbert space}
    \nomenclature[1]{$\mathcal{H}_k$}{Reproducing kernel Hilbert space (RKHS)}
    \nomenclature[1]{$L_2$}{Square-integrable function space}
    \nomenclature[2]{$\mathcal{GP}$}{Gaussian process (GP)}
    \nomenclature[2]{$\mathcal{N}$}{Normal}
    \nomenclature[2]{$\mathcal{MVN}$}{Multivariate normal}
    \nomenclature[2]{$\mu$}{Metric}
    \nomenclature[3]{$\mathbf{x}$}{Vector}
    \nomenclature[3]{$x$}{Scalar}
    \nomenclature[3]{$X$}{Matrix}
    \nomenclature[3]{\textvisiblespace$_*$}{Test/predictions}
    \nomenclature[4]{$\ell$}{Length-scale parameter (Mat\'ern kernel)}
    \nomenclature[4]{$\nu$}{Smoothness parameter (Mat\'ern kernel)}
    \nomenclature[4]{$\beta$}{Bias parameter (neural tangent kernel)}
    \nomenclature[4]{$D = L+1$}{Depth (neural tangent kernel)}
    \nomenclature[4]{$\theta$}{General parameter set}
    \nomenclature[4]{$\epsilon$}{Noise}
    \nomenclature[4]{$\sigma^2$}{Variance}
    \nomenclature[4]{$\rho$}{Pearson correlation coefficient}
    \nomenclature[4]{$R^2$}{Coefficient of determination}
    \nomenclature[5]{$k$}{Kernel}
    \nomenclature[5]{$\mathbb{E}$}{Expected value}
    \nomenclature[5]{$\cov$/var}{Covariance/variance}
    \nomenclature[5]{$T_k$}{Integral operator}
    \nomenclature[5]{$L$}{Loss function}
    \nomenclature[6]{$\sigma(\cdot)$}{Activation function}
    \nomenclature[6]{$d_\theta$}{Absolute difference function given kernel parameters $\theta$}
    \nomenclature[6]{$\langle\cdot,\cdot\rangle$}{Inner product}
    \nomenclature[6]{$\lVert\,{\cdot}\,\rVert$}{Norm}
    \nomenclature[7]{$\phi_i$}{Eigenfunction}
    \nomenclature[7]{$\lambda_i$}{Eigenvalue}

    \newpage
    \pagenumbering{arabic} \setcounter{page}{1} \thispagestyle{empty}
    
    \AddChap
    
    \chapter{Introduction}
        
    The relation between artificial neural networks and Gaussian processes has been established since the early 1990s.  In 1989 researchers determined that a single layer neural network can, under the right conditions, approximate any continuous function as the layer width tends to infinity \citep{cybenko1989approximation,funahashi1989approximate,stinchombe1989universal,girosi1990networks}.  With this idea, \citet{neal1996bayesian} later found that a single hidden layer neural network with normally distributed weights and biases at initialization can be represented as a closed form Gaussian process when the layer width approach infinity and hence converges to a normal distribution.  It is then possible to inform neural networks via specified priors and analyze them over the neural network parameter distribution.  This was further expanded in 2018 to neural networks of many layers by \citet{lee2018deep}.
        
    This perspective opened new doors to analysis but did not explain the effectiveness of training neural networks using the widely used gradient descent approach. \citet{jacot2018neural} developed the solution to this by further generalizing infinite-width neural networks via a recursively defined kernel called the neural tangent kernel. The neural tangent kernel can be used to represent and analyze a given infinite-width neural network during training with a specific depth, activation, and variance initialization. Future works utilized this kernel and expanded on the 1989 single layer results by showing that wide neural networks trained under gradient descent work as linear models and that empirically, finite networks also share those attributes \citep{lee2019wide}.  In addition, further neural tangent kernel parameterizations were discovered for convolutional, recurrent, and graph neural network architectures \citep{arora2019cntk,alemohammad2021rntk,du2019gntk}. Furthermore, the neural tangent kernel was also shown to generalize with neural networks that allow for regularization and gradient noise during training \citep{chen2020noisentk}. 
        
    During the same time, \citet{belkin2018kernel} empirically found that overfitted kernel methods display a similar phenomenon to overparameterized deep models where, despite reaching zero training loss, the test data would show good performance.  Moreover, their work showed parallels between rectified linear unit (ReLU) activated neural networks and the Laplace kernel in the task of fitting random labels.  Since then it has been shown theoretically that the Laplace and neural tangent kernels do in fact perform similarly to their neural network counterparts since both kernels share the same reproducing kernel Hilbert space $\mathcal{H}_k$ of predictions in the $\Sd$ unit $d$-sphere \citep{chen2021rkhs,geifman2020rkhs}.  This is consequential because the Laplace kernel has a very simple, well understood formulation whereas the neural tangent kernel has an unwieldy recursive formulation.  In essence, the Laplace kernel can be used to analyze deep neural networks without the computational or theoretical difficulties of the neural tangent kernel.
        
    However, this leaves some questions unanswered such as the practical equivalence of the kernels, the ability to find matching elements from the $\calH_k$ of the kernels, whether the kernels share these similarities in the space of $\Rd$, and if the Gaussian kernel, which comes from the same family as the Laplace, provides similar results.  We attempt to provide answers by analyzing the neural tangent, Laplace, and Gaussian kernels via the framework of Gaussian process regression.
    
    Chapter \ref{ch:regression} defines the general data fitting problem, Gaussian processes for regression, and neural networks and their equivalence to Gaussian processes.  Chapter \ref{ch:rkhs} develops the theory for reproducing kernel Hilbert spaces and their relevance to the task of regression.  Chapter \ref{ch:types} introduces the kernels used in our analysis and results showcasing the conditions for which the Laplace and neural tangent kernels can be made equal.  Chapter \ref{ch:synthexp} provides synthetic experiments for matching Gaussian process regression results (i.e.\ posterior means) between the various kernels, comparing the kernels in the space of $\Rd$, and improving regression results for the neural tangent kernel.  Lastly, Chapter \ref{ch:realexp} showcases real world experiments of the kernels using the lessons learned from the previous chapters.  We summarize the contributions of this thesis as follows:
        
    \begin{itemize}
        \item We find empirical evidence for the importance of the neural tangent kernel's bias parameter in equating it to the Laplace kernel.
        \item We derive the partial derivative of the neural tangent kernel with respect to its bias parameter which can be used to optimize the bias during regression fitting.
        \item We develop a method for matching the Laplace and neural tangent kernels' Gaussian process regression posteriors, which are elements of $\mathcal{H}_k$.
        \item We implement a \texttt{Python} programming language package called \href{https://github.com/392781/scikit-ntk}{\texttt{scikit-ntk}} which is an implementation of the neural tangent kernel that can be used directly with the popular \texttt{scikit-learn} machine learning toolkit.
    \end{itemize}

    \chapter{Regression}\label{ch:regression}
    
    In this chapter we will introduce the basic data modeling problem and present two relevant modeling tools: Gaussian process regression and neural networks. We also present the relation between these two tools that allows practitioners to study the black box architecture of neural networks through the lens of Gaussian processes.
    
    \section{Data Fitting Problem}\label{sec:datafitting}
    
    A single output data fitting problem begins with a set of $n$ data points $\{(\mathbf{x}_i, y_i)\ |\ i = 1,\dots, n\}$ where $\mathbf{x}_i = [x_1, \dots, x_d]^\top\in\mathcal{X}\subseteq \Rd$ is a single input vector and $y_i\in\mathbb{R}$ is a output value usually referred to as a target or response\footnote{We will be using response to refer to $y_i$ from here on out.}.  It is important to note that $\mathcal{X}$ can be a metric space but for our purpose it is sufficient to assume that it is a subset of the $d$-dimensional real space. In general, $\mathcal{X}$ may be determined by the type of data used or a transformation that is applied to the data (e.g.\ the unit d-sphere space $\Sd := \{\mathbf{x} \in \Rd : \|\mathbf{x}\| = 1\}$).
    
    We can form an $n \times d$ size matrix $X = [\x_1, \dots, \x_n]^\top$ of $n$ observations and $d$ independent variables and a vector $\mathbf{y} = [y_1, \dots, y_n]^\top$ of dependent outputs which combine into a set $(X, \mathbf{y})$ which we call a \textit{training set}.  The training set can be seen as a snapshot of some phenomenon that relates the training set inputs to response values via some function $f$:
    \begin{equation}\label{eq:dataproblem}
        y = f(\mathbf{x}) + \epsilon
    \end{equation}
    where $\epsilon$ is some additive noise assumed to be independent of $f$ and other noise terms.  Depending on the problem, we may have $\epsilon=0$ which indicates exact observations.  Otherwise, we assume that
    \begin{equation}\label{eq:dataproblemnoise}
        \epsilon\sim \mathcal{N}(0, \sigma^2)
    \end{equation}
    indicating normally distributed noise with mean zero and fixed variance $\sigma^2\in\mathbb{R}$.

    The goal of the data fitting problem is to find a function $f$ that best fits the training set while also best generalizing to unseen data of the phenomenon we are trying to model.  This is done by minimizing an empirical loss functional between the true response values $y_i$ and estimated values $f(\x_i)$:
    \begin{equation}\label{eq:generalloss}
        f_{opt} = \argmin_f \left\{\sum_{i=1}^n L(y_i, f(\mathbf{x}_i))\right\}.
    \end{equation}
    For regression, the loss function is usually absolute error $L(y_i, f(\x_i)) = |y_i - f(\x_i)|$, squared error $L(y_i, f(\x_i)) = (y_i - f(\x_i))^2$, or some variation of either.
    
    Without any restrictions or further assumptions placed on this process it is possible to perfectly interpolate (or ``overfit'') over the training set thus losing any generalization of unseen data.  This is not a desirable outcome since the purpose of the problem is to capture additional information about the underlying phenomenon that generated the training set.  As such, it is reasonable to place further assumptions on things including but not limited to the dataset (i.i.d., transformations), function properties (continuity, time dependence), model type (nonlinear regressor, support vector machine), and model properties (parameterization type, regularization).
    These assumptions allow for the data fitting process to more productively use the training set to explain the underlying phenomenon. An example of a commonly used set of assumptions is a linear parametric model:
    \begin{equation}\label{eq:linearmodel}
        \begin{aligned}
            f(\mathbf{x}) &= \beta_0 + \beta_1 x_1 + \cdots + \beta_d x_d \\
            &= \beta_0 + \bm{\beta}^\top \mathbf{x},
        \end{aligned}
    \end{equation}
    where $\bm{\beta} = [\beta_1, \ldots, \beta_d]^\top$.     With the squared error loss function, Equation \eqref{eq:generalloss} reduces to
    \begin{equation}
        \begin{aligned}
            f_{opt} &= \argmin_f \left\{ \sum_{i=1}^n L(y_i, f(\mathbf{x}_i)) \right\} \\
            &= \argmin_{\beta_0,\dots,\beta_d} \left\{\sum_{i=1}^n \left(y_i - \beta_0 - \bm{\beta}^\top \mathbf{x}_i\right)^2 \right\}.
        \end{aligned}
    \end{equation}
    However, this is a highly restrictive assumption which is not applicable in many cases. 
    
    On the other hand, regularization is a modification that is applicable to all models. Regularization helps prevent overfitting by placing a penalty on the model's objective function (Equation \eqref{eq:generalloss}):
    \begin{equation}\label{eq:penalizedloss}
        f_{opt} = \argmin_f \left\{\sum_{i=1}^n L(y_i, f(\mathbf{x}_i)) + P_\lambda(f)\right\}
    \end{equation}
    where $P_\lambda(\cdot)\in\mathbb{R}$ is a penalization term for a given predictive function $f$ that is scaled by $\lambda\in\mathbb{R}$ and added to the summation.  Effectively, $P_\lambda(\cdot)$ can be seen as a restriction on \textit{smoothness} (i.e.\ properties of differentiability).  The idea is to change the search space of the objective function from all possible models, including sophisticated ones that interpolate through the data, to simpler models that attempt to capture the underlying phenomenon instead of attaining zero training loss.  Penalization for linear models (Equation \eqref{eq:linearmodel}) includes ridge regression where $P_\lambda(f) = \lambda \sum_{j=1}^d \beta_j^2$ and lasso regression where $P_\lambda(f) = \lambda \sum_{j=1}^d |\beta_j|$.  The regularization framework in Equation \eqref{eq:penalizedloss} is very general. For example, if instead we are searching through the space of all functions $f$ with two continuous derivatives and utilizing the residual sum of squares loss, Equation \eqref{eq:penalizedloss} becomes
    \begin{equation}\label{eq:smoothingspline}
        \begin{aligned}
            f_{opt} &= \argmin_f \left\{\sum_{i=1}^n L(y_i, f(\mathbf{x}_i)) + P_\lambda(f)\right\} \\
            &= \argmin_f \left\{\sum_{i=1}^n (y_i - f(\mathbf{x}_i))^2 + \lambda \int f''(\mathbf{x}_i)^2 dx\right\}
        \end{aligned}
    \end{equation}
    with the unique solution being a natural cubic spline \citep[pg. 110]{hastie2017generalized}.  
    
    The data fitting problem is one of compromises. While adding additional assumptions helps by reducing search time and/or limiting the search space, this still results in a complex task since we are going from one set of infinite functions to another. In addition, there may be fundamental issues with our training set since it is just a small sample of the overall phenomenon and thus sensitive to sampling methods and low signal to noise ratio.  As for the model, we are limited by ``no free lunch'' theorem's \citep{wolpert1997no, wolpert1996lack} implication that there is no best way to choose a model or even a best model for a specific task.  There are many ways to approach the data fitting problem so our choices in models, regularization, data augmentation, etc. are then informed by both the task and quality of data. In this paper we focus on Gaussian processes which are a nonparametric regression method highly related to kernel ridge regression.
    
    \section{Gaussian Processes}\label{sec:gp}
    
    At its simplest, a Gaussian process is a set of random variables indexed by time where any finite set composes a multivariate normal distribution. At the heart of a Gaussian process is a positive definite covariance function.
    \begin{definition}[Positive Definite Function]
        Let $k : \mathcal{X} \times \mathcal{X} \to \mathbb{R}$ be a symmetric function.  Given $n\in\mathbb{N}$, inputs $\mathbf{x}_1,\dots, \mathbf{x}_n\in\mathcal{X}$, and constants $c_1, \dots, c_n\in\mathbb{R}$ where $\mathbf{c} = [c_1, \dots, c_n]^\top$ we say that $k$ is positive definite and forms a positive definite matrix $K$ of all pairwise evaluations of the inputs on $k$ if 
        \begin{equation*}
            \sum_{i=1}^n\sum_{j=1}^n c_i c_j k(\mathbf{x}_i, \mathbf{x}_j) = \mathbf{c}^\top K \mathbf{c} \geq 0
        \end{equation*}
    \end{definition}
    Symmetry in the inputs is required in order for a covariance function to be real valued.  Covariance functions are also referred to as kernel functions which will be expanded on in Chapters \ref{ch:rkhs} and \ref{ch:types}.  From here we can define a Gaussian process from a functional context which gives us the ability to build regression models through them \citep{rasmussen2006gpml,kanagawa2018gaussian}.
    \begin{definition}[Gaussian Process]
        \label{def:gp}
        Let $f : \calX \to \R$ be a real valued random function.  We then say that $f(\mathbf{x})$ is a Gaussian process (GP) defined by mean function $m : \calX \to \R$ and positive definite covariance function $k : \calX \times \calX \to \R$ if for any finite set $X = \{\mathbf{x}_1, \dots, \mathbf{x}_n\} \subset \calX$ of any size $n\in\N$ we have the random vector $\mathbf{f}_X = [f(\mathbf{x}_1),\dots, f(\mathbf{x}_n)]^\top \in \mathbb{R}^n$ such that 
        \begin{equation}
            \mathbf{f}_X \sim \mathcal{MVN}(\mathbf{m}_X, K_{XX})
        \end{equation}
        where $\mathbf{m}_X = \left[m(\mathbf{x}_1), \dots, m(\mathbf{x}_n)\right]^\top\in\R^n$ is the mean vector and $K_{XX} = \left[k(\mathbf{x}_i, \mathbf{x}_j)\right]^n_{i,j=1}$ $\in \R^{n\times n}$ is the covariance matrix. We define $m$ and $k$ for inputs $\mathbf{x},\mathbf{z}\in\calX$ as follows:
        \begin{equation}
            \begin{aligned}
                m(\mathbf{x}) &= \mathbb{E}[f(\mathbf{x})] \\
                k(\mathbf{x},\mathbf{z}) &= \mathbb{E}[(f(\mathbf{x}) - m(\mathbf{x}))(f(\mathbf{z}) - m(\mathbf{z}))]
            \end{aligned}
        \end{equation}
        and denote such a process as 
        \begin{equation}
            f(\mathbf{x}) \sim \mathcal{GP}(m(\mathbf{x}), k(\mathbf{x}, \mathbf{z})).
        \end{equation}
    \end{definition}
    The implication of this definition is if a GP exists, then so does a corresponding $m$ and $k$.  However, it is not necessary to begin with a process as shown via the Kolmogorov Existence Theorem \citep[Theorem 12.1.3, pg. 443]{dudley2018real}.
    \begin{theorem}[Kolmogorov Existence Theorem for Gaussian Processes]
        Let $\mathcal{X}$ be any set such that there exists a function $m : \mathcal{X} \to \mathbb{R}$ and a positive definite function $k : \mathcal{X} \times \mathcal{X} \to \mathbb{R}$.  Then there exists a GP on $\mathcal{X}$ with mean function $m$ and covariance function $k$.
    \end{theorem}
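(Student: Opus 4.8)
The plan is to construct the Gaussian process directly from its finite-dimensional distributions via the Kolmogorov Extension Theorem, using $m$ and $k$ to specify a consistent family of multivariate normal distributions. First I would define, for every finite tuple $(\mathbf{x}_1,\dots,\mathbf{x}_n)$ of distinct points in $\mathcal{X}$, the probability measure $\mu_{\mathbf{x}_1,\dots,\mathbf{x}_n}$ on $\mathbb{R}^n$ to be the multivariate normal $\mathcal{MVN}(\mathbf{m}_X, K_{XX})$, where $\mathbf{m}_X = [m(\mathbf{x}_1),\dots,m(\mathbf{x}_n)]^\top$ and $K_{XX} = [k(\mathbf{x}_i,\mathbf{x}_j)]_{i,j=1}^n$. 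A subtlety worth flagging immediately: $K_{XX}$ is only positive \emph{semi}-definite in general, so the normal distribution may be degenerate; this is fine, since the degenerate multivariate normal is still a well-defined Borel probability measure on $\mathbb{R}^n$ (it is the pushforward of a standard Gaussian under an affine map whose linear part is any square root of $K_{XX}$, e.g.\ from the spectral decomposition), and its characteristic function is $t \mapsto \exp\!\left(i\,\mathbf{t}^\top\mathbf{m}_X - \tfrac12 \mathbf{t}^\top K_{XX}\mathbf{t}\right)$.

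Next I would verify the two Kolmogorov consistency conditions. For the \textbf{permutation condition}: if $\pi$ is a permutation of $\{1,\dots,n\}$, then $\mu_{\mathbf{x}_{\pi(1)},\dots,\mathbf{x}_{\pi(n)}}$ should be the pushforward of $\mu_{\mathbf{x}_1,\dots,\mathbf{x}_n}$ under the coordinate permutation; this holds because permuting the points permutes the entries of $\mathbf{m}_X$ and applies the corresponding simultaneous row/column permutation to $K_{XX}$ — and since $k$ is symmetric, $K_{XX}$ is a genuine symmetric matrix so the permuted Gram matrix is exactly the Gram matrix of the permuted points. For the \textbf{marginalization condition}: the projection of $\mu_{\mathbf{x}_1,\dots,\mathbf{x}_n,\mathbf{x}_{n+1}}$ onto the first $n$ coordinates should equal $\mu_{\mathbf{x}_1,\dots,\mathbf{x}_n}$; this is the standard fact that a marginal of a multivariate normal is multivariate normal with the correspondingly restricted mean vector and covariance submatrix, which is cleanest to check via characteristic functions by setting the last coordinate of $\mathbf{t}$ to zero. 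Both checks are essentially bookkeeping once the degenerate case is handled uniformly through characteristic functions.

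With consistency established, the Kolmogorov Extension Theorem (e.g.\ \citet[Theorem 12.1.2]{dudley2018real}) yields a probability space $(\Omega,\mathcal{F},\mathbb{P})$ and a collection of random variables $\{f(\mathbf{x})\}_{\mathbf{x}\in\mathcal{X}}$ — equivalently a random function $f:\mathcal{X}\to\mathbb{R}$ on the product space $\mathbb{R}^{\mathcal{X}}$ with the product $\sigma$-algebra — whose finite-dimensional distributions are exactly the $\mu_{\mathbf{x}_1,\dots,\mathbf{x}_n}$. By construction, for every finite $X = \{\mathbf{x}_1,\dots,\mathbf{x}_n\}$ the vector $\mathbf{f}_X = [f(\mathbf{x}_1),\dots,f(\mathbf{x}_n)]^\top$ is $\mathcal{MVN}(\mathbf{m}_X, K_{XX})$, so $f$ satisfies Definition~\ref{def:gp} and is a GP with mean function $m$ and covariance function $k$. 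Finally I would note that $\mathbb{E}[f(\mathbf{x})] = m(\mathbf{x})$ and $\mathbb{E}[(f(\mathbf{x})-m(\mathbf{x}))(f(\mathbf{z})-m(\mathbf{z}))] = k(\mathbf{x},\mathbf{z})$ follow from the $n=1$ and $n=2$ marginals, confirming the process has the advertised moments.

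I expect the main obstacle to be purely a matter of care rather than depth: correctly handling the \emph{degenerate} (positive semi-definite but not strictly positive definite) case so that $\mathcal{MVN}(\mathbf{m}_X, K_{XX})$ is always a bona fide measure on $\mathbb{R}^n$, and phrasing the consistency verifications through characteristic functions so that the degenerate and non-degenerate cases are treated in one stroke. Everything else — symmetry giving well-defined Gram matrices, positive semi-definiteness guaranteeing the measures exist, marginals and permutations of Gaussians being Gaussian, and the invocation of Kolmogorov — is standard and can be cited.
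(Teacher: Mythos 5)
Your proposal is correct: the paper offers no proof of its own, citing the result to \citet[Theorem 12.1.3]{dudley2018real}, and your argument is precisely the standard construction underlying that citation — specify the finite-dimensional distributions as (possibly degenerate) multivariate normals built from $m$ and $k$, verify permutation and marginalization consistency via characteristic functions, and invoke the Kolmogorov Extension Theorem to obtain the process on $\mathbb{R}^{\mathcal{X}}$. Your care with the positive semi-definite (degenerate) case is appropriate, since the paper's definition of positive definiteness only requires $\mathbf{c}^\top K \mathbf{c} \geq 0$.
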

    
    As a result, there is a one-to-one relationship between a GP and its mean and covariance functions.  Properties such as continuity, periodicity, and differentiability are tied to the mean and covariance functions which allows for an open view of the inner workings of a model built on a GP.  Furthermore, it allows for users to insert prior knowledge into the data fitting process making GPs a powerful Bayesian modeling tool.
    
    To apply a GP to the task of regression it is necessary to build a joint distribution over a training set and a previously ignored \textit{test set}, $[\x_{*_1}, \dots, \x_{*_m}]^\top =  X_*\subseteq\R^{m\times d}$ where each $\x_{*_i}\in\mathcal{X}\subseteq\Rd$.  The test set mirrors the training set in Section \ref{sec:datafitting} with the only differences being that the test set is usually disjoint from the training set and potentially contains a different number of observations compared to the training set ($m \neq n$).  The test set will be used analyze unseen data $\mathbf{f_*}\in\mathbb{R}^m$ through a posterior distribution conditional on the known $\mathbf{y}$.  First, we choose a covariance function $k$ and build covariance matrices $K$ to inform our data fitting task from Equation \ref{eq:dataproblem} following the same noise assumptions as in Equation \ref{eq:dataproblemnoise}.  We can then build the following joint distribution of the training data and testing data:
    \begin{equation}\label{eq:priordist}
        \begin{bmatrix}
            \mathbf{y} \\
            \mathbf{f_*}
        \end{bmatrix}
        \sim
        \mathcal{MVN}
        \left(
        \begin{bmatrix}
            \bm{0} \\ \bm{0}
        \end{bmatrix}
        ,
        \begin{bmatrix}
            K_{XX} + \sigma^2 I_n & K_{XX_*} \\ 
            K_{X_*X} & K_{X_*X_*}
        \end{bmatrix}
        \right)
    \end{equation}
    where
    \begin{equation}
        \begin{aligned}
            K_{XX} &= \left[k(\x_i, \x_j)\right]^n_{i,j=1} \\
            K_{X_*X_*} &= \left[k(\x_{*_i}, \x_{*_j})\right]^m_{i,j=1} \\
            K_{XX_*} &= \left[k(\x_i, \x_{*_j})\right]^{i=n, j=m}_{i,j=1} \\
            K_{X_*X} &= K_{XX_*}^\top
        \end{aligned}
    \end{equation}
    
    \begin{figure}[t]
        \centering
        \includegraphics[width=\textwidth]{./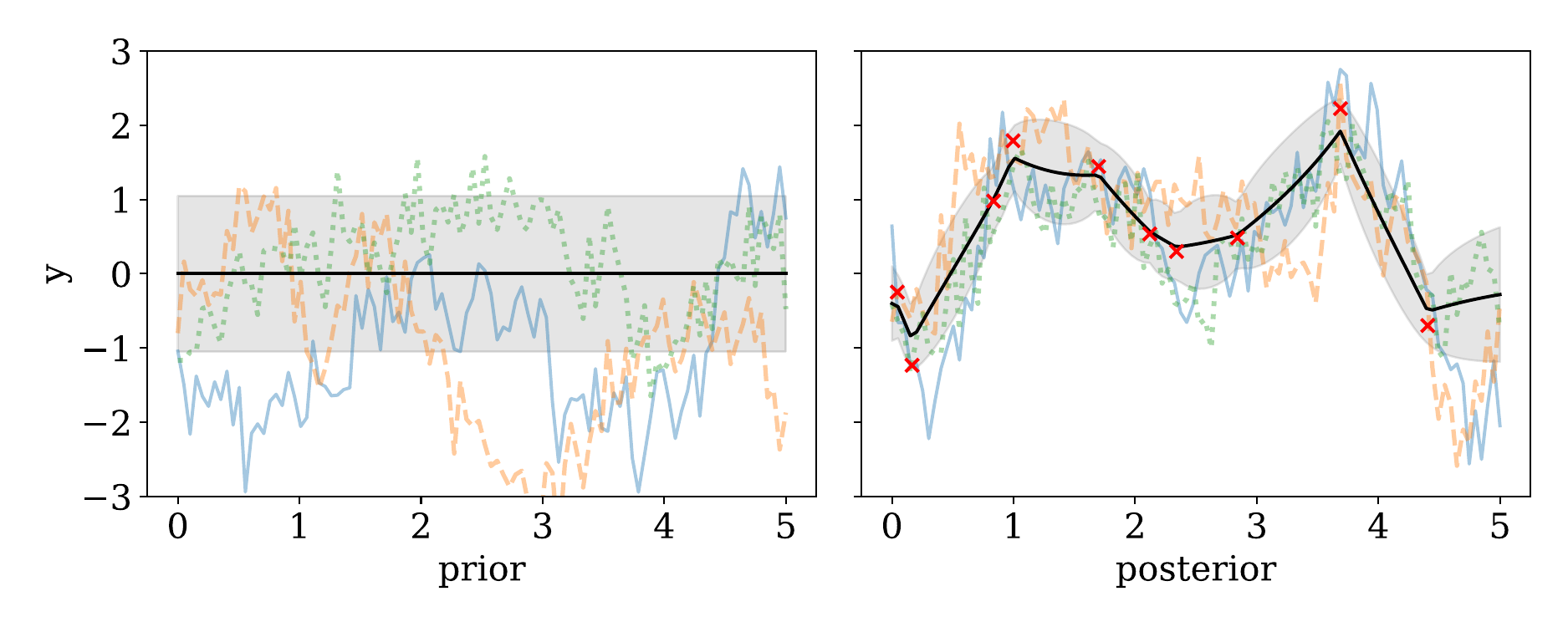}
        \caption{\setlinespacing{1.1} Sample paths of the Laplace covariance function from a GP prior and posterior. The solid line represents $m(\mathbf{x})$ and $\mathbf{\bar{f}}_*$ in the left and right panel respectively.  The shaded bands represent $\text{cov}(\mathbf{f}_*)$.  The red \texttt{x}'s in the right panel are the training points.}
        \label{fig:priorposterior}
    \end{figure}
    
    Equation \eqref{eq:priordist} forms the prior distribution for our GP regressor. To get the posterior distribution, we condition $\mathbf{f_*}$ on the known data which by properties of $\MVN$ is itself $\MVN$ \citep{guan2020introduction}:
    \begin{equation}\label{eq:posteriordist}
        \begin{aligned}
            \mathbf{f}_* | X, y, X_* &\sim \mathcal{MVN}(\mathbf{\bar{f}}_*, \text{cov}(\mathbf{f_*})) \\
            \mathbf{\bar{f}}_* &= K_{X_*X}\left[K_{XX} + \sigma^2 I_n\right]^{-1}\mathbf{y} \\
            \text{cov}(\mathbf{f}_*) &= K_{X_*X_*} - K_{X_*X}\left[K_{XX} + \sigma^2 I_n\right]^{-1} K_{XX_*}
        \end{aligned}
    \end{equation}
    
    Viewing the procedure from the practical modeling perspective, training the GP regressor can be thought of as pre-computation of $\left[K_{XX} + \sigma^2 I_n\right]^{-1}$ while prediction computes the remainder of the values involving the test set $X_*$.  In addition, with knowledge of the full distribution we can sample the prior and posterior distributions to view the bounds and functions generated by the GP as seen in Figure \ref{fig:priorposterior}.  The matrix operations involved in calculating the posterior are $O(n^3)$ and thus require approximate methods for large scale datasets.  Rasmussen and Williams calculate the inverse using Cholesky decomposition and provide a number of approximate methods for computation \citep[pg. 19, pg. 171]{rasmussen2006gpml}.
    
    \section{Neural Networks}
    \label{sec:nn}
    
    A neural network is one of the key tools used in machine learning.  They provide a way to model highly nonlinear phenomena and have been found to generalize exceptionally to a variety of complex tasks.  One drawback of neural networks is the difficulty of analyzing their black box nature.  Traditional tools for inference and uncertainty estimation cannot be easily applied to them; however, there are equivalences that aid in this task.  To start, we adopt notation from \citep{Goodfellow2016deep} and define the \textit{neuron} which is the basic building block of a neural network.
    
    \begin{definition}[Neuron]
        Let $\mathbf{x} = [x_1, \dots, x_d]^\top \in \mathcal{X}$ be an input vector with $d\in\mathbb{N}$.  Then the $k$-th \textbf{neuron} $a_k$ is defined as:
        \begin{equation}
            a_k = \sigma\left(\sum_{i=1}^d w_{k_i} x_i + b_k \right) = \sigma\left(\mathbf{w}_k^\top \x + b_k\right)
        \end{equation}
        where $\mathbf{w}_k = [w_{k_1},\dots, w_{k_d}]^\top$ is a vector of weights and $b_k\in\mathbb{R}$ is the bias value.  $\sigma$ is an \textbf{activation function} which is a fixed transformation that is used to inject nonlinear behavior to the resulting outputs.
    \end{definition}
    
    \begin{figure}[t]
        \centering
        \begin{tikzpicture}[
            init/.style={
              draw,
              circle,
              inner sep=2pt,
              font=\huge
            },
            squa/.style={
              draw,
              inner sep=2pt,
              font=\large,
              join = by -latex
            },
            start chain=2,node distance=13mm
            ]
            \node[on chain=2] 
              (x2) {$\vdots$};
            \node[on chain=2] 
              {$\ \ \vdots$};
            \node[on chain=2,init] (sigma) 
              {$\displaystyle\Sigma$};
            \node[on chain=2,squa,label=above:{\parbox{2cm}{\centering Activation \\ function}}]   
              {$\sigma$};
            \node[on chain=2,label=above:Output,join=by -latex] 
              {$a_k$};
            \begin{scope}[start chain=1]
                \node[on chain=1] at (0,1.5cm) 
                  (x1) {$x_1$};
                \node[on chain=1,join=by o-latex] 
                  (w1) {$w_1$};
            \end{scope}
            \begin{scope}[start chain=3]
                \node[on chain=3] at (0,-1.5cm) 
                  (x3) {$x_d$};
                \node[on chain=3,label=below:Weights,join=by o-latex] 
                  (w3) {$w_d$};
            \end{scope}
            \node[label=above:\parbox{2cm}{\centering Bias \\ $b_k$}] at (sigma|-w1) (b) {};
            
            \draw[-latex] (w1) -- (sigma);
            \draw[-latex] (w3) -- (sigma);
            \draw[o-latex] (b) -- (sigma);
            
            \draw[decorate,decoration={brace,mirror}] (x1.north west) -- node[left=10pt] {Inputs} (x3.south west);
            \end{tikzpicture}
        \caption{\setlinespacing{1.1} A visualization of a neuron courtesy of \citep{neuralTeX}.}
        \label{fig:neuron}
    \end{figure}
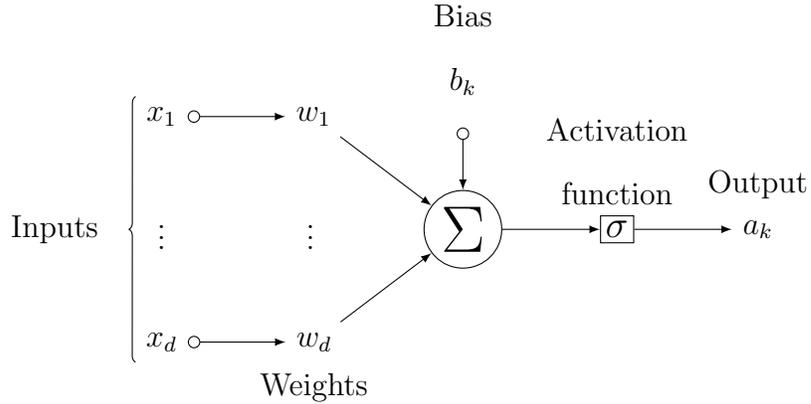
    
    \noindent Nonlinear activations allow neural networks to model complex behavior that traditional linear models cannot.  A combination of $l\in\mathbb{N}$ neurons into a vector is interpreted as $l$-width layer.  By combining many layers of varying widths, we can build the simplest architecture of a neural network called the \textit{multilayer perceptron}.
    
    \begin{definition}[Multilayer Perceptron]
        \label{def:MLP}
        Let $\mathbf{x} \in \mathcal{X}$ and $h^{(i)}$ be the $i$-th \textbf{hidden layer} where $i\in \{1,\dots,L\}$ represents a finite amount of layers with $L\in\mathbb{N}$. Then a \textbf{multilayer perceptron architecture} is defined as follows:
        
        \begin{equation}\label{eq:neuralnet}
            \begin{aligned}
                h^{(1)} &= \sigma^{(1)}\left(\bm{W}^{(1)}\mathbf{x} + \beta^{(1)}\right) \\
                h^{(2)} &= \sigma^{(2)}\left(\bm{W}^{(2)}h^{(1)} + \beta^{(2)}\right) \\
                &\hspace{1em}\vdots \\
                h^{(L)} &= \sigma^{(L)}\left(\bm{W}^{(L)}h^{(L-1)} + \beta^{(L)}\right) \\
                f(\mathbf{x};\theta) &= \bm{w}h^{(L)} + \beta^{(L + 1)}
            \end{aligned}
        \end{equation}
        \noindent where $\bm{W}^{(i)}$ are weight matrices for each layer, $\bm{w}$ is the vector of weights for the desired output dimension of the function $f$, $\beta^{(i)}$ is the bias vector, and $\sigma^{(i)}$ is the layer dependant activation function. The 1st dimension of $\bm{W}^{(i)}$ represents the number of neurons (i.e.\ the width of the hidden layer $h^{(i)}$) whilst the 2nd dimension is determined by the dimension of $h^{(i-1)}$.  $h^{(1)}$ is called the \textbf{input layer} while $f(\mathbf{x};\theta)$ is the \textbf{output layer} where $\theta$ is a set of all parameters $\bm{W}^{(1)}, \bm{W}^{(2)}, \dots, \bm{W}^{(L)}, \bm{w}, \beta^{(1)}, \beta^{(2)}, \dots, \beta^{(L+1)}$ in the network. The resulting network is considered to be $L+1$ layers deep.
    \end{definition}
    
    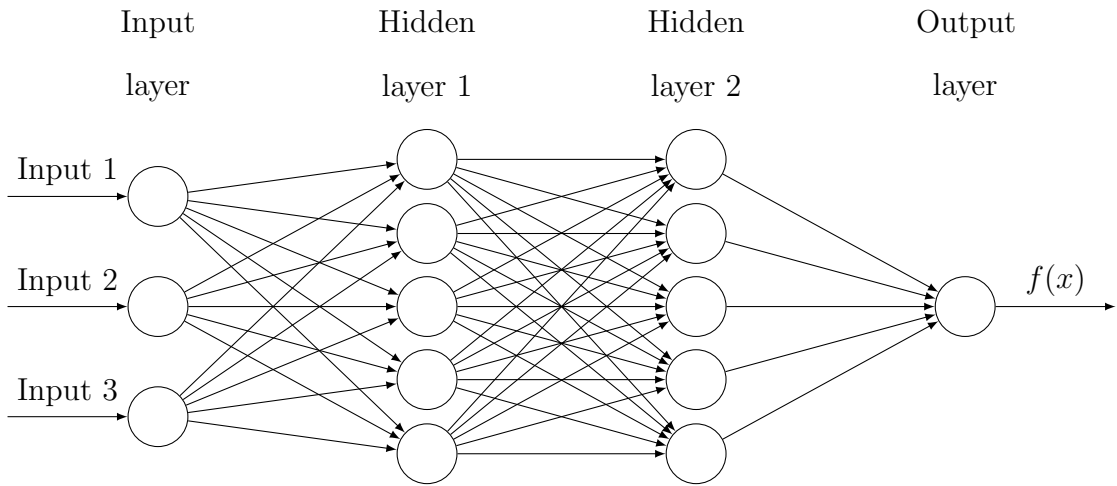
\begin{figure}[b!]
    \centering
        \begin{tikzpicture}[
            plain/.style={
              draw=none,
              fill=none,
              },
            net/.style={
              matrix of nodes,
              nodes={
                draw,
                circle,
                inner sep=8pt
                },
              nodes in empty cells,
              column sep=1cm,
              row sep=-9pt
              },
            >=latex
            ]
            \matrix[net] (mat)
            {
            |[plain]| \parbox{1.3cm}{\centering Input\\layer} & |[plain]| \parbox{1.3cm}{\centering Hidden\\layer 1} & |[plain]| \parbox{1.3cm}{\centering Hidden\\layer 2} & |[plain]| \parbox{1.3cm}{\centering Output\\layer} \\
            |[plain]| &           &            \\
                      & |[plain]| & |[plain]|  \\
            |[plain]| &           &            \\
            |[plain]| & |[plain]| & |[plain]|  \\
                      &           &           & \\
            |[plain]| & |[plain]| & |[plain]|  \\
            |[plain]| &           &            \\
                      & |[plain]| & |[plain]|  \\
            |[plain]| &           &            \\    };
            \foreach \ai [count=\mi ]in {3,6,9}
              \draw[<-] (mat-\ai-1) -- node[above] {Input \mi} +(-2cm,0);
            \foreach \ai in {3,6,9}
            {\foreach \aii in {2,4,...,10}
              \draw[->] (mat-\ai-1) -- (mat-\aii-2);
            }
            \foreach \ai in {2,4,...,10}
            {\foreach \aii in {2,4,...,10}
              \draw[->] (mat-\ai-2) -- (mat-\aii-3);
            }
            \foreach \ai in {2,4,...,10}
              \draw[->] (mat-\ai-3) -- (mat-6-4);
            \draw[->] (mat-6-4) -- node[above] {$f(x)$} +(2cm,0);
        \end{tikzpicture}
        \caption{\setlinespacing{1.1} A visualization of a 2 layer fully connected neural network courtesy of \citep{neuralTeX}.}
    \end{figure}
    
    \noindent A multilayer perceptron can be modified to include transformation layers (e.g.\ a \textit{dropout layer} which randomly drops weights from the previous layer), differing activation functions per layer, differing layer widths, etc.  
    The training procedure for such networks follows the data fitting procedure in Equation \eqref{eq:generalloss}
    \begin{equation}
         \theta_{opt} = \argmin_\theta \left\{ \sum_{i=1}^n L(y_i, f(\x_i; \theta))\right\},
    \end{equation}
    so the optimized network is $f_{opt}(\mathbf{x}) := f(\mathbf{x}; \theta_{opt})$ with optimal parameters $\theta_{opt}$ found during the minimization procedure over the unfixed network parameters $\theta$.  The minimization is done by computing the gradient of the loss function with respect to $\theta$.  This is done efficiently via \textit{back-propagation} \citep{rumelhart1986learning} which uses information provided by the outputs $f(\cdot)$ in a single gradient step and computes the gradient of the loss over the parameters starting from the output layer and ending at the input layer.  On the other hand, prediction is done by \textit{forward propagation} which uses an input to calculate an output $f_*$ on the optimized network through a forward pass using Equation \ref{eq:neuralnet}.
    
    The specific neural networks we consider in this thesis are \textit{infinite width fully connected rectified linear unit (ReLU) activated neural networks}. \textit{Infinite width} refers to the size of all hidden layers.  \textit{Fully connected} means that every individual neuron in every layer $i \in \{1,\dots, L\}$ sends its output to every neuron in layer $i+1$.  Lastly, the \textit{ReLU activation function} is defined as follows for $x\in\R$:
    \begin{equation}
        \begin{gathered}
            \sigma(x) = \max(0, x) \\
            \text{where } \sigma(\cdot) : \R \to \R.
        \end{gathered}
    \end{equation}
    
    \citet{neal1996bayesian} showed that neural networks in the infinite width limit converge to a GP while \citet{williams1996computing} developed the computation of the covariance function. Let us assume a single hidden layer network with weights $w_{k_i}$ and biases $b_k$, each independent and identically distributed (iid) normal with mean 0 and respective variances $\sigma^2_w$ and $\sigma^2_b$. The gist of this equivalence is that each neuron in the hidden layer then becomes iid normal with mean zero and finite variance.  By the Central Limit Theorem, as the width $H$ of the hidden layer tends to infinity, it also forms a normal distribution.   By following Definition \ref{def:gp}, we form a stochastic process over the indexed set of $n$ inputs and corresponding outputs which form a multivariate normal distribution with mean zero and finite covariance for all inputs:  
    \begin{equation}
        \begin{gathered}
        \begin{aligned}
            m(\mathbf{x}) &= 0 \\
            k(\mathbf{x},\mathbf{z}) &= \sigma^2_b + \sigma^2_w H \mathbb{E}[h_j(\mathbf{x})h_j(\mathbf{z})] \\
            &=\sigma^2_b + \omega_w^2 \mathbb{E}[h_j(\mathbf{x})h_j(\mathbf{z})]
        \end{aligned} \\
        \text{where } \sigma_w = \omega_w H^{-1/2} \text{ and } j\in\{1,\dots, L\}.
        \end{gathered}
    \end{equation}
    This single layer infinite width network satisfies the definition of a GP.  This can be further extended to networks of multiple layers by applying the same procedure by over all network weights and biases.  It should be noted that this formulation describes an untrained network at initialization.
    
    Although somewhat impractical in practice, infinite width networks provide a way to study the properties of neural networks. \citet{yang2019wide} expanded on this and showed that the GP and neural network equivalence applies to all modern architectures.  The result described in this section is related to the neural tangent kernel \citep{jacot2018neural} which can be used as a GP covariance function to implement an infinite width neural network which is further discussed in Chapter \ref{ch:types}.
    
    \chapter{Reproducing Kernel Hilbert Spaces}\label{ch:rkhs}
    
    Kernels exist in a variety of contexts throughout statistics, probability, and mathematics.  They can be thought of as the kernel of a probability density (mass) function used in kernel density estimation, a positive definite kernel used in a variety of kernel methods, a null space in linear algebra, an integral transform in calculus, and a reproducing kernel considered in functional analysis.
    
    Although kernels span a large variety of subjects, we are going to focus on them as they pertain to functional analysis via \textit{reproducing kernel Hilbert spaces}. In this chapter, we will define kernels, their reproducing kernel Hilbert spaces, and properties of these spaces in the context of data fitting.  
    
    \section{Reproducing Kernels}\label{sec:reproducingkernels}
    
    A \textit{kernel} is a class of functions that map two values to the real line:
    \begin{equation}\label{kernel}
        k:\mathcal{X}\times\mathcal{X}\to\mathbb{R}
    \end{equation}
    \noindent In our context, we are interested in kernels that are positive definite and symmetric because this allows them to be real valued and used in methods such as kernel ridge regression, support vector machines, and Gaussian processes.  As mentioned in Chapter \ref{ch:regression}, a positive definite kernel and covariance function are one and the same in the context of GPs. In order to further develop kernels, we need to work within a general vector space, namely a Hilbert space.
    
    \begin{definition}[Hilbert space]
        A \textbf{Hilbert space} $\mathcal{H}$ is a vector space with the following properties:
        \begin{enumerate}
            \item $\calH$ contains an inner product $\langle \cdot, \cdot \rangle_{\calH} : \calH \times \calH \to \R$ which induces the norm $\langle x,x \rangle_\calH = ||x||^2_\calH$ for $x\in\calH$,
            \item $\calH$ is complete (i.e.\ every Cauchy sequence in $\calH$ converges to some element of $\calH$).
        \end{enumerate}
    \end{definition}
    
    Although our definition of a Hilbert space constitutes a real valued inner product, it can be defined over more general spaces (see \citet[Chapter 8 pg. 211]{axler2020measure}).  In our case it is sufficient to work in the space of $\R$.  We are now ready to define a key space regarding kernels.
    
    \begin{definition}[Reproducing Kernel Hilbert Space]
        Let $\calH_k$ be a Hilbert space of real functions defined on $\calX$ and norm $\|f\|_{\calH_k}^2 = \langle f,f\rangle_{\calH_k}$ for $f\in\calH_k$.  The function $k:\calX\times\calX\to\R$ is called a reproducing kernel of $\calH_k$ if:
        \begin{enumerate}
            \item For all $\mathbf{x}\in\calX$, $k(\cdot, \mathbf{x}) \in \calH_k$,
            \item\label{rkhs2} For all $\mathbf{x}\in\calX$ and for all $f\in\calH_k$, $\langle f(\cdot), k(\cdot, \mathbf{x}) \rangle = f(\mathbf{x})$.
        \end{enumerate}
    \end{definition}
    
    Property \ref{rkhs2} in the definition above is called the \emph{reproducing property}. Reproducing kernel Hilbert spaces (RKHSs) do not require for kernel functions to be positive definite explicitly; however, positive definite kernels have a nice property regarding RKHS:
    
    \begin{theorem}[Moore-Aronszajn Theorem \citep{aronszajn1950theory}]
        \label{thm:moorearon}
        For a positive definite function $k(\cdot, \mathbf{x})$ on $\calX\times\calX$ there exists only one RKHS.
    \end{theorem}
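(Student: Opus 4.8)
The statement packages both existence and uniqueness, so the plan is to construct an RKHS with reproducing kernel $k$ and then show any other such space must coincide with it. For the existence half, I would first form the \emph{pre-Hilbert space} $\calH_0 := \operatorname{span}\{k(\cdot,\mathbf{x}) : \mathbf{x}\in\calX\}$ of all finite linear combinations $f = \sum_{i=1}^n c_i\, k(\cdot,\mathbf{x}_i)$, and equip it with the bilinear form $\langle \sum_i c_i k(\cdot,\mathbf{x}_i),\ \sum_j d_j k(\cdot,\mathbf{z}_j)\rangle := \sum_{i,j} c_i d_j\, k(\mathbf{x}_i,\mathbf{z}_j)$. The first substantive point is well-definedness: rewriting this sum as $\sum_j d_j f(\mathbf{z}_j) = \sum_i c_i g(\mathbf{x}_i)$ shows the value depends only on the functions $f,g$ and not on their representations. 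Symmetry then follows from symmetry of $k$, bilinearity is immediate, and $\langle f,f\rangle = \mathbf{c}^\top K\mathbf{c}\ge 0$ by the positive-definiteness hypothesis on $k$.

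The step requiring the most care is upgrading this positive semidefinite form to a genuine inner product, i.e.\ showing $\langle f,f\rangle = 0 \implies f\equiv 0$. Here I would first prove the Cauchy--Schwarz inequality $|\langle f,g\rangle|^2 \le \langle f,f\rangle\,\langle g,g\rangle$, which holds for any symmetric positive semidefinite bilinear form. Since the reproducing identity $f(\mathbf{x}) = \langle f, k(\cdot,\mathbf{x})\rangle$ already holds on $\calH_0$ by construction, Cauchy--Schwarz gives $|f(\mathbf{x})|^2 \le \langle f,f\rangle\, k(\mathbf{x},\mathbf{x})$ for every $\mathbf{x}$, so $\langle f,f\rangle = 0$ forces $f$ to vanish everywhere. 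Thus $(\calH_0,\langle\cdot,\cdot\rangle)$ is a true inner product space satisfying both RKHS axioms on $\calH_0$.

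Next I would complete $\calH_0$. The subtlety is that an abstract metric completion yields equivalence classes of Cauchy sequences, and one must realize them as honest functions on $\calX$. For a Cauchy sequence $(f_n)$ in $\calH_0$, the estimate $|f_n(\mathbf{x}) - f_m(\mathbf{x})| \le \|f_n - f_m\|\,\sqrt{k(\mathbf{x},\mathbf{x})}$ shows $(f_n(\mathbf{x}))$ is Cauchy in $\R$ for each fixed $\mathbf{x}$, hence converges to some $f(\mathbf{x})$; I would take $\calH_k$ to be the set of all such pointwise limits, check the limit depends only on the equivalence class, transport the inner product, and verify completeness. The reproducing property then persists by continuity of the inner product: for $f = \lim f_n$, $\langle f, k(\cdot,\mathbf{x})\rangle = \lim\langle f_n, k(\cdot,\mathbf{x})\rangle = \lim f_n(\mathbf{x}) = f(\mathbf{x})$. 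This exhibits an RKHS with reproducing kernel $k$ and, by construction, one in which $\calH_0$ is dense.

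For uniqueness, let $\mathcal{G}$ be any RKHS on $\calX$ with reproducing kernel $k$. Axiom 1 of a reproducing kernel gives $k(\cdot,\mathbf{x})\in\mathcal{G}$ for all $\mathbf{x}$, so $\calH_0\subseteq\mathcal{G}$; and the reproducing property in $\mathcal{G}$ gives $\langle k(\cdot,\mathbf{x}), k(\cdot,\mathbf{z})\rangle_\mathcal{G} = k(\mathbf{x},\mathbf{z})$, so the $\mathcal{G}$-inner product restricted to $\calH_0$ is exactly the one built above. Completeness of $\mathcal{G}$ then forces $\overline{\calH_0}\subseteq\mathcal{G}$, and there is nothing more: if $g\in\mathcal{G}$ is orthogonal to every $k(\cdot,\mathbf{x})$, then $g(\mathbf{x}) = \langle g, k(\cdot,\mathbf{x})\rangle_\mathcal{G} = 0$ for all $\mathbf{x}$, so $g = 0$, whence $\calH_0^\perp = \{0\}$ in $\mathcal{G}$ and $\mathcal{G} = \overline{\calH_0}$ with norm matching that of $\calH_k$. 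I expect the main obstacle to be the completion bookkeeping — making the passage from abstract Cauchy-sequence classes to pointwise-defined functions fully rigorous — rather than any single deep idea; everything else is the repeated, near-mechanical interplay of Cauchy--Schwarz with the reproducing identity.
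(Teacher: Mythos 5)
Your proposal is correct and follows essentially the same route the paper takes: the paper cites \citet{aronszajn1950theory} and, immediately after the theorem statement, sketches exactly your existence half --- the pre-Hilbert span $\mathcal{H}_0$ with inner product $\langle f,g\rangle_{\mathcal{H}_0}=\sum_{i,j}a_i b_j k(\mathbf{x}_i,\mathbf{x}_j)$ and its completion to $\mathcal{H}_k$ --- deferring details to \citet{berlinet2011reproducing} and \citet{kanagawa2018gaussian}. You go beyond the paper by spelling out well-definedness, definiteness via Cauchy--Schwarz with the reproducing identity, and the uniqueness/density argument, all of which are the standard and correct completions of that sketch; just be sure the ``completion bookkeeping'' you flag includes the lemma that a Cauchy sequence in $\mathcal{H}_0$ converging pointwise to zero also converges to zero in norm, which is what makes the transported inner product well defined.
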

    
    This theorem guarantees that for any symmetric and positive definite kernel, there exists a unique RKHS and vice-versa.  Using this idea we can build an RKHS from a positive definite kernel alone. To illustrate this given a symmetric and positive definite kernel on $\calX$, we begin by defining a \textit{pre-Hilbert space} $\mathcal{H}_0$.
    \begin{multline}
        \mathcal{H}_0 := \text{span}\left\{  k(\cdot, \mathbf{x}) : \mathbf{x} \in \mathcal{X}\right\} \\
        = \left\{ f(\cdot) = \sum_{i=1}^n c_i k(\cdot, \mathbf{x}_i) : n\in\mathbb{N}, c_1, \dots, c_n \in \mathbb{R}, \mathbf{x}_1, \dots, \mathbf{x}_n \in \mathcal{X} \right\}.
    \end{multline}
    with a valid inner product (see \citep[pg. 20]{berlinet2011reproducing}) defined for any $f := \sum_{i=1}^n a_i k(\cdot, \mathbf{x}_i)$ and $g := \sum_{j=1}^m b_j k(\cdot, \mathbf{x}_j)$
    \begin{equation}
        \langle f, g \rangle_{\mathcal{H}_0} = \sum_{i=1}^n\sum_{j=1}^m a_i b_j k(\mathbf{x}_i, \mathbf{x}_j)
    \end{equation}
    and a norm induced by the inner product
    \begin{equation}
        \langle f, f \rangle_{\mathcal{H}_0} = ||f||^2_{\mathcal{H}_0} = \sum_{i=1}^n\sum_{j=1}^n a_i a_j k(\mathbf{x}_i, \mathbf{x}_j) = \mathbf{a}^\top K \mathbf{a}
    \end{equation}
    where $\mathbf{a} = [a_1, \dots, a_n]^\top$ is a vector and $K = [k(\mathbf{x}_i, \mathbf{x}_j)]^n_{i,j = 1}$ is a matrix. To form the RKHS $\mathcal{H}_k$ we must bundle the rest of the possible elements within $\mathcal{H}_0$ by defining its closure with respect to $||\cdot||_{\mathcal{H}_0}$
    \begin{multline}\label{eq:fullRKHS}
        \mathcal{H}_k = \left\{ f(\cdot) = \sum_{i=1}^\infty c_i k(\cdot, \mathbf{x}_i) : n\in\mathbb{N}, c_1, c_2,\dots \in \R, \mathbf{x}_1, \mathbf{x}_2,\dots \in \calX \text{ where } \right. \\
        \left. \|f\|^2_{\mathcal{H}_k} := \lim_{n\to\infty}\left\| \sum^n_{i=1} c_i k(\cdot, \mathbf{x}_i) \right\|^2_{\mathcal{H}_0} = 
        \sum_{i=1}^\infty\sum_{j=1}^\infty c_i c_j k(\mathbf{x}_i, \mathbf{x}_j) < \infty \right\},
    \end{multline}
    thus completing our construction. For more details refer to \citet[pg. 11]{kanagawa2018gaussian} and \citet[Theorem 3 pg. 19-21]{berlinet2011reproducing}.
    
    \section{Mercer Representation}
    
    There is another way to represent RKHSs that provides a further route of analyzing kernels via spectral decomposition. This will require a deeper dive into functional analysis for which further details can be found in \citet[Chapter 10 pg. 280]{axler2020measure}, \citet[Appendix A.5 pg 497]{steinwart2008support}, and \citet{rudin1991functional}.
    
    We begin with a measurable space $\mathcal{X}$ with $\mu$ being its finite Borel measure with $\mathcal{X}$ being its support.  We then consider the space of square-integrable functions $L_2(\mathcal{X}, \mu)$ that exist on $\mathcal{X}$ with respect to metric $\mu$ and the kernel $k$ on $\mathcal{X}$.  Then we define an integral operator $T_k : L_2(\mathcal{X}, \mu) \to L_2(\mathcal{X}, \mu)$ such that for $f\in L_2(\mathcal{X}, \mu)$
    \begin{equation}
        T_k f := \int_\mathcal{X} k(\cdot, \mathbf{x}) f(\mathbf{x}) d\mu(\mathbf{x})
    \end{equation}
    which is compact, positive, and self-adjoint \citep[Theorem 4.27]{steinwart2008support}.  As a result, we can apply the Spectral Theorem \citep[Theorem A.5.13]{steinwart2008support} which shows that there exists $(\phi_i, \lambda_i)_{i\in I}$ where $(\phi_i)_{i\in I} \subset L_2(\mathcal{X}, \mu)$ is the orthonormal system of countable eigenfunctions and $(\lambda_i)_{i\in I}$ is the corresponding family of eigenvalues for indices $I\subseteq \mathbb{N}$ such that for strictly positive eigenvalues $\lambda_1 \geq \lambda_2 \geq \cdots > 0$:
    \begin{equation}
        T_k f = \sum_{i\in I} \lambda_i \langle \phi_i, f \rangle_{L_2} \phi_i.
    \end{equation}
    With this we can now develop an expansion of kernels via orthonormal functions:
    \begin{theorem}[Mercer's theorem \citep{steinwart2008support, mercer1909functions}]
        \label{thm:mercer}
        Let $\mathcal{X}$ be a compact metric space, $k : \mathcal{X} \times \mathcal{X} \to \mathbb{R}$ be a continuous kernel, and $\mu$ be a finite Borel measure with support on $\mathcal{X}$.  Then for $(\phi_i, \lambda_i)_{i\in I}$ and inputs $\mathbf{x},\mathbf{z}\in\mathcal{X}$ we have
        \begin{equation}
            k(\mathbf{x},\mathbf{z}) = \sum_{i\in I} \lambda_i \phi_i(\mathbf{x}) \phi_i(\mathbf{z}),
        \end{equation}
        where the convergence is absolute and uniform over the inputs.
    \end{theorem}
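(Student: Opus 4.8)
The plan is to promote the Spectral Theorem decomposition of the integral operator $T_k$, set up just before the statement, from an $L_2$‑identity to the claimed pointwise, absolutely and uniformly convergent one. Assume $I=\N$ (the finite‑rank case being trivial), put $M:=\sup_{\mathbf{x}\in\calX}k(\mathbf{x},\mathbf{x})$, which is finite since $\mathbf{x}\mapsto k(\mathbf{x},\mathbf{x})$ is continuous on the compact set $\calX$, and write $k_n(\mathbf{x},\mathbf{z}):=\sum_{i=1}^n\lambda_i\phi_i(\mathbf{x})\phi_i(\mathbf{z})$ and $r_n:=k-k_n$. The first step is to replace each eigenfunction $\phi_i$, a priori only an $L_2$‑class, by its continuous representative $\lambda_i^{-1}T_k\phi_i$; that $T_k$ sends $L_2(\calX,\mu)$ into $C(\calX)$ follows from uniform continuity of $k$ on $\calX\times\calX$ and finiteness of $\mu$ via Cauchy--Schwarz. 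Consequently every $k_n$, hence every $r_n$, is continuous.

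The crux is the diagonal bound $\sum_{i=1}^n\lambda_i\phi_i(\mathbf{x})^2\le k(\mathbf{x},\mathbf{x})\le M$ for all $\mathbf{x}$ and $n$, which I would obtain by showing that $r_n$ is itself a positive‑definite kernel, so that $r_n(\mathbf{x},\mathbf{x})\ge 0$. Its integral operator $T_{r_n}=\sum_{i>n}\lambda_i\,\phi_i\otimes\phi_i$ is manifestly a positive operator; the nontrivial implication ``continuous symmetric function with positive integral operator $\Rightarrow$ positive‑definite kernel'' is exactly where the hypotheses $\operatorname{supp}\mu=\calX$ and continuity of $k$ enter. I would prove it by approximating a discrete measure $\sum_j c_j\delta_{\mathbf{x}_j}$ by $L_2$‑functions built from normalized indicators of small disjoint balls $B_\varepsilon(\mathbf{x}_j)$, which have positive $\mu$‑mass because each $\mathbf{x}_j\in\operatorname{supp}\mu$, substituting them into $\langle f,T_{r_n}f\rangle_{L_2}\ge 0$, and letting $\varepsilon\to 0$ using continuity of $r_n$.

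With the diagonal bound, Cauchy--Schwarz gives $\sum_i\lambda_i|\phi_i(\mathbf{x})\phi_i(\mathbf{z})|\le\bigl(\sum_i\lambda_i\phi_i(\mathbf{x})^2\bigr)^{1/2}\bigl(\sum_i\lambda_i\phi_i(\mathbf{z})^2\bigr)^{1/2}\le M$, so $g(\mathbf{x},\mathbf{z}):=\sum_i\lambda_i\phi_i(\mathbf{x})\phi_i(\mathbf{z})$ converges absolutely and is bounded by $M$; moreover, for fixed $\mathbf{x}$ the tail is dominated by $M^{1/2}\bigl(\sum_{i>n}\lambda_i\phi_i(\mathbf{x})^2\bigr)^{1/2}\to 0$, so $g(\mathbf{x},\cdot)$ is a uniform limit on $\calX$ of continuous functions, hence continuous and also $L_2$‑convergent. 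To identify $g$ with $k$, I would note that $k(\mathbf{x},\cdot)\in L_2(\calX,\mu)$ (it is bounded since $k$ is continuous on the compact $\calX\times\calX$ and $\mu$ is finite) and that for every $u\in\ker T_k$ one has $\langle k(\mathbf{x},\cdot),u\rangle_{L_2}=(T_k u)(\mathbf{x})=0$, so $k(\mathbf{x},\cdot)$ lies in the orthogonal complement of $\ker T_k$, i.e.\ in the closed span of $\{\phi_i\}$; its $L_2$‑expansion is therefore $\sum_i\langle\phi_i,k(\mathbf{x},\cdot)\rangle_{L_2}\phi_i=\sum_i\lambda_i\phi_i(\mathbf{x})\phi_i=g(\mathbf{x},\cdot)$, whence $g(\mathbf{x},\cdot)=k(\mathbf{x},\cdot)$ $\mu$‑almost everywhere. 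Since $\operatorname{supp}\mu=\calX$ that set is dense in $\calX$, and as both $g(\mathbf{x},\cdot)$ and $k(\mathbf{x},\cdot)$ are continuous they coincide everywhere; thus $k=g$ pointwise on $\calX\times\calX$.

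It then remains to upgrade pointwise to uniform convergence. Now $h(\mathbf{x}):=\sum_i\lambda_i\phi_i(\mathbf{x})^2=g(\mathbf{x},\mathbf{x})=k(\mathbf{x},\mathbf{x})$ is continuous, the partial sums $h_n$ are continuous and increase to $h$ pointwise, and $\calX$ is compact, so Dini's theorem yields $h_n\to h$ uniformly, i.e.\ $\sup_{\mathbf{x}}\sum_{i>n}\lambda_i\phi_i(\mathbf{x})^2\to 0$. Applying Cauchy--Schwarz to the tail of $g$ in both variables gives $\sup_{\mathbf{x},\mathbf{z}}\bigl|k(\mathbf{x},\mathbf{z})-k_n(\mathbf{x},\mathbf{z})\bigr|\le\bigl(\sup_{\mathbf{x}}\sum_{i>n}\lambda_i\phi_i(\mathbf{x})^2\bigr)^{1/2}\bigl(\sup_{\mathbf{z}}\sum_{i>n}\lambda_i\phi_i(\mathbf{z})^2\bigr)^{1/2}\to 0$, which, with the absolute convergence already shown, is the assertion. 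I expect the main obstacle to be Step 2 --- converting the merely operator‑theoretic positivity of $T_{r_n}$ into the pointwise inequality $r_n(\mathbf{x},\mathbf{x})\ge 0$, which is where continuity of $k$ and the full‑support hypothesis on $\mu$ are used essentially; once that diagonal bound and the continuity of $h$ are in hand, the density argument and Dini's theorem finish things cheaply.
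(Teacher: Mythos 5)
The paper does not prove this statement at all: Mercer's theorem is quoted as a known result, with the proof deferred to the cited references (Steinwart and Christmann 2008; Mercer 1909). Your argument is essentially the standard proof given there --- continuous representatives of the eigenfunctions via $\phi_i=\lambda_i^{-1}T_k\phi_i$, positivity of the remainder kernel on the diagonal obtained from the positive remainder operator by averaging over small balls (this is exactly where continuity of $k$ and $\operatorname{supp}\mu=\mathcal{X}$ are needed), the Cauchy--Schwarz diagonal bound for absolute convergence, identification of the limit with $k$ through the a.e.\ equality of continuous functions, and Dini's theorem to upgrade to uniform convergence --- and I see no gap in it.
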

    \noindent Lastly, we can redevelop RKHSs using orthonormal eigenfunctions to represent kernels:
    \begin{theorem}[Mercer Representation of RKHSs \citep{steinwart2008support}]
        \label{thm:mercerrepresentation}
        Let $\mathcal{X}$ be a compact metric space, $k : \mathcal{X} \times \mathcal{X} \to \mathbb{R}$ be a continuous kernel, $\mu$ be a finite Borel measure with support on $\mathcal{X}$ and $(\phi_i, \lambda_i)_{i\in I}$. We then define the RKHS $\mathcal{H}_k$
        \begin{equation}\label{eq:mercerRKHS}
            \mathcal{H}_k := \left\{ f := \sum_{i\in I} a_i \sqrt{\lambda_i}\phi_i : ||f||_{\mathcal{H}_k}^2 := \sum_{i\in I} a_i^2 < \infty \right\},
        \end{equation}
        with an inner product defined for any $f := \sum_{i\in I} a_i\sqrt{\lambda_i}\phi_i$ and $g := \sum_{i\in I} b_i\sqrt{\lambda_i}\phi_i$
        \begin{equation}
            \langle f, g \rangle_{\mathcal{H}_k} = \sum_{i\in I} a_i b_i
        \end{equation}
        where $(\sqrt{\lambda_i}\phi_i)_{i\in I}$ is an orthonormal basis of $\mathcal{H}_k$ and operator $T_k^{1/2} : L_2(\mathcal{X}, \mu) \to H$ is an isometric isomorphism.  The $\calH_k$ in Equation \eqref{eq:mercerRKHS} is the same as in Equation \eqref{eq:fullRKHS}.
    \end{theorem}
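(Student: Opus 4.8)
The plan is to show directly that the space $\mathcal{H}_k$ defined in Equation \eqref{eq:mercerRKHS} is a Hilbert space of genuine functions on $\mathcal{X}$ that admits $k$ as a reproducing kernel, and then to invoke the Moore-Aronszajn Theorem \ref{thm:moorearon} to identify it with the completion-based RKHS of Equation \eqref{eq:fullRKHS}. Along the way the orthonormal basis claim and the statement about $T_k^{1/2}$ fall out of the same bookkeeping.

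First I would check that each formal series $f = \sum_{i\in I} a_i\sqrt{\lambda_i}\phi_i$ with $\sum_{i\in I} a_i^2 < \infty$ defines an honest function on $\mathcal{X}$, not merely an $L_2(\mathcal{X},\mu)$ equivalence class. By Mercer's Theorem \ref{thm:mercer}, $\sum_{i\in I}\lambda_i\phi_i(\mathbf{x})^2 = k(\mathbf{x},\mathbf{x}) < \infty$ for every $\mathbf{x}\in\mathcal{X}$, so Cauchy-Schwarz in $\ell_2(I)$ gives $\sum_{i\in I} |a_i|\sqrt{\lambda_i}\,|\phi_i(\mathbf{x})| \le \big(\sum_{i\in I} a_i^2\big)^{1/2}\big(\sum_{i\in I}\lambda_i\phi_i(\mathbf{x})^2\big)^{1/2} < \infty$; hence the series converges absolutely and pointwise and its sum $f(\mathbf{x})$ is independent of the summation order. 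The map $f\mapsto(a_i)_{i\in I}$ is then a linear bijection onto $\ell_2(I)$ which, by the stated inner product, is a linear isometry, so completeness of $\mathcal{H}_k$ follows from completeness of $\ell_2(I)$, and $(\sqrt{\lambda_i}\phi_i)_{i\in I}$ is an orthonormal system whose only element orthogonal to every member (equivalently, with zero coefficient sequence) is $0$, i.e.\ an orthonormal basis.

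Next I would verify the two defining properties of a reproducing kernel. Writing $k(\cdot,\mathbf{x}) = \sum_{i\in I}\lambda_i\phi_i(\mathbf{x})\phi_i(\cdot) = \sum_{i\in I}\big(\sqrt{\lambda_i}\phi_i(\mathbf{x})\big)\sqrt{\lambda_i}\phi_i(\cdot)$ exhibits $k(\cdot,\mathbf{x})$ as an element of $\mathcal{H}_k$ with coefficient sequence $a_i = \sqrt{\lambda_i}\phi_i(\mathbf{x})$ and $\|k(\cdot,\mathbf{x})\|_{\mathcal{H}_k}^2 = \sum_{i\in I}\lambda_i\phi_i(\mathbf{x})^2 = k(\mathbf{x},\mathbf{x}) < \infty$ by Mercer's Theorem. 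For any $f = \sum_{i\in I} a_i\sqrt{\lambda_i}\phi_i\in\mathcal{H}_k$ the inner product formula then yields $\langle f, k(\cdot,\mathbf{x})\rangle_{\mathcal{H}_k} = \sum_{i\in I} a_i\sqrt{\lambda_i}\phi_i(\mathbf{x}) = f(\mathbf{x})$, which is the reproducing property. Since $k$ is symmetric and positive definite, Theorem \ref{thm:moorearon} furnishes a \emph{unique} RKHS, so this $\mathcal{H}_k$ must coincide, as a set of functions and with matching inner product, with the construction of Equation \eqref{eq:fullRKHS}, establishing the final sentence of the statement.

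Finally, for the isometric isomorphism claim, I would define $T_k^{1/2}$ on the orthonormal eigenfunctions by $T_k^{1/2}\phi_i = \sqrt{\lambda_i}\phi_i$ and extend it by linearity and continuity; for $g = \sum_{i\in I} b_i\phi_i\in L_2(\mathcal{X},\mu)$ (read modulo the zero-eigenvalue subspace, on which $T_k^{1/2}$ vanishes) one gets $T_k^{1/2}g = \sum_{i\in I} b_i\sqrt{\lambda_i}\phi_i$ with $\|T_k^{1/2}g\|_{\mathcal{H}_k}^2 = \sum_{i\in I} b_i^2 = \|g\|_{L_2}^2$, so $T_k^{1/2}$ is norm preserving, and it is onto $\mathcal{H}_k$ because every element of $\mathcal{H}_k$ has, by definition, the form $\sum_{i\in I} a_i\sqrt{\lambda_i}\phi_i$ with $\sum_{i\in I} a_i^2 < \infty$, i.e.\ equals $T_k^{1/2}g$ for $g = \sum_{i\in I} a_i\phi_i\in L_2(\mathcal{X},\mu)$. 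I expect the main difficulty to be bookkeeping rather than conceptual: keeping the three modes of convergence apart (pointwise on $\mathcal{X}$, in $L_2(\mathcal{X},\mu)$, and in $\mathcal{H}_k$), arguing that point evaluation is well defined so that elements of $\mathcal{H}_k$ are bona fide functions, and handling the kernel of $T_k$ correctly so that the isometry is stated on the appropriate closed subspace of $L_2(\mathcal{X},\mu)$.
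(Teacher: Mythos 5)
The thesis itself gives no proof of this theorem; it is quoted with a citation to \citet{steinwart2008support} (Theorem 4.51 there), and your proposal reconstructs that standard argument along the same lines: pointwise absolute convergence of the series from Mercer's Theorem \ref{thm:mercer} plus Cauchy--Schwarz, identification of $\mathcal{H}_k$ with $\ell_2(I)$ to obtain completeness and the orthonormal basis $(\sqrt{\lambda_i}\phi_i)_{i\in I}$, the expansion of $k(\cdot,\mathbf{x})$ with coefficients $\sqrt{\lambda_i}\phi_i(\mathbf{x})$ together with the inner-product formula for the reproducing property, uniqueness from Theorem \ref{thm:moorearon} to identify this space with the one in Equation \eqref{eq:fullRKHS}, and the diagonal action of $T_k^{1/2}$ (with the null space of $T_k$ handled as you indicate) for the isometry claim.

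The one step you assert rather than argue is that $f\mapsto(a_i)_{i\in I}$ is well defined and injective, i.e.\ that an element of $\mathcal{H}_k$ admits only one admissible coefficient sequence; without this, the norm $\sum_{i\in I}a_i^2$ and the inner product $\sum_{i\in I}a_ib_i$ are not obviously well defined, so the ``linear bijection onto $\ell_2(I)$'' needs justification rather than being a consequence of the definitions. The repair is short: since the eigenvalues are bounded, the series $\sum_{i\in I}a_i\sqrt{\lambda_i}\phi_i$ converges not only pointwise but also in $L_2(\mathcal{X},\mu)$, a subsequence of its partial sums converges $\mu$-a.e., so the $L_2$ limit agrees $\mu$-a.e.\ with the pointwise limit $f$, and orthonormality of $(\phi_i)_{i\in I}$ in $L_2(\mathcal{X},\mu)$ yields $a_i\sqrt{\lambda_i}=\langle f,\phi_i\rangle_{L_2}$; hence $f\equiv 0$ forces $a_i=0$ for every $i$ because $\lambda_i>0$. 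With that observation added, your outline is a complete and correct proof matching the cited source's approach.
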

    
    As a result, Theorem \ref{thm:mercer} gives a general way of analyzing kernels via their resulting eigenfunctions and corresponding eigenvalues while Theorem \ref{thm:mercerrepresentation} connects this analysis back to RKHSs.  One way to do this is by observing and comparing the rate of eigenvalue decay between two separate kernels \citep{geifman2020rkhs}. Furthermore, both theorems indicate that an RKHS $\mathcal{H}_k$ is a subset of the $L_2(\mathcal{X}, \mu)$ space.  Because of this, the Mercer notion of kernel and RKHS relies on a choice of measure $\mu$ to work; however, as shown in Section \ref{sec:reproducingkernels}, both kernel and RKHS are independent of any measure.  As a result, regardless of the choice of measure we still end up with the same RKHS \citep{kanagawa2018gaussian}.  The only difference here is that the new measure will result in a new eigensystem.
    
    \section{Representer Theorem}
    
    Let us backtrack to the data fitting problem established in Section \ref{sec:datafitting}.  Consider the dataset $\{(\mathbf{x}_i, y_i) : i = 1, \dots, n\}$ where $\mathbf{x}_i\in\mathcal{X}$ and $y_i\in\mathbb{R}$. From such a setup, it is natural to wish to determine whether there exists a function $f$ that is generating some sort of signal in our dataset.  We can do so by regression using a variety of methods to accomplish this task; however, in general we wish to minimize an empirical risk functional \citep{scholkopf2001generalized}.
    
    \begin{theorem}[Representer Theorem]\label{thm:represent}
        Let $\{\mathbf{x}_i\}^n_{i=1}\subset\mathcal{X}$ and $\{y_i\}^n_{i=1}\subset\mathbb{R}$.  Consider functions defined as $f : \mathcal{X}\to\mathbb{R}$ defined in a RKHS $\mathcal{H}_k$ where $k$ represents a kernel.  Now consider the following empirical risk minimization problem:
        \begin{equation}
            f_{opt} = \argmin_{f\in\mathcal{H}_k} \left\{\sum_{i=1}^{n} L(y_i, f(\mathbf{x}_i)) + \lambda ||f||^2_k \right\}
        \end{equation}
        $L : \mathbb{R}^2\to\mathbb{R}$ represents the data fitting term (usually referred to as a loss function) and $\lambda ||f||^2_k$ is a regularization term with factor $\lambda \geq 0$.  The minimizer $f_{opt} \in\mathcal{H}_k$ can be represented pointwise as follows: 
        \begin{equation}\label{eq:minimizer}
            f_{opt}(\mathbf{x}) = \sum^n_{i=1}\alpha_i k(\mathbf{x}, \mathbf{x}_i) = K_{\mathbf{x} X} \bm{\alpha}, \qquad \mathbf{x} \in \mathcal{X}
        \end{equation}
        where $\bm{\alpha} = [\alpha_1, \dots, \alpha_n]^\top = (K_{XX} + n\lambda I_n)^{-1}\mathbf{y}$.
    \end{theorem}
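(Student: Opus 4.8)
The plan is to prove the statement in two stages. First, a classical \emph{representer} argument shows that any minimizer, should it exist, must lie in the finite-dimensional subspace $V := \mathrm{span}\{k(\cdot,\mathbf{x}_i) : i=1,\dots,n\} \subseteq \mathcal{H}_k$, which already establishes the pointwise form $f_{opt}(\mathbf{x}) = \sum_{i=1}^n \alpha_i k(\mathbf{x},\mathbf{x}_i) = K_{\mathbf{x}X}\bm{\alpha}$. Second, specializing the data-fitting term to the squared-error loss and substituting this form reduces the problem to a convex quadratic in $\bm{\alpha}\in\mathbb{R}^n$, whose normal equation yields the closed form $\bm{\alpha} = (K_{XX}+n\lambda I_n)^{-1}\mathbf{y}$.

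For the first stage I would fix an arbitrary $f\in\mathcal{H}_k$. Since $V$ is finite-dimensional it is a closed subspace of the Hilbert space $\mathcal{H}_k$, so the Hilbert projection theorem gives an orthogonal decomposition $f = f_\parallel + f_\perp$ with $f_\parallel\in V$ and $f_\perp\in V^\perp$. The key step is the reproducing property (Property~\ref{rkhs2}): for each $j$, $f(\mathbf{x}_j) = \langle f, k(\cdot,\mathbf{x}_j)\rangle_{\mathcal{H}_k} = \langle f_\parallel, k(\cdot,\mathbf{x}_j)\rangle_{\mathcal{H}_k} + \langle f_\perp, k(\cdot,\mathbf{x}_j)\rangle_{\mathcal{H}_k} = f_\parallel(\mathbf{x}_j)$, because $k(\cdot,\mathbf{x}_j)\in V$ makes the second term vanish. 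Hence the data-fitting sum $\sum_i L(y_i, f(\mathbf{x}_i))$ is unchanged upon replacing $f$ by $f_\parallel$, while the Pythagorean identity gives $\|f\|_{\mathcal{H}_k}^2 = \|f_\parallel\|_{\mathcal{H}_k}^2 + \|f_\perp\|_{\mathcal{H}_k}^2 \geq \|f_\parallel\|_{\mathcal{H}_k}^2$, with strict inequality unless $f_\perp = 0$. Therefore the objective never increases — and strictly decreases when $\lambda>0$ and $f_\perp\neq 0$ — upon projecting onto $V$, so any minimizer satisfies $f_{opt}\in V$, which is precisely the form claimed in Equation~\eqref{eq:minimizer}.

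For the second stage I would substitute $f = \sum_i \alpha_i k(\cdot,\mathbf{x}_i)$ into the objective, using $f(\mathbf{x}_j) = \sum_i \alpha_i k(\mathbf{x}_j,\mathbf{x}_i) = (K_{XX}\bm{\alpha})_j$ and $\|f\|_{\mathcal{H}_k}^2 = \sum_{i,j}\alpha_i\alpha_j k(\mathbf{x}_i,\mathbf{x}_j) = \bm{\alpha}^\top K_{XX}\bm{\alpha}$ from the construction of $\mathcal{H}_k$ in Section~\ref{sec:reproducingkernels}. With squared-error loss this reduces the problem to minimizing $g(\bm{\alpha}) = \|\mathbf{y} - K_{XX}\bm{\alpha}\|^2 + n\lambda\,\bm{\alpha}^\top K_{XX}\bm{\alpha}$ over $\mathbb{R}^n$ (the factor $n$ being an artifact of normalizing the empirical loss by $1/n$). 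This is convex; setting $\nabla g(\bm{\alpha}) = 0$ gives $K_{XX}\big((K_{XX}+n\lambda I_n)\bm{\alpha} - \mathbf{y}\big) = 0$, and since $K_{XX}+n\lambda I_n$ is positive definite hence invertible for $\lambda>0$, the choice $\bm{\alpha} = (K_{XX}+n\lambda I_n)^{-1}\mathbf{y}$ solves it and is the unique global minimizer. I expect the main difficulty to be bookkeeping rather than any deep idea: making the orthogonal-decomposition step airtight (closedness of $V$, careful use of the reproducing property), reconciling the loss normalization with the stated constant $n\lambda$, and treating the degenerate case $\lambda=0$ (or singular $K_{XX}$) separately, where the minimizer need not be unique although a solution of the stated form still exists.
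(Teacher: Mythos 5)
Your proposal is correct: the orthogonal-decomposition argument (project onto $V=\mathrm{span}\{k(\cdot,\mathbf{x}_i)\}$, use the reproducing property to preserve the data-fitting term, and the Pythagorean identity to show the penalty can only decrease) is exactly the classical proof of this result, which the paper does not prove itself but takes from the cited literature \citep{scholkopf2001generalized}. You also correctly handle the one genuine wrinkle in the statement as written: the closed form $\bm{\alpha}=(K_{XX}+n\lambda I_n)^{-1}\mathbf{y}$ holds only after specializing the general loss $L$ to squared error, and the factor $n$ in $n\lambda$ presumes the empirical loss is averaged rather than summed --- your second stage and your remarks on the normalization and the degenerate $\lambda=0$ / singular $K_{XX}$ case address precisely the points the paper's statement glosses over.
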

    
    As a result, the potentially infinite dimensional problem of finding $f_{opt}$ only depends on a finite sum and on the data in question.  In addition, it is guaranteed to exist and is unique. It also gives us a generic way of looking various regression tools from the more abstract perspective of RKHSs. This includes deep neural networks as shown by \citet{unser2019representer}. It is important to note that Equation \eqref{eq:minimizer} shows that the minimizer belongs to the RKHS regardless of the dataset used.
    
    Regularization is one form of the representer theorem. By applying Theorem \ref{thm:represent} and slightly adjusting the regularized regression problem presented in Equation \eqref{eq:smoothingspline}, we can view the penalty term as one determined by a RKHS endowed with a kernel $k$:
    \begin{equation}
        \begin{aligned}
            f_{opt} &= \argmin_{f\in\mathcal{H}_k} \left\{\dfrac{1}{n}\sum_{i=1}^n (y_i - f(\mathbf{x}_i))^2 + \lambda ||f||_{\mathcal{H}_k}^2\right\}
        \end{aligned}
    \end{equation}
    with a unique solution
    \begin{equation}
        K_{\mathbf{x} X} (K_{XX} + n\lambda I_n)^{-1}\mathbf{y}.
    \end{equation}
    This form of regression is called \textit{kernel ridge regression} (KRR) \citep{kanagawa2018gaussian} and only depends on functions defined by the kernel $k$ and its corresponding RKHS $\mathcal{H}_k$.  Alternatively, a GP regressor defined by the same kernel $k$ has a unique posterior mean function computed by the marginal log likelihood
    \begin{equation}
        K_{\mathbf{x} X} (K_{XX} + \sigma^2 I_n)^{-1}\mathbf{y}.
    \end{equation}
    Given this, \citet{kanagawa2018gaussian} concisely summarize the following known result:
    \begin{prop}
        If $\sigma^2 = n\lambda$ then the GP posterior mean function and the KRR solution are the same.
    \end{prop}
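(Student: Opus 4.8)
The plan is to show that the two closed-form expressions already derived in the excerpt coincide under the stated parameter identification, so the argument is essentially a substitution once the right objects are lined up. First I would recall that the GP posterior mean from Equation \eqref{eq:posteriordist}, specialized to a single test input by taking $X_* = \{\mathbf{x}\}$, is the function $\mathbf{x}\mapsto K_{\mathbf{x}X}(K_{XX} + \sigma^2 I_n)^{-1}\mathbf{y}$. Next I would invoke the Representer Theorem (Theorem \ref{thm:represent}) applied to the kernel ridge regression problem with squared-error loss and regularization factor $\lambda$: by Equation \eqref{eq:minimizer} its minimizer is $f_{opt}(\mathbf{x}) = K_{\mathbf{x}X}\bm{\alpha}$ with $\bm{\alpha} = (K_{XX} + n\lambda I_n)^{-1}\mathbf{y}$, i.e.\ the function $\mathbf{x}\mapsto K_{\mathbf{x}X}(K_{XX} + n\lambda I_n)^{-1}\mathbf{y}$.

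The second step is to observe that these two expressions differ only in the scalar added along the diagonal of the Gram matrix before inversion: $\sigma^2$ in the GP case and $n\lambda$ in the KRR case. Substituting $\sigma^2 = n\lambda$ makes the two inverted matrices literally identical, hence the two functions of $\mathbf{x}$ agree pointwise on all of $\mathcal{X}$, which is the claim. A small point worth addressing for rigor is that both matrix inverses must exist for the formulas to be meaningful: since $K_{XX}$ is positive semidefinite (it is a Gram matrix of a positive definite kernel) and $\lambda > 0$ gives $n\lambda > 0$, the matrix $K_{XX} + n\lambda I_n = K_{XX} + \sigma^2 I_n$ is strictly positive definite and therefore invertible. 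This is precisely the same non-degeneracy condition already needed for both the GP posterior and the KRR solution to be well-defined, so nothing extra is assumed.

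There is no real obstacle here: the substance of the proposition is that two predictors arising from quite different lines of reasoning — Bayesian conditioning of a multivariate normal versus minimization of an RKHS-norm–regularized empirical risk — nonetheless coincide. If I wanted to make this feel less like an accident, I could append a remark that it reflects the standard duality between the GP posterior mean and the ridge-regularized least-squares solution in $\mathcal{H}_k$, with the observation noise variance $\sigma^2$ playing the role of the (rescaled) ridge penalty $n\lambda$; but for the proposition exactly as stated, plugging $\sigma^2 = n\lambda$ into the two displayed closed forms is the entire proof.
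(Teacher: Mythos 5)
Your proposal is correct and follows essentially the same route as the paper: the paper likewise exhibits the KRR solution $K_{\mathbf{x}X}(K_{XX}+n\lambda I_n)^{-1}\mathbf{y}$ via the Representer Theorem and the GP posterior mean $K_{\mathbf{x}X}(K_{XX}+\sigma^2 I_n)^{-1}\mathbf{y}$, and the proposition is read off by identifying $\sigma^2 = n\lambda$. Your added remark on invertibility of $K_{XX}+n\lambda I_n$ is a harmless (and slightly more careful) addition, not a departure from the paper's argument.
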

    
    This result ties a GP posterior mean to a KRR solution and the resulting RKHS $\mathcal{H}_k$.  Although any valid kernel is tied to an RKHS, our definition of a GP regressor does not directly rely on that connection.  By relating GP regression to KRR we can better analyze kernels via the GP framework while having confidence in the underlying RKHS theory.  In particular interest is attempting to find kernel hyperparameters of a GP for the Laplace and neural tangent kernels so that the resulting posterior means are the same.  Through this matching of posterior means (and underlying hyperparameters) we can gain insight to the underlying RKHS of each kernel which we explore later in Chapter \ref{ch:synthexp}.
    
    \chapter{Types of Kernels and their Equivalences}\label{ch:types}
    
    Having developed the key theory behind kernels we turn to define the kernels used in this thesis.  Our empirical analysis will focus on the Laplace, Gaussian, and neural tangent kernels.  We define the kernels over their respective inputs $\mathbf{x},\mathbf{z}\in\mathcal{X}$ and set of parameters $\theta$; however, when it is clear from context we may drop the inputs so that
    \begin{equation}
        k(\theta) := k(\mathbf{x},\mathbf{z}; \theta).
    \end{equation}
    Moving forward it should be noted that by the reproducing property of a given kernel $k$, the elements of its RKHS can be represented as
    \begin{equation}
        f(\cdot) = \sum_{i=1}^\infty c_i k(\cdot, \mathbf{x}_i) \text{ where } c_i\in\R, \mathbf{x}_i \in \calX
    \end{equation}
    which shows that RKHS member functions inherit properties that are dependent on the kernel.
    
    \section{Mat\'{e}rn Class of Kernels}
    
    \begin{definition}[Mat\'{e}rn Kernel]
        Let $\mathbf{x},\mathbf{z} \in \Rd$ be inputs, $\ell > 0$ be the length-scale parameter, and $\nu > 0$ be the smoothness parameter such that
        \begin{equation}
            k_{Mat}(\mathbf{x},\mathbf{z}; \nu, \ell) = \dfrac{2^{1-\nu}}{\Gamma(\nu)}\left(\dfrac{\sqrt{2\nu}}{\ell}\|\mathbf{x}-\mathbf{z}\|)\right)^\nu K_\nu\left(\dfrac{\sqrt{2\nu}}{\ell}\|\mathbf{x}-\mathbf{z}\|)\right)
        \end{equation}
        where $||\cdot||$ is the $L_2$-norm, $\Gamma$ is the Gamma function, and $K_\nu$ is a modified Bessel function of the 2nd kind \citep[Section 9.6]{abramowitz1964handbook}.
    \end{definition}
    
    The Mat\'{e}rn class of kernels defines a set of functions dependent on their smoothness $\nu$.  By varying $\nu$ we define our two kernels of interest. As $\nu \to \infty$, the Mat\'{e}rn kernel becomes equivalent to the well known \textit{Gaussian kernel}\footnote{Also known as the \textit{radial basis function} or the \textit{squared exponential kernel}.}:
    \begin{equation}
        k_{Gaus}(\mathbf{x},\mathbf{z}; \ell) = \exp\left(-\dfrac{\|\mathbf{x} - \mathbf{z}\|^2}{2\ell^2} \right).
    \end{equation}
    By setting $\nu = \frac{1}{2}$, the Mat\'{e}rn kernel becomes equivalent to the \textit{Laplace kernel}:
    \begin{equation}
        k_{Lap}(\mathbf{x},\mathbf{z}; \ell) = \exp\left(-\dfrac{\|\mathbf{x} - \mathbf{z}\|}{\ell^2} \right).
    \end{equation}
    Both kernels are very similarly defined and as such fall under the umbrella of exponential class of kernels as well.  Despite their similar forms, the two kernels have some stark differences.  The Gaussian kernel produces an RKHS of continuous, infinitely differentiable functions for all possible length-scales \citep[Section 4.2.1]{rasmussen2006gpml}, and its eigenvalues decay exponentially \citep{minh2006mercer}.  In contrast, elements of the Laplace kernel's RKHS are continuous, nowhere differentiable for all possible length-scales \citep[Section 4.2.1]{rasmussen2006gpml}, and its eigenvalues decay polynomially \citep{geifman2020rkhs}.  A process defined by the Laplace kernel is called an \textit{Ornstein--Uhlenbeck process} \citep{uhlenbeck1930theory} which was shown to describe the velocity of a Brownian particle.
    
    \section{Neural Tangent Kernel}
    
    As mentioned in Section \ref{sec:nn}, infinite width neural networks behave as GPs and can be studied through the function space. One hindrance with this approach is that in order to develop a GP via the method outlined previously, one must determine the covariance kernel of the GP using all the parameters of that architecture.  \citet{cho2009kernel} found that instead of this approach, it is possible to use the \textit{arc-cosine kernel} to build various finite network architectures via a single kernel. Our focus is on the \textit{neural tangent kernel} defined by \citet{jacot2018neural} which describes the behavior of a neural network trained by gradient descent.
    
    \begin{definition}[Finite Neural Tangent Kernel]
        \label{def:NTK}
        Let $f(\cdot; \theta)$ be a neural network with finite number of parameters $\theta$.  Then, the \textbf{finite neural tangent kernel} for the neural network and inputs $\mathbf{x},\mathbf{z}\in\Sd$ is defined as a sum containing tensor products of partials with respect to the $p$-th parameter of $f$: 
        \begin{equation}
            k_{NTK}(\mathbf{x},\mathbf{z}; \theta) = \sum_{p=1}^P \partial_{\theta_p} f(\mathbf{x}; \theta) \otimes \partial_{\theta_p} f(\mathbf{z}; \theta),
        \end{equation}
        where $P$ is the total number of network parameters
        and $\Sd$ is the unit $d$-sphere space defined as
        \begin{equation}
            \Sd := \left\{\mathbf{x}\in\Rd : \|\mathbf{x}\| = 1\right\}.
        \end{equation}
    \end{definition}
    
    Definition \ref{def:NTK} refers to the neural tangent kernel for finite width and depth neural networks. However, this kernel can also be used with kernel methods to represent infinitely wide neural network architectures through an explicit recursively defined kernel. The neural network and GP equivalence discussed in Section \ref{sec:nn} makes this possible by making the network in question independent of the parameters $\theta$ and instead dependent on the resulting GP.  The details of this are further discussed in the appendix of \citet{jacot2018neural}. In our definitions we adopt their notation alongside notation from \citet{bietti2019inductive}.
    
    \begin{definition}[Infinite Neural Tangent Kernel]\label{def:recurNTK}
    Given a fully connected infinite width network with $L+1$ layers, $\beta \geq 0$ bias, and with $h\in \{1,\dots, L\}$ we define the deterministic \textbf{infinite neural tangent kernel} recursively for inputs $\mathbf{x},\mathbf{z}\in\Sd$ as
    \begin{equation}
        \begin{gathered}
            k_{NTK}(\mathbf{x}, \mathbf{z}; L+1, \beta) := \Theta^{(L)}(\mathbf{x},\mathbf{z}) \\
            \Theta^{(h)}(\mathbf{x},\mathbf{z}) = \Theta^{(h-1)}(\mathbf{x},\mathbf{z})\dot{\Sigma}^{(h)}(\mathbf{x},\mathbf{z}) + \Sigma^{(h)}(\mathbf{x},\mathbf{z}) + \beta^2,
        \end{gathered}
    \end{equation}
    with the base case
    \begin{equation}\label{NTKinit}
        \begin{aligned}
            \Sigma^{(0)}(\mathbf{x},\mathbf{z}) &= \mathbf{x}^\top \mathbf{z} \\
            \Theta^{(0)}(\mathbf{x},\mathbf{z}) &= \Sigma^{(0)}(\mathbf{x},\mathbf{z}) + \beta^2
        \end{aligned}
    \end{equation}
    and components
    \begin{equation}
        \begin{aligned}
            \Sigma^{(h)}(\mathbf{x},\mathbf{z}) &= \dfrac{c_\sigma}{2} \kappa_1\left(\lambda^{h-1}\right) \sqrt{\Sigma^{(h-1)}(\mathbf{x},\mathbf{x})\Sigma^{(h-1)}(\mathbf{z},\mathbf{z})} \\
            \dot{\Sigma}^{(h)}(\mathbf{x},\mathbf{z}) &= \dfrac{c_\sigma}{2}\kappa_0\left(\lambda^{h-1}\right),
        \end{aligned}
    \end{equation}
    where $c_\sigma = 2$ for ReLU activated networks.  We then define the cosine normalization \citep{ghojogh2021reproducing}:
    \begin{equation}\label{eq:cosnorm}
        \begin{gathered}
            \lambda^{(h-1)}(\mathbf{x}, \mathbf{z}) = \dfrac{\Sigma^{(h-1)}(\mathbf{x},\mathbf{z})}{\sqrt{\Sigma^{(h-1)}(\mathbf{x},\mathbf{x})\Sigma^{(h-1)}(\mathbf{z},\mathbf{z})}},
        \end{gathered}
    \end{equation}
    such that $|\lambda^{(h-1)}| \leq 1$. Lastly, we define the arc-cosine kernels of degree 0 and 1 respectively \citep{cho2009kernel}:
    \begin{equation}\label{eq:arccos}
        \begin{aligned}
            \kappa_0(u) &= \dfrac{1}{\pi}\left(\pi - \arccos(u)\right) \\
            \kappa_1(u) &= \dfrac{1}{\pi}\left(u(\pi - \arccos(u)) + \sqrt{1 - u^2}\right).
        \end{aligned}
    \end{equation}
    \end{definition}
    
    In this thesis, we refer to the infinite neural tangent kernel as the NTK for brevity.  It should be noted that Definitions \ref{def:NTK} and \ref{def:recurNTK} are also valid in the space of $\Rd$. \citet{geifman2020rkhs} further define the normalized kernel $\frac{1}{L+1}k_{NTK}(\mathbf{x}, \mathbf{z}; L+1, \beta)$ when $\beta = 0$.  We improve on this by empirically finding the more general case of normalization.  Let $\beta \geq 0$, then it can be shown that
    \begin{equation}
        \label{eq:normNTK}
        \ddot{k}_{NTK}(\mathbf{x}, \mathbf{x}; L+1, \beta) = \dfrac{1}{(L+1)(\beta^2 + 1)}k_{NTK}(\mathbf{x}, \mathbf{x}; L+1, \beta) = 1\ 
    \end{equation}
    for all $\mathbf{x} \in \mathbb{R}^d$.  We will be using this normalized form for the remainder of our work.
    
    This recursive formalization depends entirely on the depth of the network and bias $\beta$.  In practice, we want to be able to find the optimal $\beta$ parameter for the given regression problem.  As such, this requires the gradient of $\ddot{k}_{NTK}$ which we define independent of input for a given depth $L+1$ and bias $\beta$
    \begin{equation}
        \label{eq:betaNTK}
        \dfrac{\partial}{\partial\beta}\ddot{k}_{NTK}(\mathbf{x},\mathbf{z}) = \dfrac{1}{(L+1)(\beta^2 + 1)}\left(\dfrac{\partial}{\partial\beta}k_{NTK}(\mathbf{x},\mathbf{z}) - \dfrac{2\beta}{\beta^2 + 1}k_{NTK}(\mathbf{x},\mathbf{z})\right).
    \end{equation}
    The full derivation of the gradient can be found in Appendix \ref{betagrad}.  From here, we will utilize the notation $D = L+1$ to refer to the \textit{depth} or the number of layers of a given network defined by the NTK.

    \section{RKHS Inclusion}\label{sec:inclusion}
    
    Given two kernels, it is natural to compare the space of functions that they are capable of producing and seeing if there is any overlap.  This is made possible by a consequence of Theorem \ref{thm:moorearon} which allows us to determine if one RKHS is a subset of another.  In regards to this thesis, we are interested in determining equality between the RKHSs of two kernels and thus equality of the kernels themselves.
    
    \begin{theorem}[Subset Inclusion \citep{berlinet2011reproducing}, p. 30]\label{thm:subset}
        Let $k_1$ and $k_2$ be continuous positive definite kernels on $\mathcal{X}_1\times\mathcal{X}_1$ and $\mathcal{X}_2\times\mathcal{X}_2$ respectively with $\mathcal{H}_{k_1}$ and $\mathcal{H}_{k_2}$ denoting their respective RKHS.  Then, $\mathcal{H}_{k_1} \subset \mathcal{H}_{k_2}$ if and only if there exists a constant $B$ such that $B^2k_2 - k_1$ is a positive definite kernel.
    \end{theorem}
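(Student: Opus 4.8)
The plan is to prove the two implications separately, reading the statement with both kernels defined on a common space $\mathcal{X}$ (so that $B^2 k_2 - k_1$ makes sense), and writing $k_3 := B^2 k_2 - k_1$.

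For the ``if'' direction, I would assume $k_3$ is positive definite, so by the Moore--Aronszajn theorem (Theorem~\ref{thm:moorearon}) it has a unique RKHS $\mathcal{H}_{k_3}$. Since $k_1 + k_3 = B^2 k_2$, I would invoke the standard fact that the RKHS of a sum of two positive definite kernels is the space $\{f_1 + f_3 : f_1 \in \mathcal{H}_{k_1},\, f_3 \in \mathcal{H}_{k_3}\}$, which immediately gives $\mathcal{H}_{k_1} \subseteq \mathcal{H}_{B^2 k_2}$; then, since rescaling a kernel by a positive constant leaves its RKHS unchanged as a set (only the norm is rescaled), $\mathcal{H}_{B^2 k_2} = \mathcal{H}_{k_2}$, finishing this direction. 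The degenerate case $B = 0$ forces $k_1 \equiv 0$ and is trivial.

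For the ``only if'' direction, suppose $\mathcal{H}_{k_1} \subseteq \mathcal{H}_{k_2}$. First I would show the inclusion operator $J : \mathcal{H}_{k_1} \to \mathcal{H}_{k_2}$ is bounded by the closed graph theorem: norm convergence in any RKHS implies pointwise convergence (by the reproducing property together with Cauchy--Schwarz), so if $f_n \to f$ in $\mathcal{H}_{k_1}$ and $Jf_n \to g$ in $\mathcal{H}_{k_2}$, then $f = g$ pointwise, hence the graph is closed. Put $B := \|J\|$. Comparing $\langle f, J^* k_2(\cdot,x)\rangle_{\mathcal{H}_{k_1}} = (Jf)(x) = f(x) = \langle f, k_1(\cdot,x)\rangle_{\mathcal{H}_{k_1}}$ for all $f$ yields the identity $J^* k_2(\cdot, x) = k_1(\cdot, x)$, and therefore $k_1(x,y) = \langle J^* k_2(\cdot,x), J^* k_2(\cdot,y)\rangle_{\mathcal{H}_{k_1}} = \langle JJ^* k_2(\cdot,x), k_2(\cdot,y)\rangle_{\mathcal{H}_{k_2}}$. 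Then for any finite set of points $x_1,\dots,x_n$ and scalars $c_1,\dots,c_n$, setting $g := \sum_i c_i k_2(\cdot,x_i)$ gives
\[
  \sum_{i,j} c_i c_j \bigl(B^2 k_2(x_i,x_j) - k_1(x_i,x_j)\bigr) = \bigl\langle (B^2 I - JJ^*)\,g,\ g \bigr\rangle_{\mathcal{H}_{k_2}} \ge 0,
\]
since $JJ^*$ is positive, self-adjoint, and has norm $\|J\|^2 = B^2$, so $B^2 I - JJ^* \succeq 0$. As $B^2 k_2 - k_1$ is plainly symmetric, it is positive definite.

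The routine part is the ``if'' direction, which only repackages the sum-of-kernels and positive-scalar-multiple facts for RKHSs. The crux, and the step I expect to be the main obstacle, is the ``only if'' direction: establishing continuity of the inclusion map via the closed graph theorem (which itself hinges on the reproducing property forcing pointwise convergence), and then recognizing that testing the operator inequality $B^2 I - JJ^* \succeq 0$ against the vectors $g = \sum_i c_i k_2(\cdot,x_i)$ is precisely what produces positive definiteness of $B^2 k_2 - k_1$.
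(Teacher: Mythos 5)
Your proposal is correct. There is nothing internal to the paper to compare it against: the statement is quoted as a cited result from Berlinet and Thomas-Agnan (p.~30) and no proof is given here, so the relevant benchmark is the classical argument in that literature, which is exactly what you reproduce. Both halves check out: the ``if'' direction follows from the sum-of-kernels theorem applied to $B^2k_2 = k_1 + (B^2k_2 - k_1)$ together with the fact that rescaling a kernel by a positive constant leaves its RKHS unchanged as a set (and your handling of the degenerate case $B=0$, forcing $k_1\equiv 0$ via Cauchy--Schwarz, is right); the ``only if'' direction is the standard operator-theoretic route: the closed graph theorem gives boundedness of the inclusion $J$ because norm convergence in an RKHS implies pointwise convergence, the reproducing property yields $J^{*}k_2(\cdot,\mathbf{x}) = k_1(\cdot,\mathbf{x})$, and testing $B^2 I - JJ^{*}$ (positive since $\|JJ^{*}\| = \|J\|^2 = B^2$) against $g=\sum_i c_i k_2(\cdot,\mathbf{x}_i)$ produces precisely the positive definiteness of $B^2k_2-k_1$. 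You were also right to note that the statement as printed places $k_1$ and $k_2$ on possibly different domains $\mathcal{X}_1\times\mathcal{X}_1$ and $\mathcal{X}_2\times\mathcal{X}_2$, whereas the difference $B^2k_2-k_1$ only makes sense on a common domain (or after restricting $k_2$ to $\mathcal{X}_1$); reading it on a common space, as you and the cited source do, is the intended interpretation, and indeed it is the one the thesis uses when comparing the Laplace kernel and the NTK on $\Sd$.
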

    
    While this is a powerful theorem, in practice it presents a problem when trying to take into consideration a kernel's set of parameters in relation to an RKHS.  In fact, a parameter can have a great effect on the functions that belong to the RKHS.  This is illustrated in \citet[Lemma 3.1.2, pg. 34]{walder2008efficient} where we have two Gaussian kernels defined by length-scale parameters $\ell_1, \ell_2\in\mathbb{R}$. We let $\kGaus(\cdot, \cdot; \ell)$ be the general Gaussian reproducing kernel defined by some parameter $\ell\in\mathbb{R}$ that defines an RKHS $\mathcal{H}_{k}$.  If $\ell_1 > \frac{1}{2}\ell$ and $\ell_2 > \frac{1}{2}\ell$, then by utilizing the inner product of $\mathcal{H}_{k}$ we can create a new reproducing kernel defined in that space for some inputs $\mathbf{x},\mathbf{z}\in\Rd$
    \begin{equation}
        \langle \kGaus(\cdot, \mathbf{x}; \ell_1), \kGaus(\cdot, \mathbf{z}; \ell_2) \rangle_{\mathcal{H}_k} = \kGaus(\mathbf{x},\mathbf{z}; \ell_1 + \ell_2 - \ell)
    \end{equation}
    which shows that there exists a new kernel function with length-scale $\ell_1 + \ell_2 - \ell$ that is a member of $\calH_k$.  However, if one of the inequalities is not satisfied, then the kernel corresponding to that length-scale is \textit{not} in the RKHS.
    
    Another way to look at this is that by scaling the parameter of the Gaussian kernel, you also scale the $\Rd$ input space \citep[Proposition 4.37, pg. 132]{steinwart2008support}.  Without the rescaling the input space, there is no guarantee that the scaled Gaussian kernel will maintain the same RKHS.
    
    This challenges the notion that a kernel can produces an all-encompassing RKHS independent of its set of parameters.  Hence, a kernel and its parameters must be observed together in the practical analysis of RKHSs.  There are two ways to think about this in terms of equality between two RKHSs: there are specific parameters for which the RKHSs produce the same set of functions \textit{or} there is a family of many RKHSs over all possible parameters for which all possible sets of functions are the same.  In the upcoming sections and chapters we analyze the practical equivalence of the Laplace kernel and the NTK by viewing their equivalence through parameter matching.
    
    \section{Equivalence of the Laplace and Neural Tangent Kernels}
    \label{sec:equivalence}
    
    It is clear that neural networks and kernel methods have some latent overlap. \citet{belkin2018kernel} empirically found similarities between the Laplace kernel and ReLU activated neural networks when used on the task of fitting random labels.  As such, it motivates the question: \textit{Do the Laplace and neural tangent kernels have the same RKHS and if so to what extent?}  This question is answered in theory by \citet{geifman2020rkhs} and \citet{chen2021rkhs} who showed subset equality between the Laplace RKHS $\mathcal{H}_{Lap}$ and the NTK RKHS $\mathcal{H}_{NTK}$ in the space of $\Sd$.  The forward direction \citep{geifman2020rkhs} was shown by eigenvalue analysis by way of Mercer's Theorem \ref{thm:mercer}.  The backward direction \citep{chen2021rkhs} was shown by utilizing the Subset Inclusion Theorem \ref{thm:subset} and singularity analysis. 
    
    This result begs a further question: \textit{What does the practical equivalence of these kernels look like?}  Since they are dual representations of each other, this poses a challenge because the NTK relies on depth $D$ that is in the natural numbers which is in great contrast to the Laplace parameter $\ell$ which is in the positive reals.  Due to the vast differences in parameterization, we hypothesize that the bias $\beta$ plays a role in bridging the gap between the depths.
    
    We begin by considering the matching of the neural tangent kernel $\ddot{k}_{NTK}$ for set depth $D\in\mathbb{N}$ and $\beta\in\mathbb{R}^+$ with the Laplace kernel $k_{Lap}$ for some $\ell\in\mathbb{R}^+ - \{0\}$.  It would then suffice that finding a matching kernel for some inputs $\mathbf{x},\mathbf{z}\in\Sd$ would be shown as follows:
    \begin{align}\label{eq:matching}
        k_{Lap}(\mathbf{x},\mathbf{z}; \ell) &= \ddot{k}_{NTK}(\mathbf{x},\mathbf{z}; D, \beta) \nonumber \\
        \exp\left(-\frac{\|\mathbf{x}-\mathbf{z}\|}{\ell}\right) &= \ddot{k}_{NTK}(\mathbf{x},\mathbf{z}; D, \beta) \nonumber \\
        \frac{\|\mathbf{x}-\mathbf{z}\|}{\ell} &= -\log\left(\ddot{k}_{NTK}(\mathbf{x},\mathbf{z}; D, \beta)\right) \nonumber \\
        \ell &= \dfrac{\|\mathbf{x}-\mathbf{z}\|}{-\log\left(\ddot{k}_{NTK}(\mathbf{x},\mathbf{z}; D, \beta)\right)}.
    \end{align}
    Further, we define $d_\theta$ as a measure of differences between the kernels given a set of parameters $\theta$:  
    \begin{equation}
        d_{D,\beta,\ell}(\mathbf{x},\mathbf{z}) = \left|\ddot{k}_{NTK}(\mathbf{x},\mathbf{z}; D, \beta) - k_{Lap}(\mathbf{x},\mathbf{z}; \ell)\right|.
    \end{equation}
    For the kernels to be the same, it is necessary that Equation \eqref{eq:matching} provides a length-scale where the kernels are indeed the same.  This is consequential because if this can be done consistently, then the kernels are interchangeable regardless of any optimization procedures that are done while utilizing kernel methods.  This brings us to the motivating question: \textit{Can we find a length-scale for which both kernels are identical?}
    
    \begin{table}[t]
        \centering
        \begin{tabular}{|c||c|c|}
            \cline{2-3}
            \multicolumn{1}{c|}{\multirow{2}{*}{Inputs}} & \multicolumn{2}{c|}{$d_{D=3, \beta, \ell}(\mathbf{x}_i, \mathbf{z}_i)$} \\
            \cline{2-3}
            \multicolumn{1}{c|}{} & \small $\beta=0,\ \ell\approx1.815$  & \small $\beta\approx2.122,\ \ell\approx2.036$ \\
            \hline
            \small $\mathbf{x}_1 = [0.8027\ 0.2299\ 0.5503]$ & \multirow{2}{*}{$\approx 0.0$} & \multirow{2}{*}{0.001296} \\
            \small $\mathbf{z}_1 = [0.7982\ 0.3818\ 0.4658]$ & & \\
            \hline
            \small $\mathbf{x}_2 = [0.0389\ 0.9663\ 0.2545]$ & \multirow{2}{*}{0.0980} & \multirow{2}{*}{0.0000187} \\
            \small $\mathbf{z}_2 = [0.6941\ 0.5958\ 0.4040]$ & & \\
            \hline
        \end{tabular}
        \caption{\setlinespacing{1.1} An illustration of the discrepancy between kernel differences $d_\theta$ while trying to match $\ell$ to $\ddot{k}_{NTK}$ of depth $D=3$ using Equation \eqref{eq:matching}.  \textit{Left column:} Random inputs in $\Sd$.  \textit{Right table:} Table of differences for specific parameters.  Column 1 fixes $\beta=0$ during matching. Column 2 optimizes $\beta$ and $\ell$ using Algorithm \ref{alg:length-scalebiasmatching}.}
        \label{tab:matching}
    \end{table}
    
    Table \ref{tab:matching} showcases this conundrum.  Using Equation \eqref{eq:matching} we find that for fixed $D$ and $\beta$ we can find $\ell$ such that $d_{\theta}$ of the two kernels is near zero for a \textit{single} input but not for any other input as illustrated in the 1st $d_{\theta}$ column in Table \ref{tab:matching}.  Here $\mathbf{x}_1, \mathbf{z}_1$ produce a difference near zero but $\mathbf{x}_2, \mathbf{z}_2$ are nearly 0.1 apart which indicates that this type of matching only works on an input by input basis.
    
    \begin{algorithm}[p]
        \DontPrintSemicolon
        \SetKwInOut{Input}{Input}
        \SetKwInOut{Return}{Return}
        
        \Input{$dim, n, D \in \mathbb{N}$ and $b, seed\in\R$}
        \BlankLine
        Initialize empty list for means $M$ and variances $V$\;
        Initialize list of $\beta$ values $B$ in range $[0, b]$\;
        \ForEach{$\beta$ in list $B$}{
            Set $seed$\;
            Initialize $\ddot{k}_{NTK}(D, \beta)$\;
            Initialize empty list for length-scales $L$\;
            \For{$i \leftarrow 0$ \KwTo $n$}{
                $\mathbf{x}_i \leftarrow $ vector size $dim$ with random $x$ entries normalized to $\Sd$\;
                $\mathbf{z}_i \leftarrow $ vector size $dim$ with random $z$ entries normalized to $\Sd$\;
                Append length-scale $\ell_i = \frac{||\mathbf{x}_i-\mathbf{z}_i||}{-\log(\ddot{k}_{NTK}(\mathbf{x}_i,\mathbf{z}_i; D, \beta))}$ to $L$\;
            }
            Append $\bar{\ell} = \mathbb{E}[L]$ to $M$\;
            Append $\text{var}[L]$ to $V$\;
        }
        \BlankLine
        \Return{$\bar{\ell}$ in $M$ and $\beta$ in $B$ corresponding to $\min V$}
        \caption{\setlinespacing{1.1} Length-scale and bias matching procedure.}
        \label{alg:length-scalebiasmatching}
    \end{algorithm}
    
    To improve on this, we attempt to vary $\beta$ and try to find one for a given $D$ such that the $\ell$ found using Equation \eqref{eq:matching} matches \textit{for all $\mathbf{x},\mathbf{z} \in \Sd$}. This procedure is described in Algorithm \ref{alg:length-scalebiasmatching}. The idea behind this procedure is that we cannot directly find an $\ell$ and $\beta$ that produce $d_\theta = 0$ using all possible $\mathbf{x},\mathbf{z}$ so instead we can approximate it by using $n\in\mathbb{N}$ points $(\mathbf{x}_i, \mathbf{z}_i)$ where each $\mathbf{x}_i, \mathbf{z}_i\in\Sd$.  We set $D$ of the NTK and calculate $\ell$ for some $\beta$ over every point using Equation \eqref{eq:matching}.  We then take the mean and variance of the resulting set of length-scales $L = \{\ell_1,\dots,\ell_n\}$ for that specific $\beta$ and record it.  We repeat this over $m\in\mathbb{N}$ number of $\beta$'s bounded between 0 and some upper bound (dependent on $D$).  The minimum variance in this experiment indicates that the corresponding mean length-scale $\bar{\ell}_j$ and $\beta_j$ where $j\in \left\{1,\dots,m\right\}$ are the optimal kernel parameters that produce a small or approximately zero $d_\theta$.  This can be summarized as follows:
    \begin{equation}
        \ell, \beta = \argmin_{\bar{\ell}_j, \beta_j} \left\{ \text{var}\left[L\right]\quad \forall\ \beta_1,\dots,\beta_m \right\}.
    \end{equation}
    As $n\to\infty$ the entire space gets utilized so the resulting $\ell$ and $\beta$ should be optimal.
    
    \begin{figure}[t]
        \centering
        \makebox[\textwidth]{\includegraphics[width=\textwidth]{./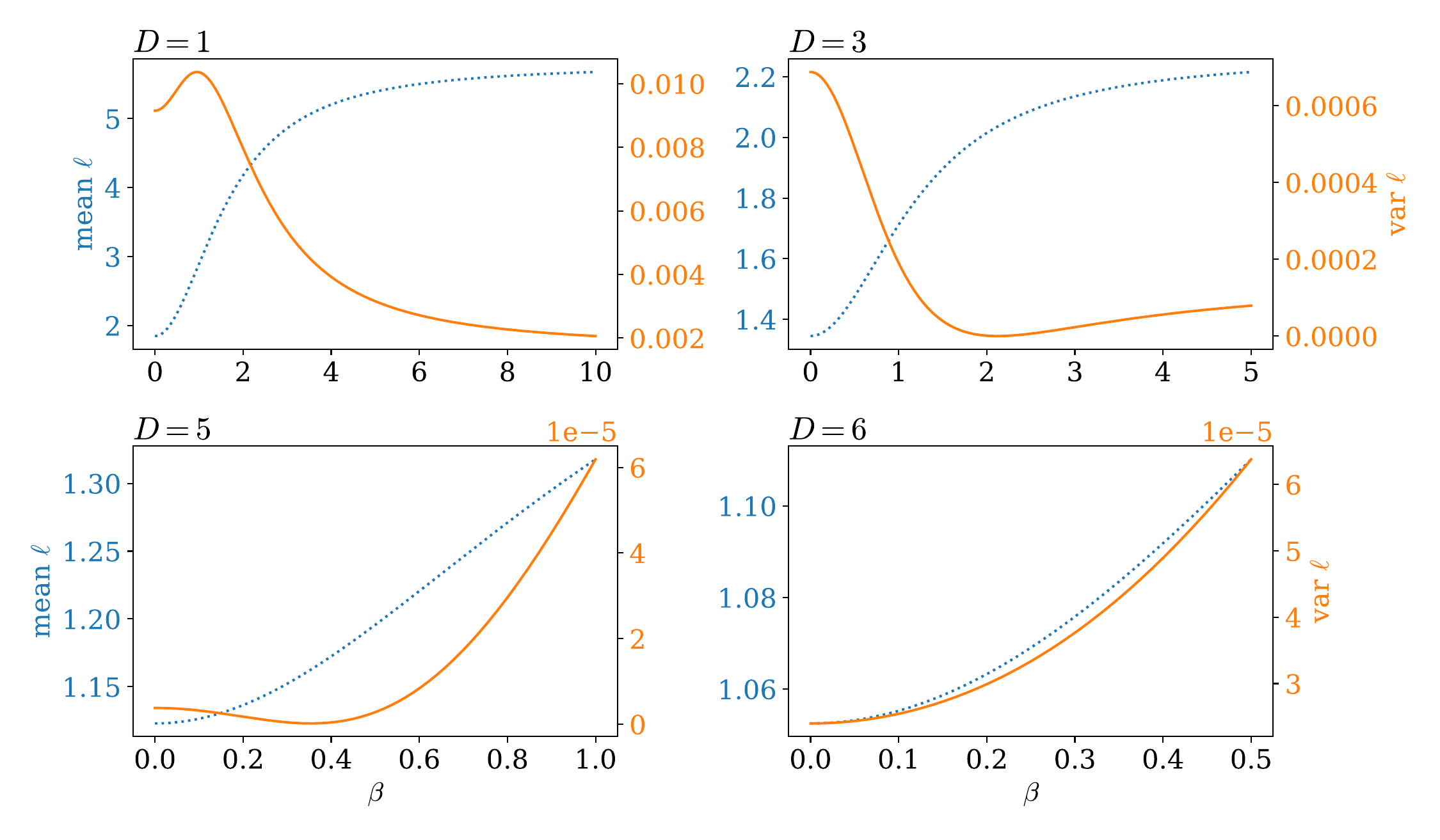}}
        \caption{\setlinespacing{1.1} Mean and variance plots of $\ell$ given specific $\beta$ calculated using $n=1000$ sample of input pairs for various depths.  The solid orange line represents the variance while the dotted blue line represents the mean.}
        \label{fig:varopt}
    \end{figure}  
    
    In our experiments, for depths 1 and 2, $\text{var}[L]\to 0$ as $\beta\to\infty$, depths 3, 4, and 5 attain non-trivial minimums at various $\beta$, and for depth $> 6$ the global minima is near zero. Figure \ref{fig:varopt} illustrates the global minima for depths 1, 3, 5, and 6. The 2nd $d_\theta$ column in Table \ref{tab:matching} illustrates the effect using the optimal $\ell$ and $\beta$ values.  We can see that both points have small $d_\theta$.
    
    \begin{figure}[t]
        \centering
         \makebox[\textwidth]{\includegraphics[width=0.65\textwidth]{./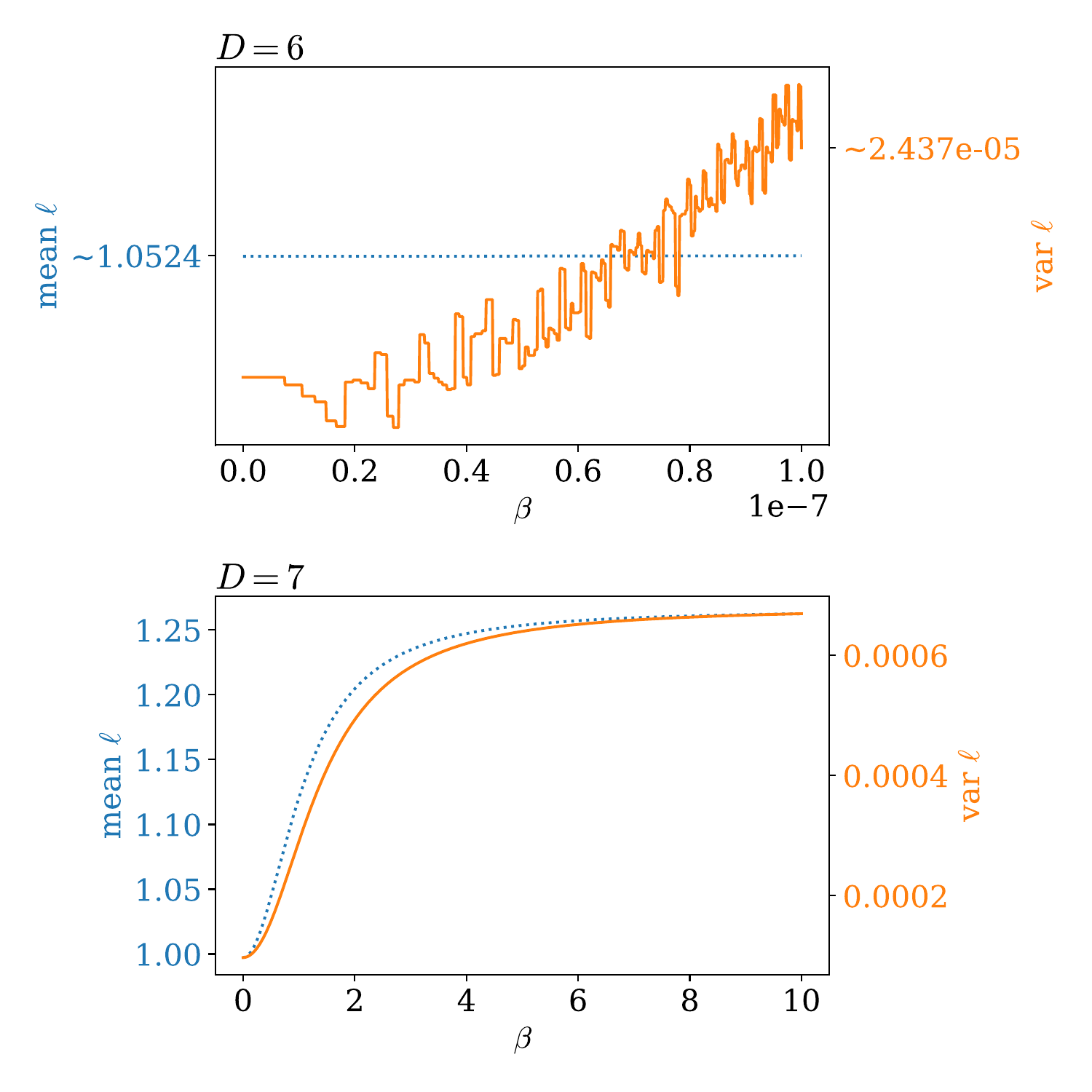}}
        \caption{\setlinespacing{1.1} Solid orange line represents the variance and the dotted blue line represents the mean. \textit{Top:} A zoom in of depth $D=6$ for $\beta\in[0, 10^{-7}]$.  Due to the zoom, the mean values are all concentrated around $\approx 1.0524$ and all variance values are near $\approx 2.437\cdot 10^{-5}$.  The difference between the minimum and maximum variance shown is approximately $10^{-18}$. \textit{Bottom:} A showcase of a typical plot past depth 6.}
        \label{fig:varopt2}
    \end{figure}
    
    Depth 6 is an interesting case because as seen in Figure \ref{fig:varopt2}, as we look closer to zero we are still unable to attain a global minimum for the variance despite it appearing that $\beta = 0$.  The rough looking nature of the plot is due to $\beta$ being squared in the NTK formulation which results in values less than $10^{-14}$.  The floating point precision of \texttt{numpy} for the machine used to compute this is $10^{-15}$.  Beyond depth 6, $\beta$ needs to be orders of magnitude smaller and thus cannot be accurately computed. 
    
    In summary, we gained significant insight to the empirical matching of these kernels.  In order to equate the kernels, that is, over all possible inputs, we are indeed dependent on both the Laplace kernel length-scale $\ell$ \textit{and} the NTK bias $\beta$.  In addition, through Figures \ref{fig:varopt} and \ref{fig:varopt2} we have evidence to support the idea that as the NTK depth increases, the optimal bias and length-scale both decrease.  From the context of machine learning methods, increasing the depth of a neural network also increases the generalization properties and the susceptibility for overfitting.  This is reflected in kernel methods, where a relatively low length-scale parameter produces functions that capture more fine grained detail by closer interpolating over the given dataset (e.g.\ overfitting).
    
    \chapter{Synthetic Experiments}\label{ch:synthexp}
    
    In this chapter we analyze the similarities and differences between the Laplace kernel and the NTK over a number of synthetically generated datasets using GP regression.  Specifically, we are interested in matching the posterior means generated by the GP regressor under kernel assumptions, determining the effectiveness of matching in $\Rd$ and $\Sd$, and analyzing the influence of data transformations on the quality of posterior mean predictions for the NTK.  Section \ref{sec:setup} describes the experimental setup used in this chapter,  Section \ref{sec:illustrative} focuses on posterior mean matching when $d=2$, Section \ref{sec:2D} showcases the the differences in posterior mean matching in $\R^2$ and $\Ss^1$ on more complex surfaces, and Section \ref{sec:HD} deals with the quality of posterior mean predictions using high dimensional input datasets.
    
    \section{Setup}\label{sec:setup}
    
    In our experiments, we utilize \texttt{scikit-learn} \citep{scikit-learn} which is a machine learning library for the \texttt{Python} programming language.  One contribution of this thesis is an implementation of the NTK\footnote{\url{https://github.com/392781/scikit-ntk}} that is directly compatible with \texttt{scikit-learn}'s GP module.  Using this implementation we are able to compute NTK values, optimize the NTK's bias, and train GP models that represent infinite width neural networks.  We also use their GP module for the Mat\'ern kernel which yields the Laplace $(\nu = \frac{1}{2})$ and Gaussian $(\nu \to \infty)$ kernels in our experiments.
    
    We generate both noisy and noiseless synthetic data for our experiments.  The added data noise $\epsilon$ is normally distributed with mean zero and variance $\sigma^2$ chosen depending on the dataset,
    \begin{equation}
        \epsilon \sim \mathcal{N}(0, \sigma^2).
    \end{equation}
    The number of samples we generate depends on the data.  In addition, the data is split 50-50 into a training set $(X, \mathbf{y})$ and a testing set $(X_*, \mathbf{y}_*)$ for each experiment.  Lastly, we normalize and/or rescale our datasets depending on experiment.  \textit{Normalizing} in this context means that we map $\Rd$ inputs (each observation or row vector in $X$) to the $d$-sphere space $\Sd$ by way of the $L_2$-norm:
    \begin{equation}
        \mathbf{x} \mapsto \dfrac{\mathbf{x}}{||\mathbf{x}||_{L_2}}.
        \label{eq:normalization}
    \end{equation}
    On the other hand, \textit{rescaling} refers to subtracting the sample mean $m$ and dividing sample variance $s^2$ for a variable (column vector in $X$) $\mathbf{x}$: 
    \begin{equation}
        \dfrac{\mathbf{x} - m}{s^2}.
        \label{eq:rescaling}
    \end{equation}
    
    Our GP regressor setup begins with the kernels; namely, the NTK, Laplace, and Gaussian kernels.  During GP regression, the kernel hyperparameters are trained via optimization to best fit the data.  Up until now we have been referring to kernel \textit{parameters} because those values are intrinsic to the definition of a kernel.  However, in the context of GPs, we refer to those kernel parameters as hyperparameters since they do not directly influence the definition of a GP model. Thus, we define \textit{hyperparameters} as the parameters that are optimized in order to tune a specific model but do not directly describe that model.  
    
    All kernels contain the constant value parameter $c$.  In the case when a dataset is noisy, we include a white noise variance $\sigma^2$.  There are additional hyperparameters that are used to define specific kernels: $D$ and $\beta$ for the NTK and $\nu$ and $\ell$ for the Mat\'ern kernel.  All the listed hyperparameters can be \textit{unfixed} meaning that they can be optimized during training by maximizing the marginal log likelihood of the GP as outlined by \citet{rasmussen2006gpml}.
    
    Furthermore, GPs contain optimization specific options: $n_{restart}$, $\mathbf{y}_{rescale}$, and $\alpha$.  $n_{restart}$ controls the number of optimizer restarts that are done during training in order to find the optimal hyperparameters.  Each restart randomly chooses a new initial value from within pre-specified bounds which helps combat what may be a complicated loss surface with many local minima.  $\mathbf{y}_{rescale}$ is a boolean value that when true, rescales the response variable during training in order to aid with fitting and optimization.  It should be noted that $\mathbf{y}_{rescale}$ is undone at time of prediction.  Finally, $\alpha$ is a small positive value that is added to the diagonal of $\left[K_{XX} + \sigma^2 I_n\right]^{-1}$ (Equation \eqref{eq:posteriordist}) to ensure positive definiteness in light of any numerical issues.  All relevant modifications and parameters are summarized in Table \ref{tab:expparams}.
    
    \begin{table}[t!]
        \centering
        \begin{tabular}{|c|c|c|}
            \hline
            Data & Description & Value \\
            \hline
            Normalization & Transforming data to $\Sd$ (Equation \eqref{eq:normalization}) & -- \\
            Rescaling & Subtracting $m$, dividing $s^2$ (Equation \eqref{eq:rescaling}) & -- \\
            \hline
            \hline
            Optimization & Description & Value\\
            \hline
            $n_{restart}$ & Number of optimizer restarts & $9$ \\
            $\mathbf{y}_{rescale}$ & Response variable rescaling during training & \texttt{true} \\
            $\alpha$ & Value to ensure positive definiteness & $10^{-5}$ \\
            \hline
            \hline
            Kernel & Description & Value \\
            \hline
            $D$   & NTK depth & $2, 3, 10$ \\
            $\beta$ & NTK bias & Unfixed \\
            $\nu$ & Mat\'ern kernel smoothness & $\dfrac{1}{2}, \infty$ \\
            $\ell$ & Length-scale & Unfixed \\
            $c$ & Constant value & Unfixed \\
            $\sigma^2$ & Data noise variance & Unfixed \\
            \hline
        \end{tabular}
        \caption{\setlinespacing{1.1} Summary of variables managed in all synthetic experiments.  Values left blank are determined per experiment.  Unfixed values are ones allowed to be optimized during experiments.}
        \label{tab:expparams}
    \end{table}
    
    Our main task throughout this chapter is to perfectly match the posterior means of the Laplace kernel and NTK.  We use the Gaussian kernel as a comparison that is outside the RKHS in question for our experiments. To accomplish this task we will be optimizing Mat\'ern kernels with $\nu=\frac{1}{2}$ (Laplace) and $\nu \to \infty$ (Gaussian). We begin with an objective function as outlined in Algorithm \ref{alg:objfunc} that we will minimize during the optimization process.
    
    \begin{algorithm}[p]
        \DontPrintSemicolon
        \SetKwInOut{Input}{Input}
        \SetKwInOut{Return}{Return}
        
        \Input{$k_{Mat}(\theta), \mathbf{\bar{f}}_{NTK_*}, X, \mathbf{y}, X_*$}
        \BlankLine
        \BlankLine
        $f_{Mat_{opt}}, \theta_{opt} \leftarrow$\textsc{optimize}$\left\{ f_{Mat}\sim \mathcal{GP}(0, k_{Mat}(\theta))|_{X,\mathbf{y}} : n_{restart}, \mathbf{y}_{rescale}, \alpha \right\}$\;
        $\mathbf{\bar{f}}_{Mat_*} \leftarrow f_{Mat_{opt}}(X_*)$\;
        \BlankLine
        \BlankLine
        \Return{$\left\|\mathbf{\bar{f}}_{NTK_*} - \mathbf{\bar{f}}_{Mat_*}\right\|_{L_2}$}
        
        \caption{\setlinespacing{1.1} Objective function \texttt{obj\_func} for posterior mean matching optimization.}
        \label{alg:objfunc}
    \end{algorithm}
    
    \begin{algorithm}[p]
        \DontPrintSemicolon
        \SetKwInOut{Define}{Define}
        \SetKwInOut{Input}{Input}
        \SetKwInOut{Return}{Return}
        
        \Define{$n_{restart}$, $\mathbf{y}_{rescale}$, $\alpha$, $noise$}
        \Input{$D, \nu, X, \mathbf{y}, X_*$ (Train and test data pre-processed the same way)}

        \BlankLine
        \BlankLine
        \emph{NTK fitting and optimization}\;
        $\ddot{k}_{NTK}(\theta) \leftarrow c\cdot \ddot{k}_{NTK}(D, \beta);\quad \theta := \{D, \beta, c\}$\;
        \lIf{$noise$ is True}{$\ddot{k}_{NTK}(\theta) \leftarrow \ddot{k}_{NTK}(\theta) + \sigma^2 \delta_{\mathbf{s} = \mathbf{t}};\quad \theta = \{\dots, \sigma^2\}$}
        $f_{NTK_{opt}}, \theta_{opt} \leftarrow$ \textsc{optimize}$\left\{ f_{NTK} \sim \mathcal{GP}\right(0, \ddot{k}_{NTK}(\theta)\left)|_{X, \mathbf{y}} : n_{restart}, \mathbf{y}_{rescale}, \alpha \right\}$\;
        $\mathbf{\bar{f}}_{NTK_*} \leftarrow f_{NTK_{opt}}(X_*)$\;
        $\beta_{opt}$, $c_{opt}, \sigma^2_{opt} \leftarrow \{\dots, \beta, c, \sigma^2\}_{\theta_{opt}}$\;
        \BlankLine
        \BlankLine
        \emph{Mat\'ern posterior matching}\;
        $k_{Mat}(\theta) \leftarrow c_{opt}\cdot k_{Mat}(\nu, \ell);\quad \theta := \{\nu, \ell, c_{opt}\}$\;
        \lIf{$noise$ is True}{$k_{Mat}(\theta) \leftarrow k_{Mat}(\theta) + \sigma^2 \delta_{\mathbf{s} = \mathbf{t}});\quad \theta = \{\dots, \sigma^2\}$}
        $f_{Mat_{opt}}, \theta_{opt} \leftarrow$\textsc{minimize}$\left\{\right.$\texttt{obj\_func}$\left. (k_{Mat}(\theta), \mathbf{\bar{f}}_{NTK_*}, X, y, X_*) \right\}$\;
        $\ell_{opt} \leftarrow \{\dots, \ell, \dots\}_{\theta_{opt}}$\;
        \BlankLine
        \BlankLine
        
        \Return{$c_{opt}, \sigma^2_{opt}, \beta_{opt}, \ell_{opt}, f_{NTK_{opt}}, f_{Mat_{opt}}$}
        
        \caption{\setlinespacing{1.1} Posterior mean matching for Mat\'ern kernels.  The optimized model is denoted as $f_{opt}(\cdot) := f(\cdot; \theta_{opt})$.  \textsc{optimize} refers to the GP optimization process of maximizing the marginal log likelihood.}
        \label{alg:hyperparam}
    \end{algorithm}
    
    The purpose of the objective function is to minimize the RMSE between the Mat\'ern and NTK posterior means by varying the Mat\'ern kernel parameter $\ell$,
    \begin{equation}
        \left\|\mathbf{\bar{f}}_{NTK_*} - \mathbf{\bar{f}}_{Mat_*}\right\|_{L_2}.
    \end{equation}
    The procedure is outlined below in Algorithm \ref{alg:hyperparam} utilizes parameters for the GPs and hyperparameters for the kernels as expressed in Table \ref{tab:expparams}.  It should be noted that \textsc{optimize} utilizes \texttt{sklearn.gaussian\_process.GaussianProcessRe}-\texttt{gressor}'s \texttt{fit} function and \textsc{minimize} uses the \texttt{scipy.optimize.minimize\_scalar} function to perform the key steps in our procedure.
    
    We control NTK's depth parameter $D$ and the choice of Mat\'ern kernel smoothness $\nu$.  It should be noted that for our experiments we only allow the constant value $c$ and error term $\sigma^2$ parameters to optimize for the NTK while the respective parameters for the Mat\'ern kernels inherit the NTK's optimized values.  The reasoning lies in the relation between a kernel and a GP covariance function.  For values $\mathbf{s},\mathbf{t}\in\calX$, the covariance function of a GP $f\sim \mathcal{GP}(0, k)$ is related directly to its kernel (and thus $f$ depends on choice of $k$):
    \begin{equation}
        \cov(f(\mathbf{s}), f(\mathbf{t})) = k(\mathbf{s},\mathbf{t})
    \end{equation}
    So, a scaled GP is expressed as $\sqrt{c}f\sim \mathcal{GP}(0, ck)$ because its covariance function simplifies as follows:
    \begin{equation}\label{eq:scaledGP}
        \cov(\sqrt{c} f(\mathbf{s}), \sqrt{c} f(\mathbf{t})) = (\sqrt{c})^2 \cov(f(\mathbf{s}), f(\mathbf{t})) = c k(\mathbf{s},\mathbf{t}).
    \end{equation}
    Lastly, a GP that includes an additive noise term is expressed as $f + \epsilon\sim \mathcal{GP}(0, k+\sigma^2\delta_{\mathbf{s}=\mathbf{t}})$ where $\sigma^2\delta_{\mathbf{s}=\mathbf{t}}$ is the \textit{white noise kernel} with a constant value represented by the noise variance $\sigma^2$ and the kernel itself being the Kronecker delta $\delta_{\mathbf{s}=\mathbf{t}}$.  This is justified because
    \begin{equation}
        \begin{gathered}
        \cov(f(\mathbf{s}) + \epsilon(\mathbf{s}), f(\mathbf{t}) + \epsilon(\mathbf{t})) = \\
        \cov(f(\mathbf{s}), f(\mathbf{t}))\ + \\
        \cov(f(\mathbf{s}), f(\mathbf{t}) + \epsilon(\mathbf{t})) + \cov(\epsilon(\mathbf{s}), f(\mathbf{t}) + \epsilon(\mathbf{t}))\ + \\
        \cov(f(\mathbf{s}) + \epsilon(\mathbf{s}), f(\mathbf{t})) + \cov(f(\mathbf{s}) + \epsilon(\mathbf{s}), \epsilon(\mathbf{s}))\ + \\
        \cov(\epsilon(\mathbf{s}), \epsilon(\mathbf{t})) \\
        = k(\mathbf{s}, \mathbf{t}) + \sigma^2 \delta_{\mathbf{s} = \mathbf{t}}.
        \end{gathered}
    \end{equation}
    As a result, if we wish to compare just the kernels while maintaining their expressivity through transformations, it is necessary to keep the constant value and noise variance terms the same between kernels.
    
    We find that our procedure is a difficult optimization task due to the objective function landscape containing many local minima.  Our \texttt{Python} implementation\footnote{Implementation details at \url{https://github.com/392781/neural-laplace}.} deals with this by modifying the minimization process to search by slowly expanding the search bounds for $\ell$ over multiple runs. In our experiments we consider two metrics for posterior mean matching: Root mean-squared error (RMSE) and Pearson correlation coefficient ($\rho$).  The former gives an idea of the ability to minimize the objective function's loss and latter is used to see how well the posterior means of different kernels overlap with each other.
    
    \section{Illustrative Example}\label{sec:illustrative}
    
    To motivate the remainder of this work, we will first attempt to answer the following question and analyze the results produced: \textit{Does our posterior mean matching procedure work?}  We begin by creating a simple 2D input parametric curve as follows:
    \begin{equation}\label{eq:parametric}
        \begin{gathered}
            x_1 = (y^2 + 1)\cdot \sin(t) \\
            x_2 = (y^2 + 1)\cdot \cos(t) \\ 
            y \in [-2, 2] \\
            \text{where } t\in[-2\pi, 2\pi].
        \end{gathered}
    \end{equation}
    
    We independently sample 100 values of $t\sim U[-2, 2]$ and $y\sim U[-2\pi, 2\pi]$ where $U[a,b]$ is the uniform distribution between points $a$ and $b$, inclusive.  We generate $x_1$ and $x_2$ using the $t$ and $y$ samples.      Although the curve is defined by starting with samples of $y$ and $t$, we treat points $(x_1,x_2)$ as the inputs and $y$ as the output during fitting. We chose this curve because it very clearly illustrates the posterior mean matching between the NTK and Mat\'ern kernels. The dependence on $t$ allows us to connect the points in a way that other surfaces and datasets do not.  We then split our inputs $(x_1,x_2)$ and output $y$ into a training set size $n=50$ and testing set size $m=50$.  We make two separate experiments: non-noisy and noisy with $\epsilon \sim \mathcal{N}(0,0.15^2)$.  We utilize the experiment setup outlined in Table \ref{tab:expparams}.  In addition, we normalize the data inputs $(x_1, x_2)$ to $\Ss^1$ prior to training.  We then perform the posterior mean matching as outlined in Algorithm \ref{alg:hyperparam}.
    \begin{figure}[t!]
        \centering
        \makebox[\textwidth]{\includegraphics[width=\textwidth]{./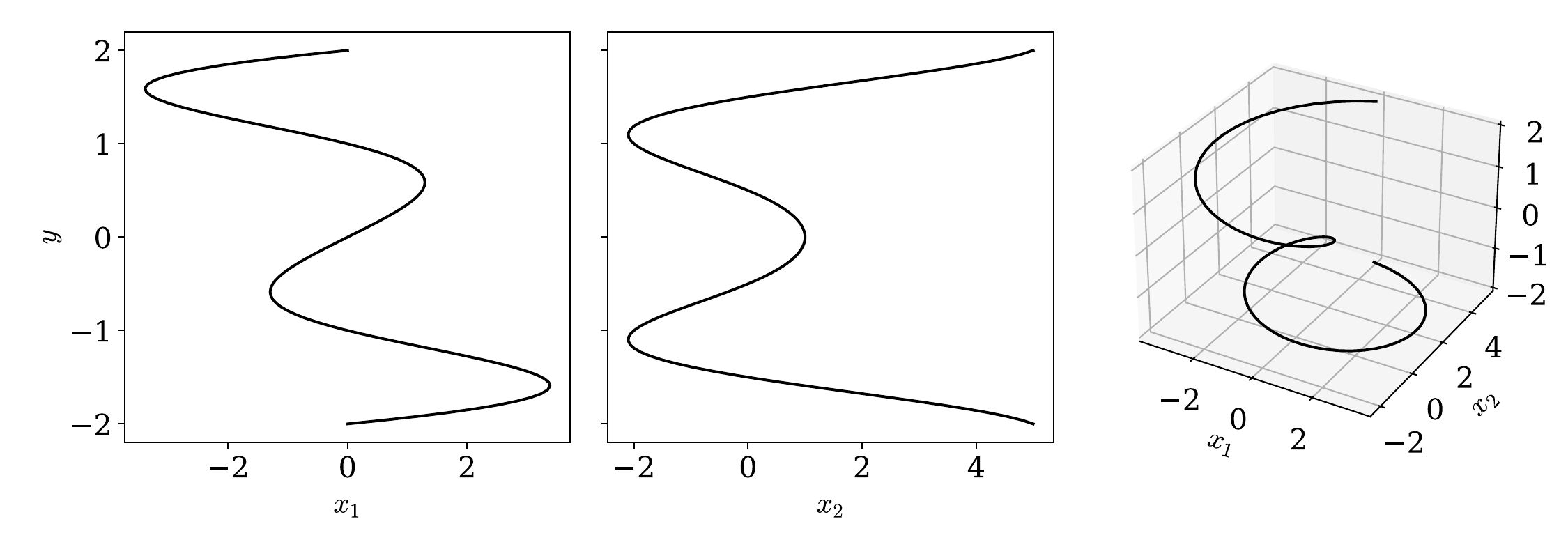}}
        \makebox[\textwidth]{\includegraphics[width=\textwidth]{./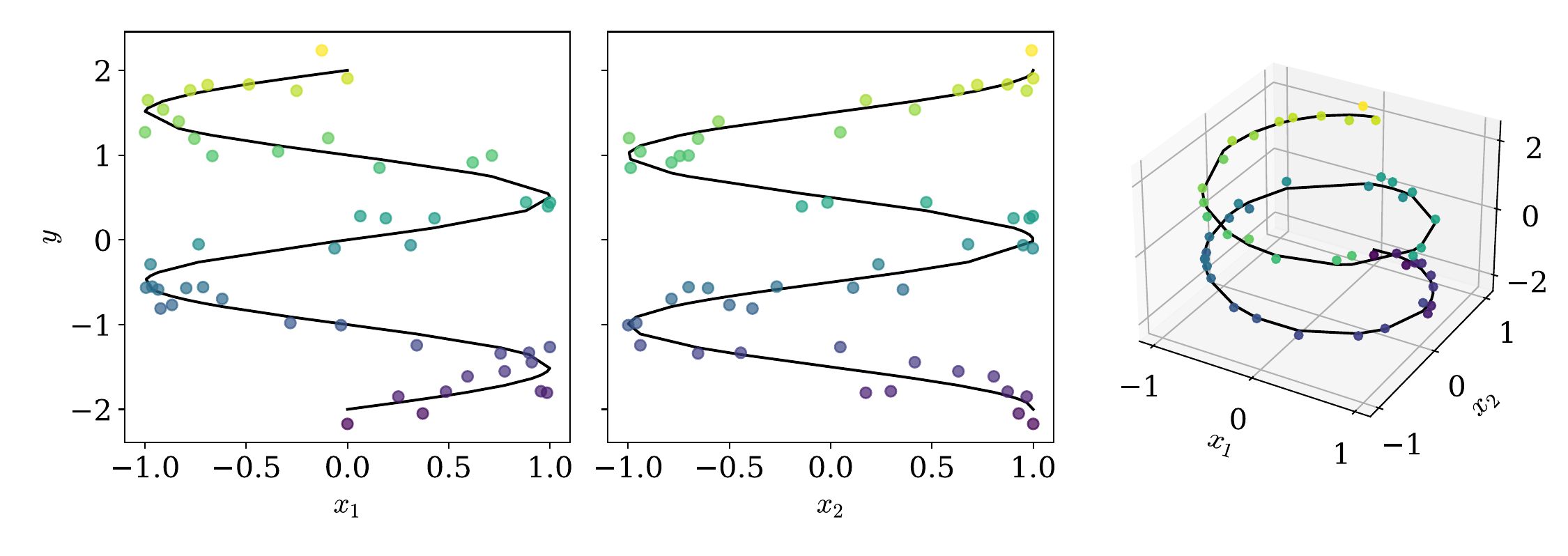}}
        \caption{\setlinespacing{1.1} \textit{Top:} The parametric curve defined in Equation \eqref{eq:parametric}. \textit{Bottom:} Equation \eqref{eq:parametric} with inputs normalized and noisy training points shown.}
        \label{fig:parametric}
    \end{figure}

    \begin{table}[!ht]
        \centering
        \begin{tabular}{|c|c||c|c||c|c||c|c|}
        \cline{3-8}
        \multicolumn{2}{c|}{} & \multicolumn{2}{c||}{$D=2$} & \multicolumn{2}{c||}{$D=3$} & \multicolumn{2}{c|}{$D=10$} \\
        \hline 
        Metric & Noise & Lap & Gaus & Lap & Gaus & Lap & Gaus \\
        \hline
        \multirow{2}{*}{\small RMSE} & No & 0.0860 & 0.3352 & 0.0250 & 0.3610 & 0.0079 & 0.3670 \\
        & Yes & 0.0563 & 0.0228 & 0.0314 & 0.0412 & 0.0494 & 0.0915 \\
        \hline
        \hline
        \multirow{2}{*}{$\rho$} & No & 0.9974 & 0.9590 & 0.9998 & 0.9509 & 0.9999 & 0.9462 \\
        & Yes & 0.9954 & 0.9991 & 0.9984 & 0.9980 & 0.9929 & 0.9783 \\
        \hline
        \end{tabular}
        \caption{\setlinespacing{1.1} The results of posterior mean matching the NTK to Mat\'ern kernels for the parametric dataset in $\Ss^1$.}
        \label{tab:parametricsd}
    \end{table}
    \begin{figure}[!b]
        \centering
        \includegraphics[width=0.85\textwidth]{./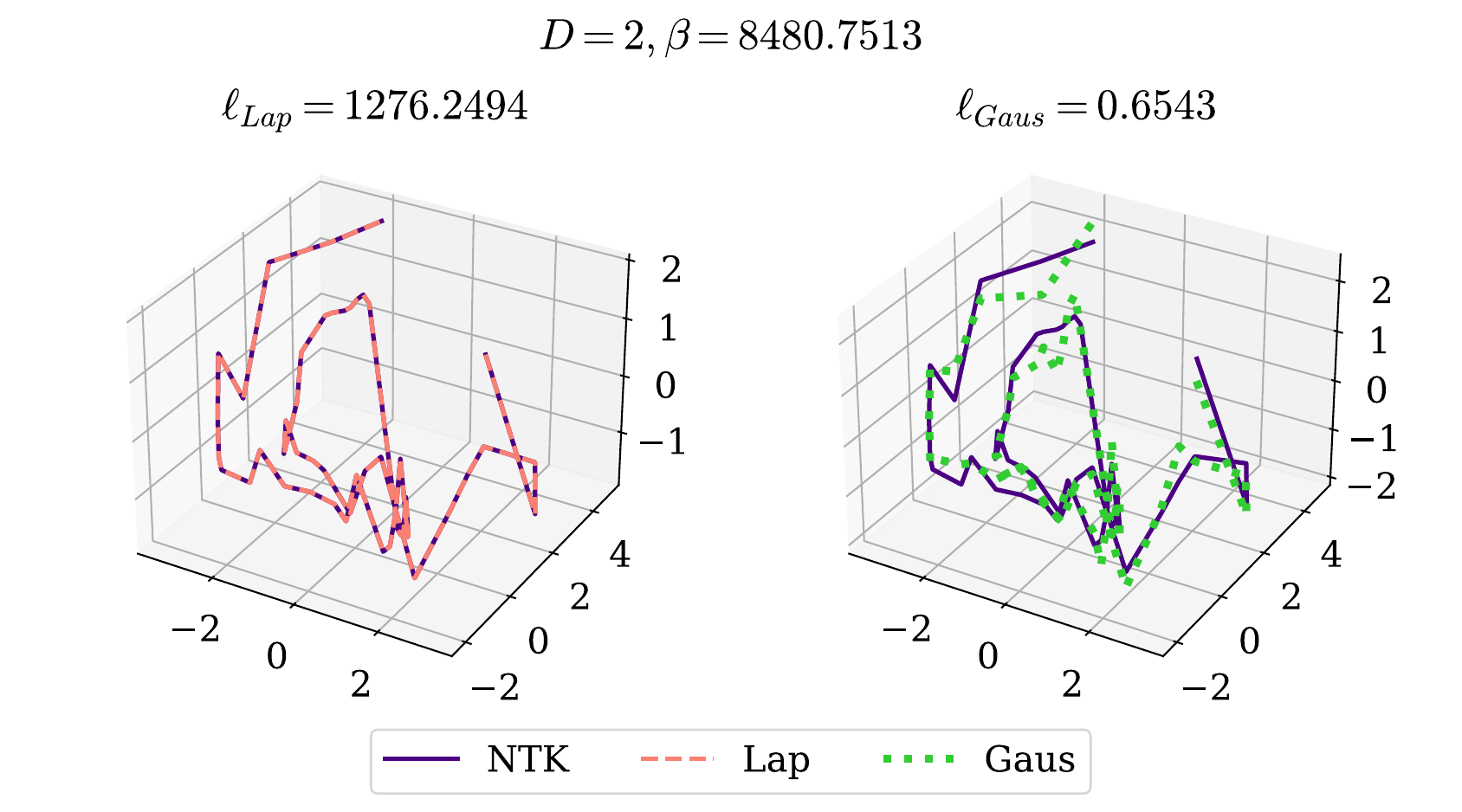}
        \caption{\setlinespacing{1.1} Posterior means generated by fitting non-noisy parametric curve data in $\Ss^1$ and predicted on out of sample data in $\Ss^1$.  For visualization purposes we set $(x_1,x_2) \in \R^2$.}
        \label{fig:parametricSd}
    \end{figure}
    
    Fitting the parametric curve shows a number of interesting results.  The optimization procedure performs well and can accurately match the mean posteriors of the Laplace and NTK GPs.  Table \ref{tab:parametricsd} indicates this with very small RMSE values and Figure \ref{fig:parametricSd} visually shows an exact match.  More than that, the posterior means correlate almost perfectly ($\rho > 0.99$) regardless if the GPs were trained on noisy data or not.

    On the other hand, the Gaussian kernel and NTK fail to perfectly match up in the case where no noise is present as can be seen in Table \ref{tab:parametricsd}.  Although there is a high correlation between the posterior means, Figure \ref{fig:parametricSd} illustrates how the Gaussian kernel cannot match the NTK's predictions to the exact effect that the Laplace kernel does.  This is also indicated by a higher resulting RMSE during optimization.
    
    One important thing of note is the low RMSE and high correlation values in Table \ref{tab:parametricsd} for both the Laplace and Gaussian kernels when noisy data is present.  This can be explained by the fact that adding the noise term $\sigma^2$ to our kernels gives the resulting regression additional smoothness and increased bounds for error while fitting.  When matching posterior means, the regressors can use this inferred noise to their advantage to better minimize our objective function resulting in the values seen in the table.  Without $\sigma^2$, our GP regressors would attempt to perfectly interpolate through the data.

    \begin{table}[!b]
        \centering
        \begin{tabular}{|c|c||c|c|c|c|c|}
            \hline
            D & Noise & $\beta$ & $\ell_{Lap}$ & $\ell_{Gaus}$ & $c$ & $\sigma^2$ \\
            \hline
            \multirow[c]{2}{*}{$2$} & No & 8480.751 & 1276.249 & 0.6543 & 32.495 & --\\
             & Yes & 0.000010 & 1.0569 & 0.8414 & 0.2734 & 0.7910 \\
             \hline
            \multirow[c]{2}{*}{$3$} & No & 453.111 & 765.133 & 0.6540 & 21.515 & -- \\
             & Yes & 0.000014 & 1.0046 & 0.8242 & 0.2722 & 0.7916 \\
             \hline
            \multirow[c]{2}{*}{$10$} & No & 469.676 & 0.3702 & 0.6249 & 6.6946 & --\\
             & Yes & 0.000010 & 0.3026 & 0.2152 & 0.2350 & 0.8177 \\
             \hline
        \end{tabular}
        \caption{\setlinespacing{1.1} Kernel hyperparameter results for posterior mean matching with inputs in $\Ss^1$.}
        \label{tab:parametricSdHyp}
    \end{table}   
    
    Lastly, Table \ref{tab:parametricSdHyp} shows the trained kernel hyperparameters after the posterior mean matching procedure.  As noted in Section \ref{sec:equivalence}, we see that the Laplace length-scale decreases as NTK depth increases.  The Gaussian length-scale stays approximately the same regardless of depth when the data is non-noisy.  Both the Laplace and Gaussian length-scales decrease as depth increases when the data is noisy which may be explained by the additional smoothness added by the noise term $\sigma^2$.
    
    \section{Synthetic 2D Input Case}\label{sec:2D}
    
    \begin{table}[!b]
        \centering
        \begin{tabular}{|p{0.19\linewidth}|p{0.75\linewidth}|}
            \hline
            \centering Dataset & \centering $f(x_1,x_2)$ \tabularnewline
            \hline
            \centering Ackley \citep{ackley2012connectionist} & \centering\small $-20\exp\left[-\frac{1}{5} \sqrt{\frac{1}{2}(x_1^2 + x_2^2)}\right]- \exp\left[\frac{1}{2}(\cos 2\pi x_1 + \cos 2\pi x_2)\right] + e + 20$ \tabularnewline
            \hline
            \centering Franke \citep{franke1979critical} & \centering\small $0.75\exp\left(-\frac{(9x_1-2)^2}{4} -\frac{(9x_2-2)^2}{4}\right) + 0.75\exp\left(-\frac{(9x_1+1)^2}{49} -\frac{9x_2+1}{10}\right) + 0.5\exp\left(-\frac{(9x_1-7)^2}{4} -\frac{(9x_2-3)^2}{4}\right) - 0.2\exp\left(-(9x_1-4)^2 -(9x_2-7)^2\right)$ \tabularnewline
            \hline
            \centering Nonpoly. \citep{welch1992screening} & \centering\small $\frac{1}{6}\left[(30 + 5x_1\sin(5x_1))(4+\exp(-5x_2)) - 100\right]$ \tabularnewline
            \hline
        \end{tabular}
        \caption{\setlinespacing{1.1} Three 2D input functions with $(x_1,x_2)\in\R^2$.}
        \label{tab:2Dfuncs}
    \end{table}
    \begin{table}[t]
        \centering
        \begin{tabular}{|c|c|c|}
            \hline
            Ackley & Franke & Nonpolynomial \\
            \hline
            $x_1,x_2\sim U[1,7]$ & $x_1,x_2\sim U[-0.5, 1]$ & $x_1,x_2\sim U[0,2]$ \\
            \hline
            $\epsilon\sim \mathcal{N}(0, 0.75^2)$ & $\epsilon\sim \mathcal{N}(0, 0.10^2)$ & $\epsilon\sim \mathcal{N}(0, 1)$ \\
            \hline
        \end{tabular}
        \caption{\setlinespacing{1.1} 2D input function input sampling distributions and noise used for the noisy experiments.}
        \label{tab:2Dfuncparams}
    \end{table}
    
    In this section we look at more complicated functions in the 2D input space.  Since the Laplace kernel and NTK share the same RKHS in $\Sd$, it is of interest to determine how these kernels behave in the space of $\Rd$.  We consider 3 different functions found in literature summarized in Tables \ref{tab:2Dfuncs} and \ref{tab:2Dfuncparams}.
    
    During the calculation of the NTK, the computational space complexity scales exceptionally poorly due to the recursive nature of the kernel which caused issues on our hardware. Thus this required a way to minimize the dataset size while retaining enough information to generalize our model.  For the following experiments, we utilize Latin hypercube sampling \citep{iman1981lhs}. To understand Latin hypercube sampling it is simpler to define Latin square sampling first. \textit{Latin square sampling} is where only one sample is chosen in each row and column of a square grid.  As a result, \textit{Latin hypercube sampling} is a generalization of the Latin square giving a sample that tries to yield approximately equidistant points in given boundaries.  We use it because it better captures the overall surface of these functions during training whilst utilizing fewer training points.
    
    \begin{figure}[!b]
        \centering
        \makebox[\textwidth]{\includegraphics[width=\textwidth]{./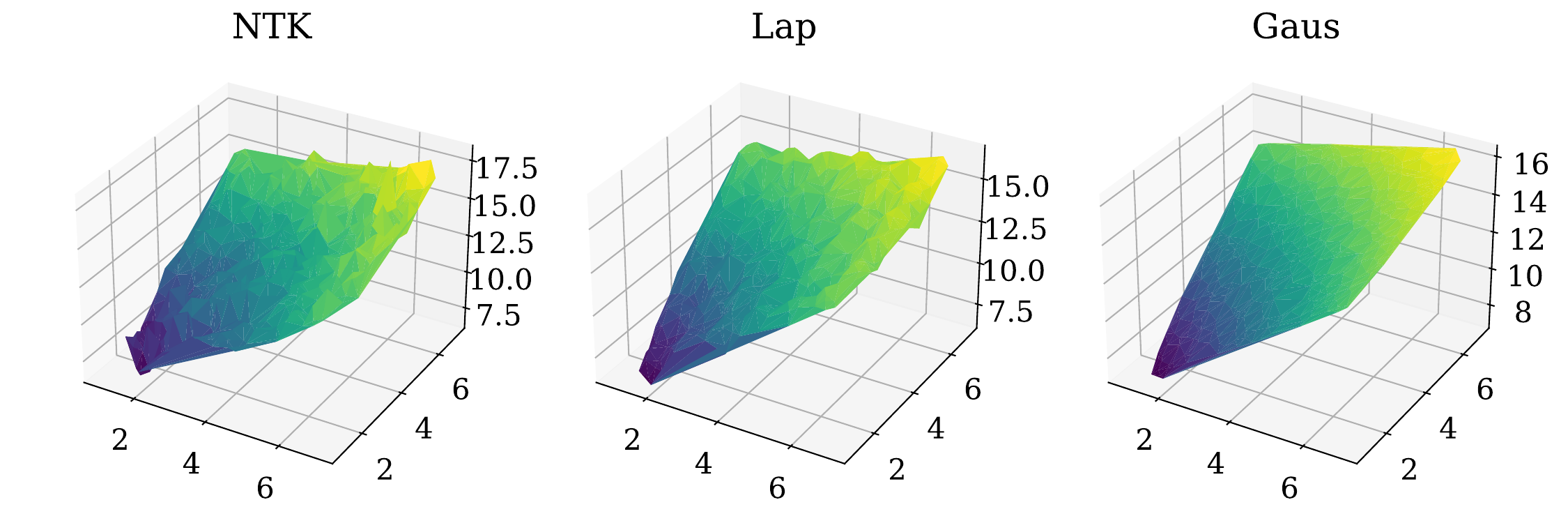}}
        \caption{\setlinespacing{1.1} Posterior means for the noisy the Ackley function in $\R^2$ for NTK depth $D=2$ with test size $m=500$.  GPs were trained using $n=500$ inputs $x_1,x_2 \in [1,7]$ generated using Latin hypercube sampling.}
        \label{fig:2dRdNoisyD2}
    \end{figure}
    
    In these experiments we have 2 groups ($\R^2$ and normalized to $\Ss^1$) each containing a non-noisy and noisy dataset for a total of 4 datasets per function.  For each function we sample a grid of $1000$ points using Latin hypercube sampling.  We split the sample into a training dataset size $n=500$ and a testing dataset size $m=500$.  For all other parameters we adopt the defaults laid out in Table \ref{tab:expparams}.
    
    \begin{table}[!b]
        \centering
        \begin{tabular}{|c|c|c||c|c||c|c||c|c|}
        \cline{4-9}
        \multicolumn{3}{c|}{} & \multicolumn{2}{c||}{$D=2$} & \multicolumn{2}{c||}{$D=3$} & \multicolumn{2}{c|}{$D=10$} \\
        \hline
        Metrics & Dataset & Noise & Lap & Gaus & Lap & Gaus & Lap & Gaus \\
        \hline
        \multirow[c]{6}{*}{RMSE} & \multirow[c]{2}{*}{Ackley} & No & 0.7881 & 0.6695 & 0.7868 & 0.6693 & 0.7790 & 0.6682 \\
        &  & Yes & 0.3296 & 0.3115 & 0.3370 & 0.3206 & 0.3816 & 0.3712 \\
        \cline{2-9} 
        & \multirow[c]{2}{*}{Franke} & No & 0.0947 & 0.0957 & 0.0946 & 0.0968 & 0.0935 & 0.0965 \\
        &  & Yes & 0.0487 & 0.0532 & 0.0496 & 0.0541 & 0.0551 & 0.0595 \\
        \cline{2-9}
        & \multirow[c]{2}{*}{Nonpoly} & No & 2.6526 & 2.5916 & 2.6526 & 2.5916 & 2.6513 & 2.5905 \\
        &  & Yes & 0.8275 & 0.8411 & 0.8307 & 0.8448 & 0.8461 & 0.8640 \\
        \hline
        \hline
        \multirow[c]{6}{*}{$\rho$} & \multirow[c]{2}{*}{Ackley} & No & 0.9407 & 0.9558 & 0.9408 & 0.9559 & 0.9419 & 0.9560 \\
        &  & Yes & 0.9890 & 0.9900 & 0.9884 & 0.9894 & 0.9849 & 0.9857 \\
        \cline{2-9}
        & \multirow[c]{2}{*}{Franke} & No & 0.9330 & 0.9328 & 0.9332 & 0.9310 & 0.9341 & 0.9315 \\
        &  & Yes & 0.9791 & 0.9753 & 0.9781 & 0.9744 & 0.9724 & 0.9688 \\
        \cline{2-9}
        & \multirow[c]{2}{*}{Nonpoly} & No & 0.3760 & 0.4255 & 0.3760 & 0.4255 & 0.3761 & 0.4254 \\
         &  & Yes & 0.8000 & 0.7994 & 0.7985 & 0.7967 & 0.7883 & 0.7809 \\
        \hline
        \end{tabular}
        \caption{\setlinespacing{1.1} Posterior mean matching results for the 2D input surface datasets in $\R^2$ with test size $m=500$.  GPs were trained using $n=500$ inputs $x_1,x_2$ generated using Latin hypercube sampling over the respective function domains.}
        \label{tab:2dRd}
    \end{table}
    
    \begin{figure}[!htp]
        \centering
        \includegraphics[width=\textwidth]{./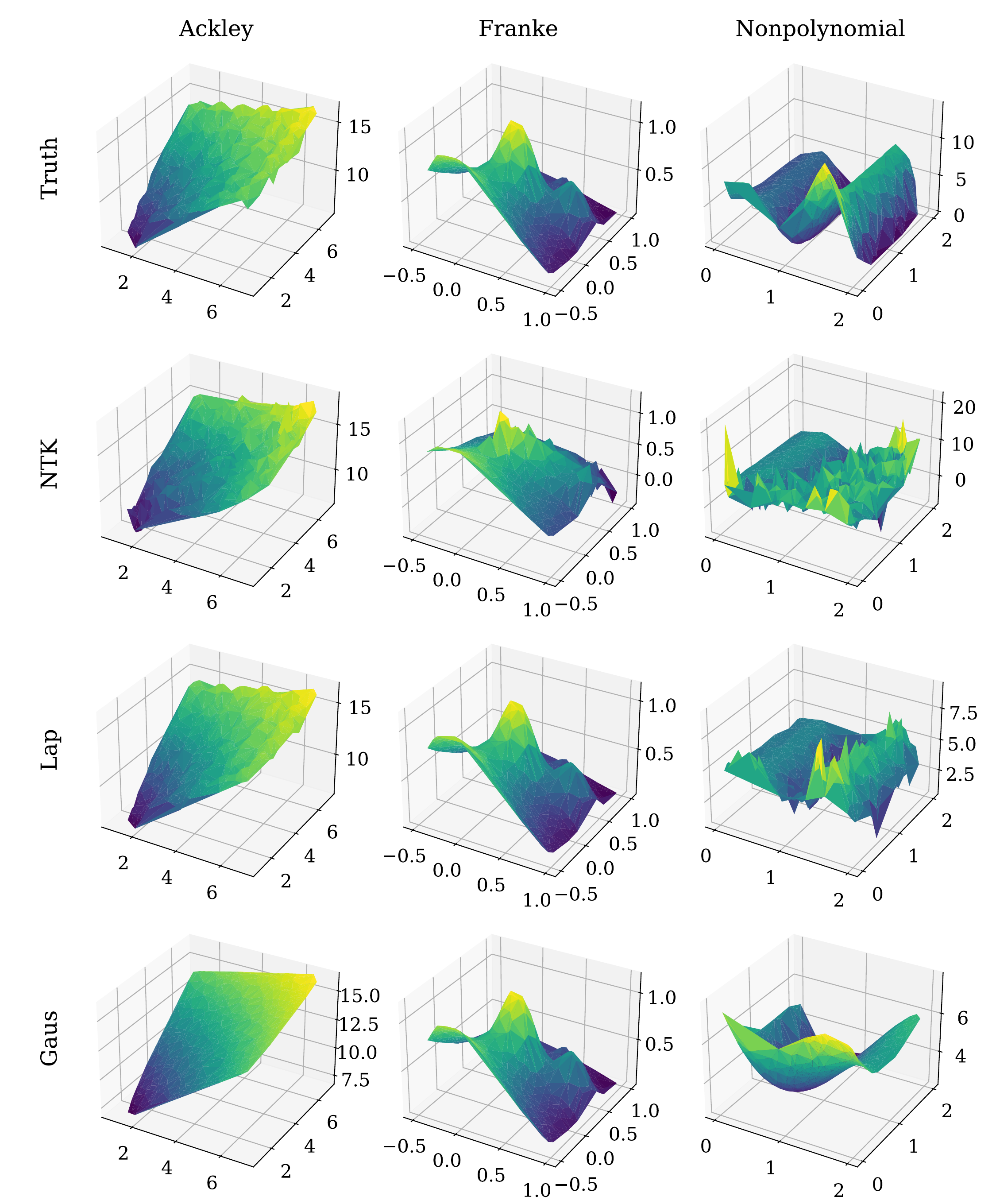}
        \caption{\setlinespacing{1.1} Posterior means of the non-noisy 2D input functions trained in $\R^2$ for NTK depth $D=3$ with test size $m=500$.  GPs were trained using $n=500$ inputs $x_1,x_2$ generated using Latin hypercube sampling over the respective function domains.}
        \label{fig:2dRd}
    \end{figure}
    
    \begin{figure}[!htp]
        \centering
        \includegraphics[width=\textwidth]{./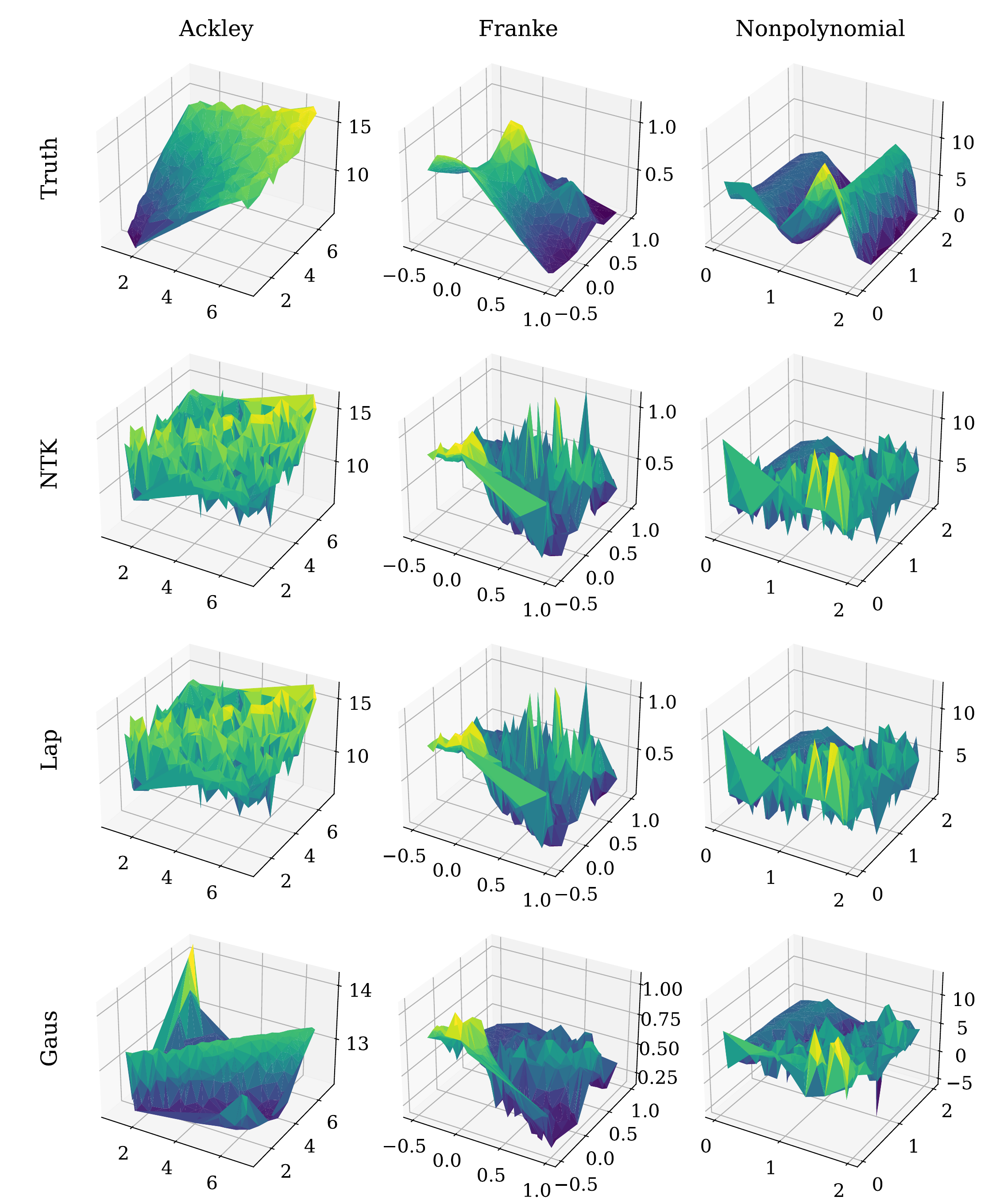}
        \caption{\setlinespacing{1.1} Posterior means of the non-noisy 2D input functions trained in $\Ss^1$ for NTK depth $D=3$ with test size $m=500$.  GPs were trained using $n=500$ inputs $x_1,x_2$ generated using Latin hypercube sampling over the respective function domains.  For visualization $(x_1, x_2)\in\R^2$.}
        \label{fig:2dSd}
    \end{figure}
    
    One reoccurring theme in these experiments is the white noise term $\sigma^2$ performing the role of smoothing out the predictions.  As mentioned in the previous section, taking data noise into account allows for more leeway in matching using Algorithm \ref{alg:length-scalebiasmatching}.  Figure \ref{fig:2dRdNoisyD2} illustrates this via the posterior means of the Ackley function.  The smoothed surface of the NTK is similar to that of a plane and thus closely matched to by the Laplace and Gaussian kernels.  Much like noisy data in $\Ss^1$, this effect is seen in all noisy $\R^2$ experiments where the posterior mean correlation is always higher compared to the corresponding non-noisy experiments.
    
    Looking at the $\R^2$ experiments in Table \ref{tab:2dRd} and Figure \ref{fig:2dRd}, it is apparent from the correlations of the posterior means that the Laplace kernel is not able to match the NTK.  This is the same case for the Gaussian kernel as well.  The nonpolynomial dataset seems to be especially hard to match $(\rho \approx 0.5)$.  The Laplace and Gaussian kernels both provide similar resulting correlations in all configurations in $\R^2$.  Based on these experiments, the conclusion is that the elements of the Laplace kernel's and NTK's RKHSs (e.g.\ the posterior means) do not fully overlap in the general space $\Rd$.  
    
    As mentioned previously, this is in contrast to the space of $\Sd$ where we observe nearly perfect correlation and low RMSE values during the posterior mean matching procedure for the Laplace kernel and NTK.  Details of these experiments are can be found in Table \ref{tab:2dSd} in Appendix \ref{ch:additionalfigstables}.  Figure \ref{fig:2dSd} shows that the Laplace kernel and NTK have effectively the same posterior means whereas the Gaussian kernel's posterior falls short of matching.  The two spaces presented in the figures have very different posterior means. Arguably, the space of $\R^2$ provides results more in-line with the ground truth than the space of $\Ss^1$.  This observation is further explore in the next section.  Lastly, we would like to draw attention to the noisy Ackley function in $\Ss^1$ which was a difficult dataset to fit utilizing the NTK.  We discuss the details of this in Figure \ref{fig:2dSdNoisyD2} in Appendix \ref{ch:additionalfigstables}.

    \section{Synthetic High Dimensional Cases}
    \label{sec:HD}
    
    Knowing that we can reliably match the posterior means of the Laplace kernel and NTK in $\Sd$, we now turn our focus to the quality of predictions while utilizing these kernels.  Specifically, we focus on the NTK's ability to generate meaningful posterior means.  We test this using the multidimensional Friedman datasets \citep{friedman1991multivariate, breiman1996bagging} which are used to benchmark and test linear regression models.  The datasets and their features are summarized in Tables \ref{tab:friedmandata} and \ref{tab:friedmandistribution} respectively.  
    
    \begin{table}[!b]
        \centering
        \begin{tabular}{|p{0.15\linewidth}|p{0.65\linewidth}|p{0.06\linewidth}|}
            \hline
            \centering Dataset & \centering $f(x_1, x_2,\dots, x_d)$ & \centering $d$ \tabularnewline
            \hline
            \centering Friedman 1 & \centering $10\sin(\pi x_1 x_2) + 20(x_3 - 0.5)^2 + 10x^4 + 5x^5$ & \centering 10 \tabularnewline
            \hline
            \centering Friedman 2 & \centering $\left(x_1^2 + \left(x_2 x_3 - (x_2 x_4)^{-2}\right)^2\right)^{1/2}$ & \centering 4 \tabularnewline
            \hline
            \centering Friedman 3 & \centering $\arctan\left(\dfrac{x_2 x_3 - (x_2 x_4)^{-1}}{x_1}\right)$ & \centering 4 \tabularnewline
            \hline
        \end{tabular}
        \caption{\setlinespacing{1.1} Friedman datasets along with their corresponding input dimensions where $(x_1,\dots,x_d)\in\R^d$.  Friedman 1 dataset's output is independent of the last five input variables hence why the dataset has a total of 10 input dimensions.}
        \label{tab:friedmandata}
    \end{table}
    
    \begin{table}[!b]
        \centering
        \begin{tabular}{|c|c|c|}
            \hline
            Friedman 1 & Friedman 2 & Friedman 3 \\
            \hline
            \multirow{4}{*}{$x_1,\dots, x_{10} \sim U[0, 1]$} & \multicolumn{2}{c|}{$x_1\sim U(0,100]$} \\
            & \multicolumn{2}{c|}{$x_2\sim U[40\pi, 560\pi]$} \\
            & \multicolumn{2}{c|}{$x_3\sim U[0, 1]$} \\
            & \multicolumn{2}{c|}{$x_4\sim U[1, 11]$} \\
            \hline
            $\epsilon\sim \mathcal{N}(0, 1.5)$ & $\epsilon\sim \mathcal{N}(0, 5)$ & $\epsilon\sim \mathcal{N}(0, 0.15)$ \\
            \hline
        \end{tabular}
        \caption{\setlinespacing{1.1} Friedman data feature distributions.  Friedman 2 and 3 share the same feature distributions.  The noise term $\epsilon$ is applied only for the noisy data cases.}
        \label{tab:friedmandistribution}
    \end{table}
    
    We maintain the same experiment setup as stated in Table \ref{tab:expparams}; however, we utilize the \textit{coefficient of determination} ($R^2$) as a new metric for this section as a way to assess regression performance:
    \begin{equation}\label{eq:r2}
        R^2 = 1 - \dfrac{\textit{residual sum of squares}}{\textit{total sum of squares}} =  1 - \dfrac{\sum_i (y_{*_i} - f_{*_i})^2}{\sum_i (y_{*_i} - \bar{y}_*)^2},
    \end{equation}
    where $y_{*_i}$ is the ground truth value from the testing set and $f_{*_i}$ is the predicted value.  $R^2$ can attain a maximum value of 1 indicating perfect interpolation, a value of 0 indicating performance equivalent to the mean of the ground truth $\bar{y}_*$, and arbitrary negative values indicating worse performance than $\bar{y}_*$.  Our goal in this section is to test the impact of rescaling inputs and/or outputs while using the NTK in both $\Rd$ and $\Sd$.  Alongside this, we continue to perform posterior mean matching to the NTK for the Mat\'ern kernels. 
    
    For each dataset we generate 200 samples with $n=100$ for the training set and $m=100$ for the testing set.  In the experiments where both normalization and rescaling is used on the input space, we first rescale our inputs and then normalize them to $\Sd$.  It is important to rescale first because doing otherwise does not guarantee that our inputs will lie in $\Sd$.  We also treat rescaling inputs ($X_{rescale}$) and outputs ($\mathbf{y}_{rescale}$) as separate experiments.  Output rescaling is only done during training as outlined in Table \ref{tab:expparams}.
    
    \begin{table}[!ht]
        \centering
        \begin{tabular}{|c|c||c|c|c|c||c|c|c|c|}
        \cline{3-10}
        \multicolumn{2}{c|}{} & \multicolumn{4}{c||}{Non-noisy} & \multicolumn{4}{c|}{Noisy} \\
        \hline
        $D$ & \small Sp. & None & \small $X_{rescale}$ & \small $\mathbf{y}_{rescale}$ & Both & None & \small $X_{rescale}$ & \small $\mathbf{y}_{rescale}$ & Both \\
        \hline
        \multirow[c]{2}{*}{2} & $\Ss^9$ & 0.6549 & 0.7792 & 0.6447 & 0.8054 & 0.5919 & 0.6993 & 0.5843 & 0.7154 \\
        & $\R^{10}$ & 0.7917 & 0.7857 & 0.7924 & 0.7954 & 0.6947 & 0.7000 & 0.7054 & 0.7086 \\
        \hline
        \multirow[c]{2}{*}{3} & $\Ss^9$ & 0.6481 & 0.7672 & 0.6353 & 0.7880 & 0.5924 & 0.6913 & 0.5820 & 0.7054 \\
        & $\R^{10}$ & 0.7860 & 0.7697 & 0.7833 & 0.7788 & 0.6901 & 0.6898 & 0.7010 & 0.7007 \\
        \hline
        \multirow[c]{2}{*}{10} & $\Ss^9$ & 0.5956 & 0.6302 & 0.5785 & 0.6565 & 0.5538 & 0.5781 & 0.5406 & 0.5992 \\
        & $\R^{10}$ & 0.7569 & 0.6301 & 0.7069 & 0.6595 & 0.6711 & 0.5727 & 0.6459 & 0.6048 \\
        \hline
        \end{tabular}
        \caption{\setlinespacing{1.1} Friedman 1 $R^2$ results for NTK posterior means with training done using various data transformations.  The None column is the baseline with no input or output rescaling, $X_{rescale}$ column is with input rescaling only, $\mathbf{y}_{rescale}$ column is with output rescaling during training only, and the Both column applies both types of rescaling.}
        \label{tab:friedman1NTKr2}
    \end{table}
    
    \begin{table}[!hb]
        \centering
        \begin{tabular}{|c||c|c|c|c||c|c|c|c|}
        \cline{2-9}
        \multicolumn{1}{c|}{} & \multicolumn{4}{c||}{Non-noisy} & \multicolumn{4}{c|}{Noisy} \\
        \hline
        $D$ & None & \small $X_{rescale}$ & \small $\mathbf{y}_{rescale}$ & Both & None & \small $X_{rescale}$ & \small $\mathbf{y}_{rescale}$ & Both \\
        \hline
        2 & 0.9974 & $\approx 1$ & 0.9981 & 0.9961 & 0.9985 & 0.9991 & 0.9995 & 0.9951 \\
        3 & 0.9983 & $\approx 1$ & 0.9991 & 0.9986 & 0.9988 & 0.9996 & 0.9992 & 0.9983 \\
        10 & 0.9985 & 0.9989 & 0.9992 & 0.9999 & 0.9984 & 0.9990 & 0.9993 & 0.9999 \\
        \hline
        \end{tabular}
        \caption{\setlinespacing{1.1} Friedman 1 $\rho$ results for Laplace kernel and NTK posterior mean matching in $\Ss^9$ with training done using various data transformations.}
        \label{tab:friedman1NTKcorr}
    \end{table}
    
    Our experiment results show that rescaling the inputs has a significant positive impact on regression results for the NTK.  This is best illustrated in Table \ref{tab:friedman1NTKr2} where we consistently see that input rescaling increases our $R^2$ value significantly.  On the other hand output rescaling does not have a significant impact on the quality of the regression as indicated by the None and $\mathbf{y}_{rescale}$ columns.  That said, output rescaling helps with hyperparameter optimization.  Without it we hit upper bounds during optimization for hyperparameters like constant value or NTK's bias.  Lastly, applying both input and output rescaling yields the best regression results.  The data transformations do not greatly impact Laplace kernel and NTK posterior mean matching as seen in Table \ref{tab:friedman1NTKcorr}.  We find similar results for the Friedman 2 and 3 datasets found in Tables \ref{tab:friedman2NTKr2}, \ref{tab:friedman3NTKr2}, \ref{tab:friedman2NTKcorr}, and \ref{tab:friedman3NTKcorr} in Appendix \ref{ch:additionalfigstables}.
    
    \begin{figure}[!hp]
    \vspace{-2em}
        \centering
        \includegraphics[width=0.9\textwidth]{./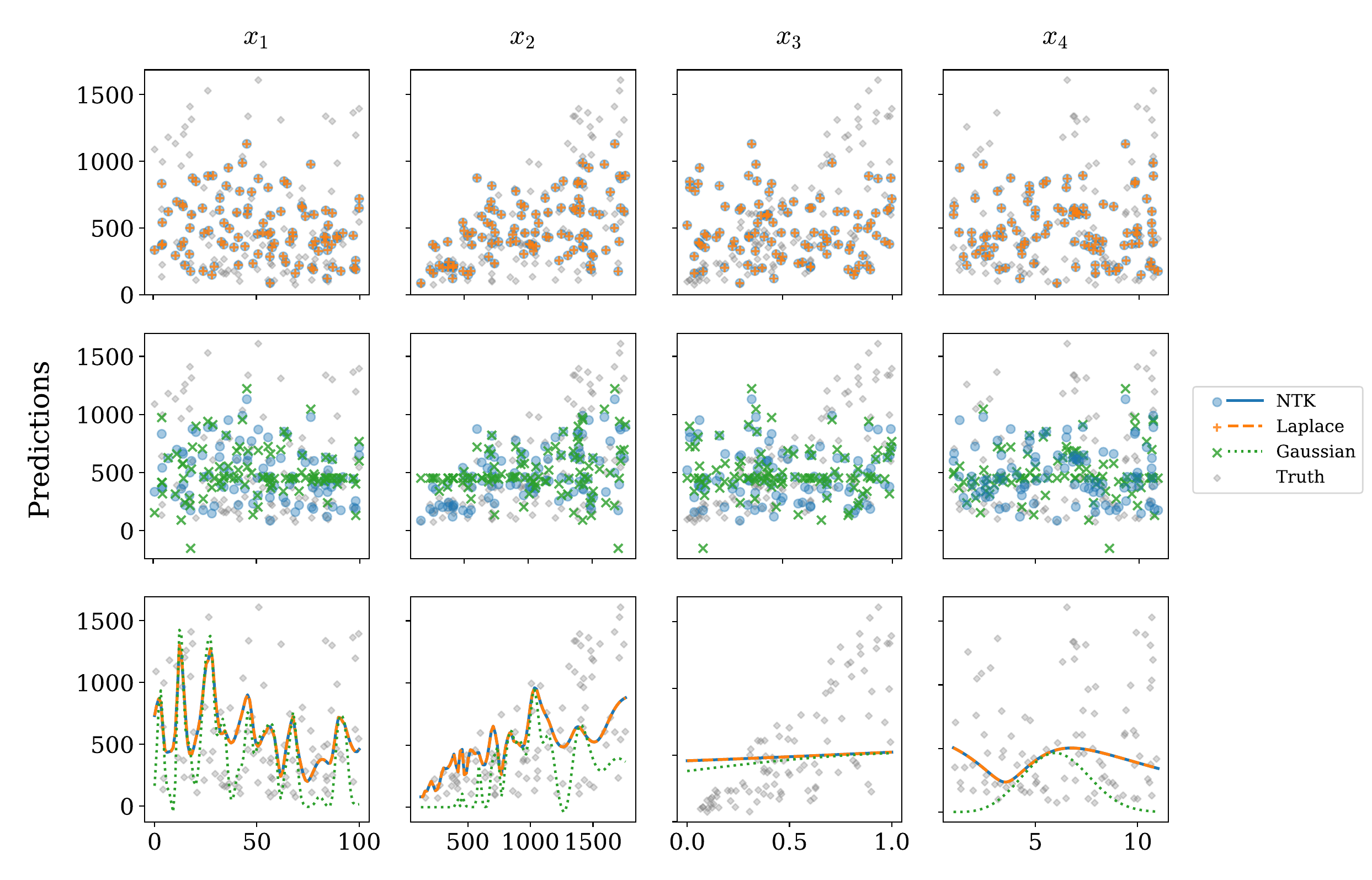}
        \caption{\setlinespacing{1.1} Predictions for non-noisy Friedman 2 in $\Ss^3$ for $D=2$ with $\mathbf{y}_{rescale}$. \textit{Top:} NTK and Laplace predictions overlayed. \textit{Middle:} NTK and Gaussian predictions overlayed. \textit{Bottom:} Averaged prediction plots of all kernels.}
        \label{fig:friedman2Sdy}
        
        \vspace{0.75em}
        
        \includegraphics[width=0.9\textwidth]{./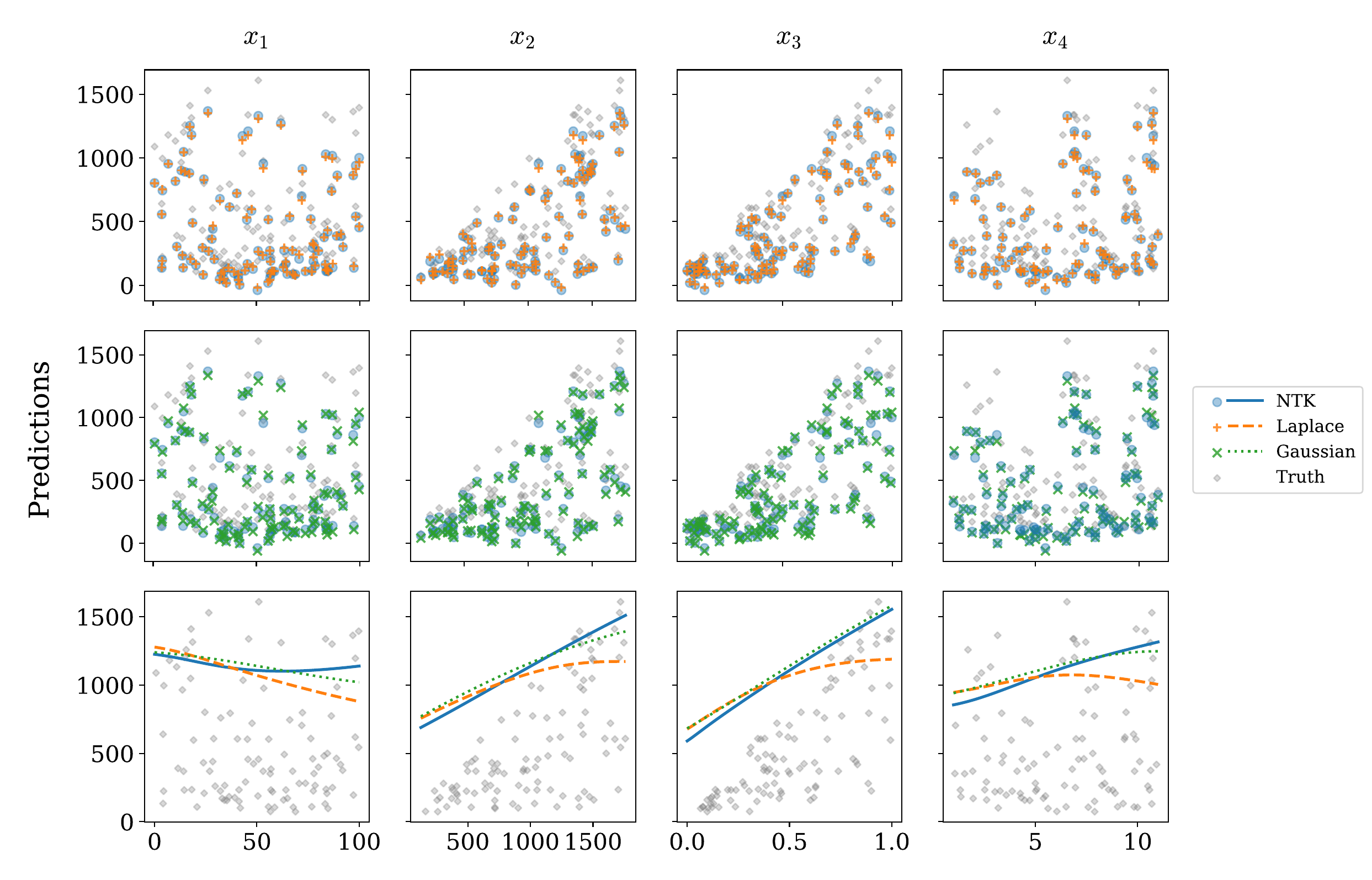}
        \caption{\setlinespacing{1.1} Predictions for non-noisy Friedman 2 in $\Ss^3$ for $D=2$ with $X_{rescale}$ and $\mathbf{y}_{rescale}$. The rows of the figure are laid out as in Figure \ref{fig:friedman2Sdy}.}
        \label{fig:friedman2SdXy}
    \end{figure}
    
    We move on to visualizations of the different data transformations.  In the remainder of the figures in this section, we include a bottom row that contains averaged prediction plots of all the kernels.  These plots are created by using a trained GP to predict over a dataset where we uniformly sample from one input dimension while the remaining inputs are set as the mean of their respective distribution (Table \ref{tab:friedmandistribution}).  As a result, we can visualize the sampled dimension as a 2D plot (e.g.\ $x_1$ vs $f_*$).
    
    We begin by differentiating the posterior results for scaled and unscaled inputs.  For this we turn to the non-noisy Friedman 2 dataset in $\Ss^3$ with outputs rescaled and NTK $D=2$.  Figures \ref{fig:friedman2Sdy} and \ref{fig:friedman2SdXy} showcase unscaled and scaled inputs respectively.  In the unscaled input case in Figure \ref{fig:friedman2Sdy}, the GP poorly generalizes for all inputs aside from $x_2$ ($R^2 \approx -0.071$, Table \ref{tab:friedman2NTKr2}).  On the other hand, the scaled inputs in Figure \ref{fig:friedman2SdXy} do a much better job in producing a good regression fit over the test data ($R^2 \approx 0.855$, Table \ref{tab:friedman2NTKr2}).  These results indicate that input rescaling seems to improve regression predictions and does not impact our ability to match posterior means.  We include results for the noisy Friedman 2 dataset with the same setup in Figures \ref{fig:friedman2noisySdy} and \ref{fig:friedman2noisySdXy} in Appendix \ref{ch:additionalfigstables}.
    
    \begin{figure}[htp]
        \vspace{-2em}
        \centering
        \includegraphics[width=0.9\textwidth]{./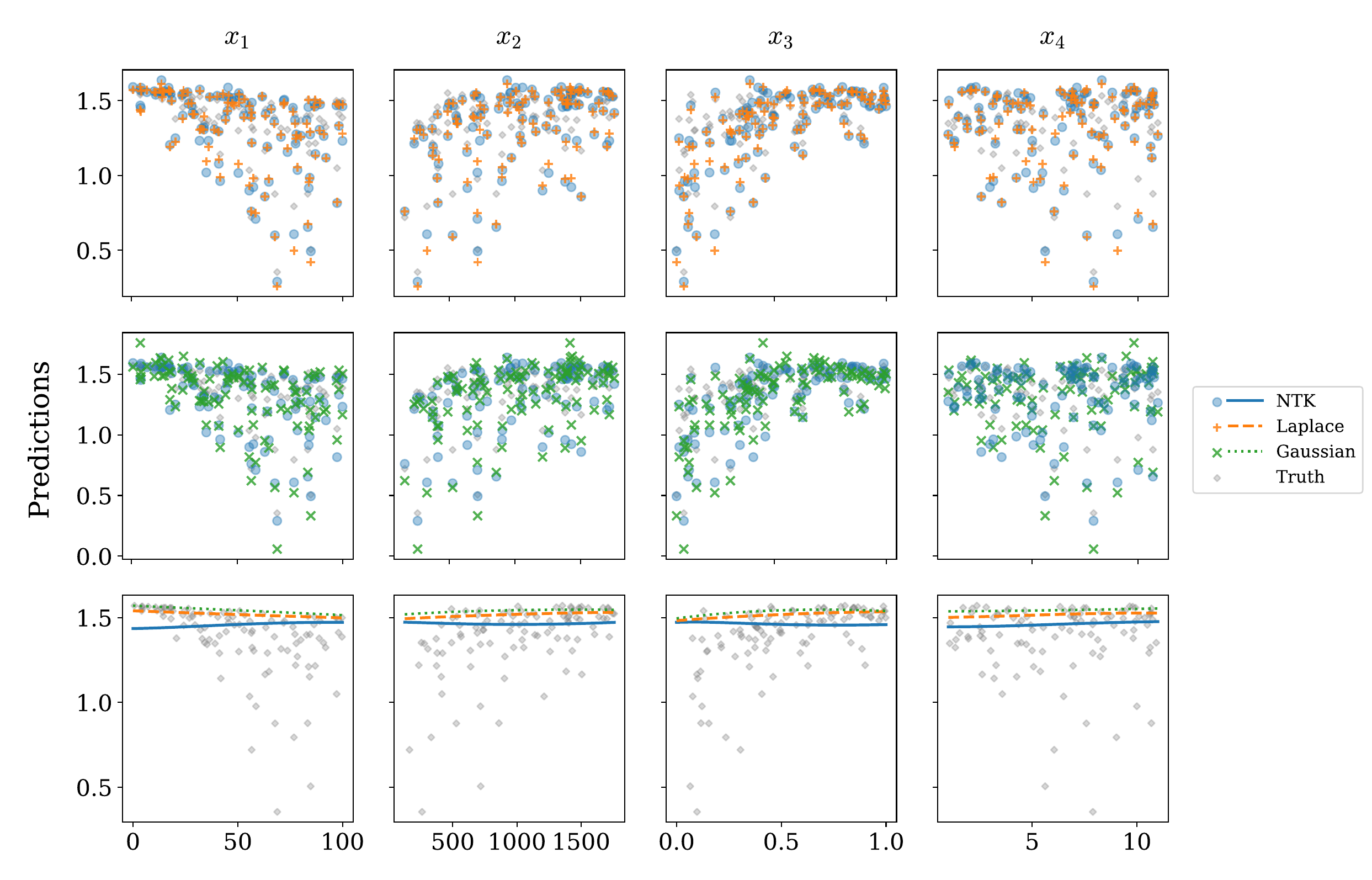}
        \caption{\setlinespacing{1.1} Predictions for non-noisy Friedman 3 in $\R^4$ for $D=2$ with $X_{rescale}$ and $\mathbf{y}_{rescale}$. The rows of the figure are laid out as in Figure \ref{fig:friedman2Sdy}.}
        \label{fig:friedman3RdBoth}
        
        \vspace{0.75em}
        
        \includegraphics[width=0.9\textwidth]{./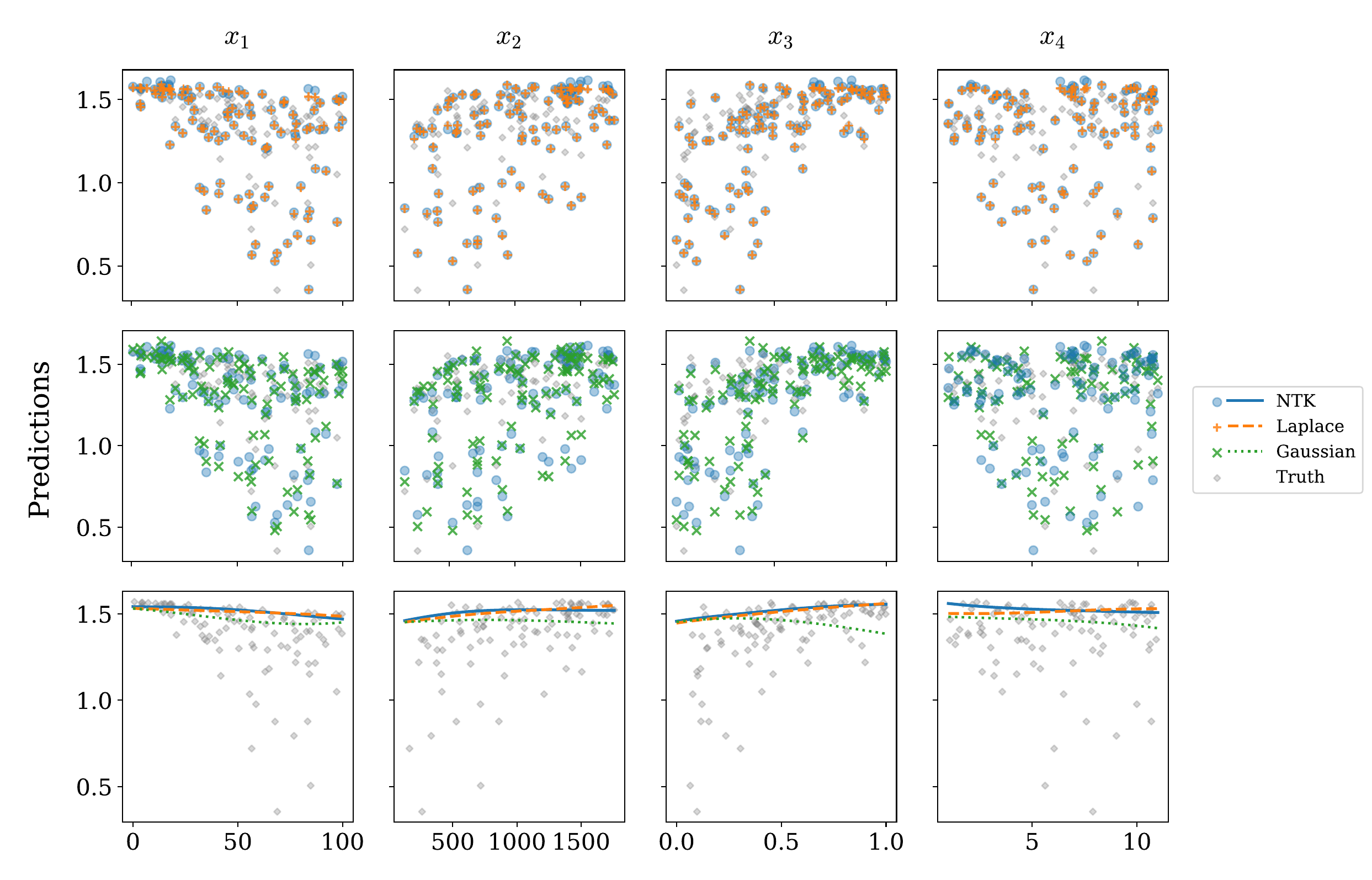}
        \caption{\setlinespacing{1.1} Predictions for non-noisy Friedman 3 in $\Ss^3$ for $D=2$ with $X_{rescale}$ and $\mathbf{y}_{rescale}$. The rows of the figure are laid out as in Figure \ref{fig:friedman2Sdy}.}
        \label{fig:friedman3SdBoth}
    \end{figure}
    
    We now backtrack a bit to address the high $R^2$ values in $\Rd$ as opposed to $\Sd$.  This is almost always the case regardless of dataset or configuration as seen in Tables \ref{tab:friedman1NTKr2} (Friedman 1), \ref{tab:friedman2NTKr2} (Friedman 2), and \ref{tab:friedman3NTKr2} (Friedman 3).  We provide a visualization of this phenomenon for the non-noisy Friedman 3 dataset with both inputs and outputs rescaled and NTK $D=2$.  Figures \ref{fig:friedman3RdBoth} and \ref{fig:friedman3SdBoth} on the following page showcase the posterior means of $\R^4$ ($R^2 \approx 0.692$) and $\Ss^3$ ($R^2 \approx 0.033$) respectively.  The low $R^2$ of $\Ss^3$ is attributed close values between the residual sum of squares and the total sum of squares in Equation \eqref{eq:r2}.  The differences of the predictions appear to be very minor between the spaces.  Looking at the first row of both figures it appears that the inputs in $\R^4$ have a slightly tighter fit over the test data.  This can best be seen on the $x_3$ input in both rows.  What is of interest is that the averaged prediction plots (row 3) for both figures appear to be very similar.  That said, this is in-line with the plots seen in Section \ref{sec:2D} where the posteriors trained in $\Rd$ were more related to the ground truth compared to posteriors trained in $\Sd$.  We include Figures \ref{fig:friedman3noisyRdBoth} and \ref{fig:friedman3noisySdBoth} in Appendix \ref{ch:additionalfigstables} which showcases the same setup for noisy Friedman 3.
    
    \chapter{Real World Experiments}\label{ch:realexp}
    
    In this chapter we use the lessons we learned from Chapter \ref{ch:synthexp} to evaluate the individual regression performance of the Laplace kernel, Gaussian kernel, and NTK individually on two real world datasets in a setting where the models are allowed to train to their own accord.  In Section \ref{sec:realworldsetup} we outline the experiment setup and datasets and in Section \ref{sec:realworldresults} we present our regression results.
    
    \section{Setup and Datasets}\label{sec:realworldsetup}
    
    We setup our experiments with the findings in Chapter \ref{ch:synthexp} in mind for the best possible GP fit. We summarize the experimental treatment in Table \ref{tab:expparamsreal}.  The key differences are that we perform input rescaling ($X_{rescale}$) in all experiments.  In addition, we split our data 75-25 into a training set $(X, \mathbf{y})$ and a testing set $(X_*, \mathbf{y}_*)$ respectively.  Our experiments test inputs in $\Rd$ or $\Sd$ resulting in only 2 configurations per dataset.  Since we are working with real world data, we assume that noise is present and apply the white noise kernel to our three kernels.  We use $R^2$ and RMSE as our regression metrics.
    
    \begin{table}[p]
        \centering
        \begin{tabular}{|c|c|c|}
            \hline
            Data & Description & Value \\
            \hline
            Normalization & Transforming inputs to $\Sd$ & -- \\
            $X_{rescale}$ & Input variable rescaling & \texttt{true} \\
            \hline
            \hline
            Optimization & Description & Value\\
            \hline
            $n_{restart}$ & Number of optimizer restarts & $9$ \\
            $\mathbf{y}_{rescale}$ & Response variable rescaling during training & \texttt{true} \\
            $\alpha$ & Value to ensure positive definiteness & $10^{-5}$ \\
            \hline
            \hline
            Kernel & Description & Value \\
            \hline
            $D$   & NTK depth & $2, 3, 10$ \\
            $\beta$ & NTK bias & Unfixed \\
            $\nu$ & Mat\'ern kernel smoothness & $\dfrac{1}{2}, \infty$ \\
            $\ell_{Lap}, \ell_{Gaus}$ & Length-scale & Unfixed \\
            $c_{NTK}, c_{Lap}, c_{Gaus}$ & Constant value & Unfixed \\
            $\sigma^2_{NTK}, \sigma^2_{Lap}, \sigma^2_{Gaus}$ & Data noise variance & Unfixed \\
            \hline
        \end{tabular}
        \caption{\setlinespacing{1.1} Summary of variables managed in all real world experiments.}
        \label{tab:expparamsreal}
    \end{table}
    \begin{algorithm}[p]
        \DontPrintSemicolon
        \SetKwInOut{Define}{Define}
        \SetKwInOut{Input}{Input}
        \SetKwInOut{Return}{Return}
        
        \Define{$n_{restart}$, $\mathbf{y}_{rescale}$, $\alpha$, $noise$}
        \Input{$\theta, X, \mathbf{y}, X_*$ (Train and test data pre-processed the same way)}

        $k(\theta) \leftarrow c\cdot k(\theta) + \sigma^2 \delta_{\mathbf{s} = \mathbf{t}}$\;
        $f_{k_{opt}}, \theta_{opt} \leftarrow$ \textsc{optimize}$\left\{ f_k \sim \mathcal{GP}\right(0, k(\theta)\left)|_{X, \mathbf{y}} : n_{restart}, \mathbf{y}_{rescale}, \alpha \right\}$\;
        $\mathbf{\bar{f}}_{k_*} \leftarrow f_{k_{opt}}(X_*)$\;

        \Return{$\mathbf{\bar{f}}_{k_*}$}
        
        \caption{\setlinespacing{1.1} Experiment procedure for fitting a GP to real world data. \textsc{optimize} refers to the GP optimization process of maximizing the marginal log likelihood.}
        \label{alg:realworld}
    \end{algorithm}
    
    Furthermore, we simplify the training procedure as we are not interested in posterior mean matching.  The three kernels considered follow the same GP fitting and predicting procedure outlined in Equation \eqref{eq:posteriordist} from Section \ref{sec:gp}.  We summarize the pseudocode in Algorithm \ref{alg:realworld}.
    
    We utilize two real world datasets: the concrete compressive strength dataset \citep{yeh1998modeling} and the forest fire dataset \citep{cortez2007data}.  We provide a brief summary of the data in Table \ref{tab:realworlddata}.  The Concrete dataset contains 1030 observations and 8 features which include 7 features describing the concrete's ingredients (kg/m$^3$) and 1 feature describing its age (days).  The Fire dataset contains 517 observations and a total of 12 features.  We drop all the features except for the 4 features describing the weather conditions (temperature (\textcelsius), relative humidity (\%), wind (km/h), and rain (mm/m$^2$)).  This is because \citet{cortez2007data} found that these features provided the best regression performance given their metrics and we found that additional features would result in kernels that would zero out resulting in meaningless posteriors.  It should be noted that we also take the advice of the authors of the Fire dataset and transform the response variable ``area'' using a log transform since it is heavily right-skewed:  $y_{area} := \log(y + 1)$.
    
    \begin{table}[t]
        \centering
        \begin{tabular}{|c|p{0.6\textwidth}|c|c|}
            \hline
            Data Name & \centering Task & $d$ & $n$ \tabularnewline
            \hline
            Concrete & \centering Determine concrete compressive strength & 8 & 1030 \tabularnewline
            \hline
            Fire & \centering Determine area of forest burned during fire & 4 & 517 \tabularnewline
            \hline
        \end{tabular}
        \caption{\setlinespacing{1.1} Summary of real world data.}
        \label{tab:realworlddata}
    \end{table}
    
    \section{Results}\label{sec:realworldresults}
    
    We find through our experiments that the NTK attains comparable $R^2$ values to the Laplace and Gaussian kernels for the Concrete dataset as seen in Table \ref{tab:realworldresults}.  In fact, we attain values of $R^2 \approx 0.9$ indicating a very close fit to the ground truth.  There is only a minute difference between experiments done in $\R^8$ versus $\Ss^7$.  We illustrate the concrete compression predictions in $\Ss^7$ in Figure \ref{fig:concreteNormTNoiseT}.  We note that the Laplace and Gaussian GP predictions seem to align closely to the NTK despite being trained separately. 
    
    \begin{table}[b!]
        \centering
        \begin{tabular}{|c|c|c||c|c|c||c|c|}
        \cline{4-8}
        \multicolumn{3}{c|}{} & \multicolumn{3}{c|}{NTK} & \multirow[c]{2}{*}{Lap} & \multirow[c]{2}{*}{Gaus} \\
        \cline{1-6}
        Data & Metric & Space & $D=2$ & $D=3$ & $D=10$ & & \\
        \hline
        \hline
        \multirow[c]{4}{*}{\small Concrete} & \multirow[c]{2}{*}{RMSE} & $\R^8$ & 4.9562 & 5.0614 & 5.6477 & 5.3210 & 5.3058 \\
        &  & $\Ss^7$ & 5.3571 & 5.2906 & 5.5710 & 5.4368 & 5.1869 \\
        \cline{2-8}
        & \multirow[c]{2}{*}{$R^2$} & $\R^8$ & 0.9041 & 0.9000 & 0.8754 & 0.8894 & 0.8901 \\
        &  & $\Ss^7$ & 0.8879 & 0.8907 & 0.8788 & 0.8846 & 0.8949 \\
        \hline
        \hline
        \multirow[c]{4}{*}{Fire} & \multirow[c]{2}{*}{RMSE} & $\R^4$ & 78.302 & 78.278 & 78.262 & 78.415 & 78.449 \\
        &  & $\Ss^3$ & 78.302 & 78.277 & 78.262 & 78.456 & 78.462 \\
        \cline{2-8}
        & \multirow[c]{2}{*}{$R^2$} & $\R^4$ & \small $-10594$ & \small $-8415$ & \small $-4154$ & \small $-50077$ & \small $-116771$ \\
        &  & $\Ss^3$ & \small $-11031$ & \small $-8539$ & \small $-4154$ & \small $-499246$ & \small $-1254525$ \\
        \hline
        \end{tabular}
        \caption{\setlinespacing{1.1} Results for real world experiments in $\Rd$ and $\Sd$. Metrics for the Fire dataset are calculated by first inverting the log-transformation.}
        \label{tab:realworldresults}
    \end{table}

    \begin{figure}[htp]
        \centering
        \includegraphics[width=\textwidth]{./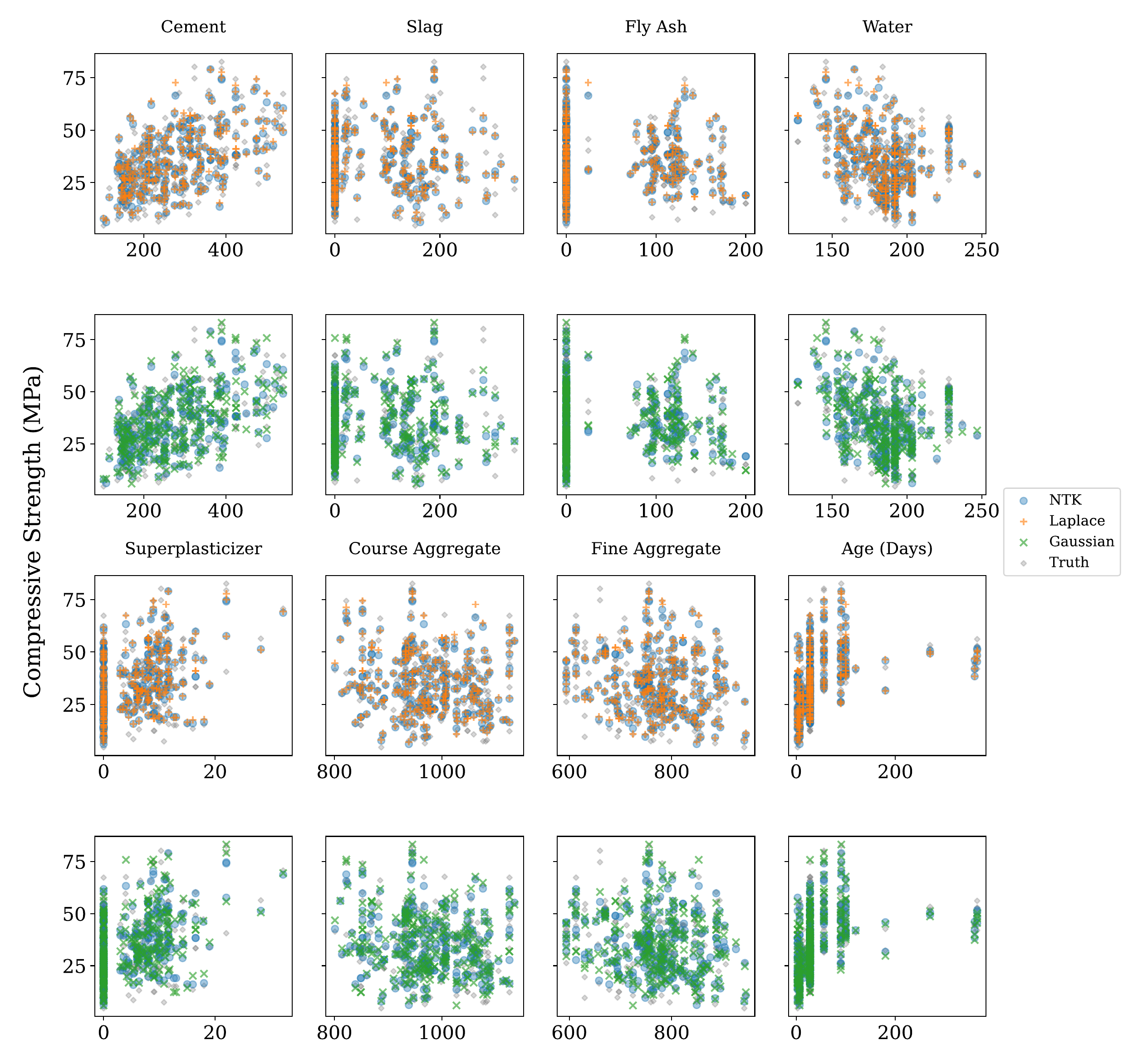}
        \caption{\setlinespacing{1.1} Concrete compression strength predictions over $\Ss^7$ with NTK depth $D=10$ overlayed. Inputs are shown in $\R^8$ for visualization.  The first and last two rows correspond to the input features all in kg/m$^3$ except for the Age.}
        \label{fig:concreteNormTNoiseT}
    \end{figure}
    
    On the other hand, the Fire dataset fails to produce predictions with $R^2$ values that are better than the baseline mean. In fact, the $R^2$ values are significantly worse than the baseline but do show an improvement with higher NTK depth as seen in Table \ref{tab:realworldresults}.  In Figure \ref{fig:firesNormTNoiseT} we can see the area predictions for the NTK attempt to approximate the ground truth but does not capture the general landscape of the data.  Upon further investigation this may be due to the kernel hyperparameter $\sigma^2$ overestimating the noise.  We include Figure \ref{fig:firesNormTNoiseF} of the predictions without the white noise kernel applied in Appendix \ref{ch:additionalfigstables}.  It is interesting to note that the Laplace and Gaussian predictions become constant.  This is because the constant value completely zeros out for both kernels while length-scale becomes large.  Lastly, we turn to the RMSE values for the Fire dataset.  According to \citet{cortez2007data}, their models attained RMSE values of approximately 64.7 whereas we get values close to 78 indicating a worse fit.  Overall, this dataset presented a significantly difficult regression task.
    \vfill
    \begin{figure}[hb!]
        \centering
        \includegraphics[width=\textwidth]{./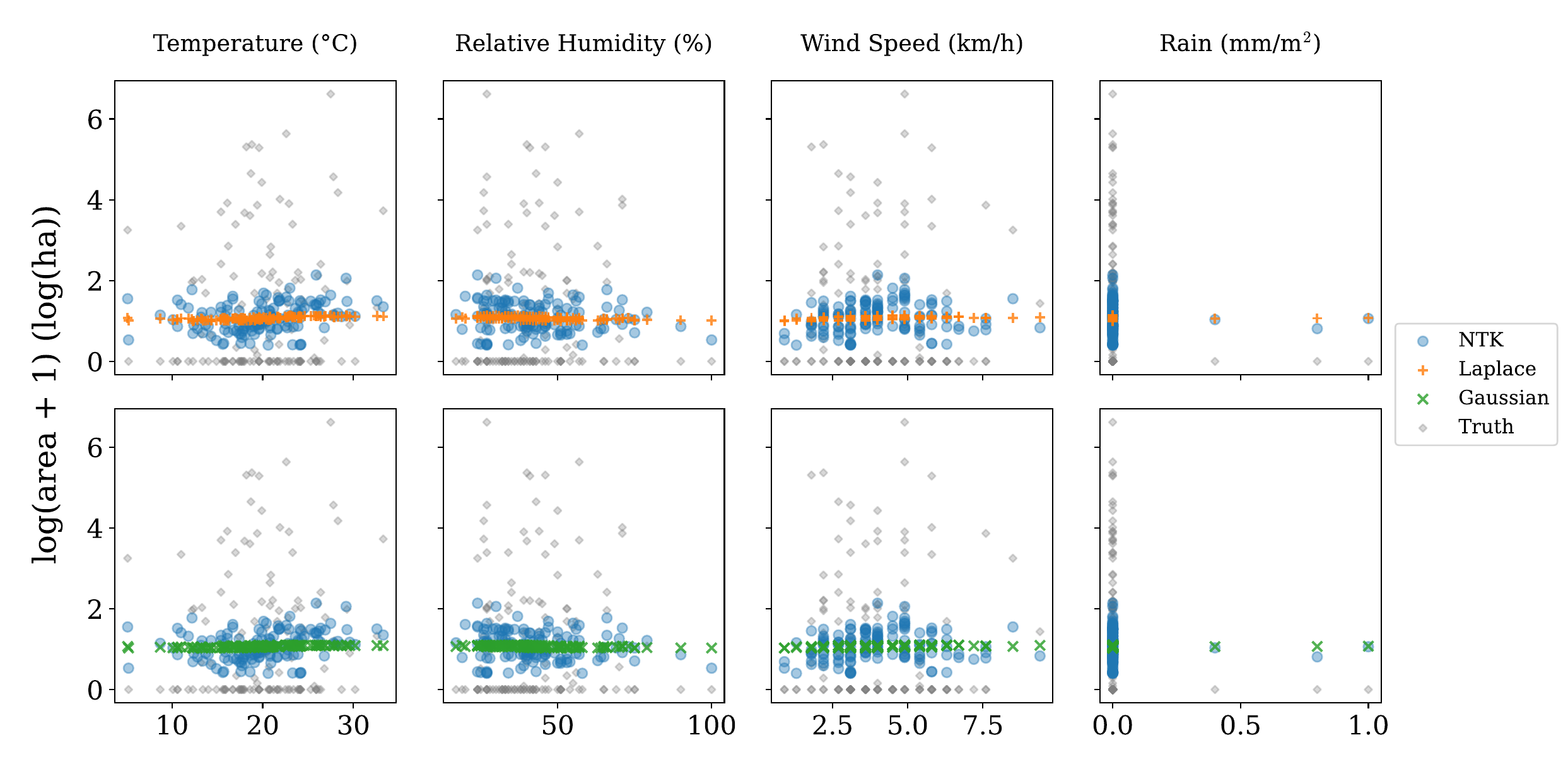}
        \caption{\setlinespacing{1.1} Fire area predictions over $\Ss^3$ with NTK depth $D=10$ overlayed.  Inputs are shown in $\R^4$ and output is log-transformed for visualization.}
        \label{fig:firesNormTNoiseT}
    \end{figure}
    \vfill

    \chapter{Conclusion}
    
    In this work, we explored the empirical connections between the Laplace kernel and NTK, most notably, the direct relationship between the NTK's parameterization and how it can be reconciled with the parameterization of the Laplace kernel.  We showed evidence for the importance of the bias parameter in equating the NTK to the Laplace kernel.  Further, we developed a way to to reliably match the posterior means (elements of an RKHS) generated by the Laplace kernel and NTK GPs trained on various datasets.  We further solidified the evidence that $\Rd$ is not a space in which the two kernels can be equal.  Finally, we showcased that although the Gaussian kernel shares some similarities to the Laplace kernel, it fails to have the flexibility to match up with the other two kernels.
    
    Though consequential, the NTK is a limited tool for machine learning tasks.  As layers increase, it requires a high overhead for computation and runs into numerical issues during hyperparameter optimization due to its recursive nature.  With the connections established to the Laplace kernel by \citet{geifman2020rkhs} and \citet{chen2021rkhs}, the NTK can be more easily applied to practical use and further motivate additional work in kernel methods.
    
    Future work to consider in this topic would be to fully develop the equality between two RKHSs.  As discussed in Section \ref{sec:inclusion}, parameters of two kernels limit the functions a shared RKHS can include.  As such, it would be interesting to explore how kernel parameters affect the resulting RKHS.  The Laplace kernel and NTK are especially of interest since they have entirely different parameterizations yet still share the same space.  In addition, it would be of interest to analyze the discretized nature of the NTK to see if it is possible to develop a direct parameter relation to the Laplace kernel other than pure optimization.  Lastly, it is mathematically interesting to do a full treatment of the asymptotics of the NTK's bias parameter (Appendix \ref{ch:asymtotics}).
    
    At the most essential level, RKHSs provide a robust theoretical framework for practical data modeling.  For a full treatment of this topic we highly recommend the text by \citet{berlinet2011reproducing}.  For machine learning with Gaussian processes \citet{rasmussen2006gpml} give an in-depth overview of the topic.  Lastly, \citet{Goodfellow2016deep} serves as an excellent introduction to modern deep learning.

    \SuppChap
    
    \newpage
    
    \addcontentsline{toc}{chapter}{Bibliography}
    \bibliographystyle{plainnat}
    \bibliography{refs.bib}
    
    \clearpage
    
    \AddAppendix
    
    \begin{appendices}
    
        \chapter{Neural Tangent Kernel Gradient with respect to \texorpdfstring{\boldmath$\beta$}{β}}\label{betagrad}
        
        We derive the gradient used in optimizing the normalized recursive NTK's $\beta$ bias parameter during GP training.  We begin with the following proposition:
        \begin{prop}
            Let $\beta \geq 0$ and define the normalized recursive NTK using Definition \ref{def:recurNTK} and Equation \eqref{eq:normNTK}: 
            \newline
            \begin{equation}\label{normrecur}
                \ddot{k}_{NTK}(L+1, \beta) = \dfrac{1}{(L+1)(\beta^2 + 1)} k_{NTK}(L+1, \beta) = \dfrac{1}{(L+1)(\beta^2 + 1)} \Theta^{(L)}
            \end{equation}
            \newline
            \noindent Then, the partial derivative with respect to $\beta$ of $\ddot{k}_{NTK}(L+1, \beta)$ is defined as
            \newline
            \begin{equation}\label{partialntk}
                \dfrac{\partial}{\partial\beta}\ddot{k}_{NTK}(L+1, \beta) = \dfrac{1}{(L+1)(\beta^2 + 1)}\left(\dfrac{\partial\Theta^{(L)}}{\partial\beta} - \dfrac{2\beta}{\beta^2 + 1}\Theta^{(L)}\right) \\
            \end{equation}
            where
            \begin{equation}
                \dfrac{\partial\Theta^{(L)}}{\partial\beta} = 2\beta \left(\prod^L_{i=1} \Sigma^{(i)}\right) \left(1 + \sum^L_{i=1} \left(\prod_{j=1}^i\dot{\Sigma}^{(j)}\right)^{-1}\right)
            \end{equation}
        \end{prop}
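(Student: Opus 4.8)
The first displayed identity is just the product rule: writing $\ddot{k}_{NTK}(L+1,\beta) = \tfrac{1}{L+1}\,(\beta^2+1)^{-1}\,\Theta^{(L)}$ and using $\tfrac{d}{d\beta}(\beta^2+1)^{-1} = -2\beta(\beta^2+1)^{-2} = -\tfrac{2\beta}{\beta^2+1}(\beta^2+1)^{-1}$ immediately produces the stated expression, so the real content is the closed form for $\partial_\beta\Theta^{(L)}$. The structural observation that makes this tractable is that none of $\Sigma^{(h)}$, $\dot\Sigma^{(h)}$, or the cosine normalizations $\lambda^{(h-1)}$ depend on $\beta$. I would verify this by induction on $h$ from Definition~\ref{def:recurNTK}: the base case is $\Sigma^{(0)}(\mathbf{x},\mathbf{z}) = \mathbf{x}^\top\mathbf{z}$, which has no $\beta$; and in the inductive step $\lambda^{(h-1)}$ is a quotient of $\Sigma^{(h-1)}$ values, while $\dot\Sigma^{(h)} = \tfrac{c_\sigma}{2}\kappa_0(\lambda^{(h-1)})$ and $\Sigma^{(h)} = \tfrac{c_\sigma}{2}\kappa_1(\lambda^{(h-1)})\sqrt{\Sigma^{(h-1)}(\mathbf{x},\mathbf{x})\,\Sigma^{(h-1)}(\mathbf{z},\mathbf{z})}$ are assembled from $\beta$-free quantities and the fixed functions $\kappa_0,\kappa_1$. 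Hence $\beta$ enters the recursion $\Theta^{(h)} = \Theta^{(h-1)}\dot\Sigma^{(h)} + \Sigma^{(h)} + \beta^2$ (and the base case $\Theta^{(0)} = \mathbf{x}^\top\mathbf{z} + \beta^2$) only through the explicit additive $\beta^2$.

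Differentiating that recursion in $\beta$, with the $\Sigma$- and $\dot\Sigma$-terms now constants, gives the first-order linear recurrence $a_h = a_{h-1}\,\dot\Sigma^{(h)} + 2\beta$ with $a_0 = 2\beta$, where $a_h := \partial_\beta\Theta^{(h)}$. Unrolling it in the standard way yields $a_L = 2\beta\big(\prod_{j=1}^{L}\dot\Sigma^{(j)} + \sum_{i=1}^{L}\prod_{j=i+1}^{L}\dot\Sigma^{(j)}\big)$ with the empty product ($i=L$) read as $1$; factoring the full product $\prod_{j=1}^{L}\dot\Sigma^{(j)}$ out of every summand converts $\prod_{j=i+1}^{L}\dot\Sigma^{(j)}$ into $(\prod_{j=1}^{i}\dot\Sigma^{(j)})^{-1}$ times that product, so $a_L = 2\beta\big(\prod_{j=1}^{L}\dot\Sigma^{(j)}\big)\big(1 + \sum_{i=1}^{L}(\prod_{j=1}^{i}\dot\Sigma^{(j)})^{-1}\big)$, which is the stated closed form for $\partial_\beta\Theta^{(L)}$ (the prefactor being the running product of the $\dot\Sigma^{(j)}$ over all layers). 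Plugging this back into the product-rule identity from the first step finishes the proposition.

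The only step that needs care is the last reindexing, and I would certify it by taking the claimed closed form as an ansatz and verifying it by induction on $L$ directly against $a_L = a_{L-1}\dot\Sigma^{(L)} + 2\beta$, which avoids juggling the unrolled sum by hand. The one genuine caveat is that the reciprocal form implicitly assumes $\dot\Sigma^{(j)}\neq 0$; since $\dot\Sigma^{(j)} = \tfrac{c_\sigma}{2}\kappa_0(\lambda^{(j-1)})$ and $\kappa_0(u) = \tfrac{1}{\pi}(\pi-\arccos u)$ vanishes only at $u=-1$, this holds away from antipodal inputs, and on the exceptional set one simply keeps the division-free unrolled expression $a_L = 2\beta\big(\prod_{j=1}^{L}\dot\Sigma^{(j)} + \sum_{i=1}^{L}\prod_{j=i+1}^{L}\dot\Sigma^{(j)}\big)$, which is a polynomial identity in $\beta$ valid everywhere.
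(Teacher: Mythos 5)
Your proposal is correct and follows essentially the same route as the paper: product rule on the normalization factor times $\Theta^{(L)}$, then differentiating the recursion $\Theta^{(h)} = \Theta^{(h-1)}\dot\Sigma^{(h)} + \Sigma^{(h)} + \beta^2$ using that $\Sigma^{(h)}$ and $\dot\Sigma^{(h)}$ are $\beta$-free, with your explicit linear recurrence $a_h = a_{h-1}\dot\Sigma^{(h)} + 2\beta$ and induction merely formalizing the paper's unrolling of the cases $L=1,2,3$ and pattern inference (your added caveat about $\dot\Sigma^{(j)}\neq 0$ for the reciprocal form is a sensible refinement the paper omits). Note also that your prefactor $\prod_{j=1}^{L}\dot\Sigma^{(j)}$ agrees with what the paper actually derives in its proof; the $\prod_{i=1}^{L}\Sigma^{(i)}$ appearing in the proposition statement is a typo (missing dot).
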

        \vspace{1em}
        Proving this Equation \ref{partialntk} requires the use of the product rule $(uv)' = u'v + uv'$ where we will let $u = \frac{1}{(L+1)(\beta^2 + 1)}$ and $v = \Theta^{(L)}$ from Equation \ref{normrecur}.  We will start with the substantially less involved task of deriving $u$ followed by deriving $v$.
        \begin{proof}
            We begin with $\beta \geq 0$ and Equation \ref{normrecur} and we set up the partial derivative with respect to $\beta$ as follows:
            \begin{equation*}
                \dfrac{\partial}{\partial\beta}\ddot{k}_{NTK}(L+1, \beta) = \dfrac{\partial}{\partial\beta}\left(\dfrac{1}{(L+1)(\beta^2 + 1)} \Theta^{(L)} \right)
            \end{equation*}
            We then let 
            \begin{equation*}
                \begin{aligned}
                    u &= \dfrac{1}{(L+1)(\beta^2 + 1)} \\
                    v^{(L)} &= \Theta^{(L)}
                \end{aligned}
            \end{equation*}
            so that we can utilize the product rule $(uv^{(L)})' = u'v^{(L)} + uv^{(L)'}$ in order to solve.  We begin with finding the derivative of $u$:
            \begin{align*}
                \dfrac{d u}{d\beta} &= \dfrac{d}{d\beta}\left(\dfrac{1}{(L+1)(\beta^2 + 1)}\right)\\
                &= \dfrac{d}{d\beta}\left((L+1)(\beta^2 + 1)\right)^{-1}\\
                &= -\left((L+1)(\beta^2+1)\right)^{-2}\left(2\beta(L+1)\right) \\
                &= - \dfrac{2\beta(L+1)}{(L+1)^{2}(\beta^2+1)^{2}}\\
                \dfrac{d u}{d\beta} &= -\dfrac{2\beta}{(L+1)(\beta^2+1)^{2}}
            \end{align*}
            Which completes the derivative of $u$. Now we continue onto the partial derivative with respect to $\beta$ for $v$ which is defined recursively.  As such, we begin with the base case and build our way towards the general case using the recursive formula from Definition \ref{def:recurNTK}:
            \begin{equation*}
                \dfrac{\partial v^{(0)}}{\partial \beta} = \dfrac{\partial \Theta^{(0)}}{\partial \beta} = \dfrac{\partial}{\partial\beta}\left(\Sigma^{(0)} + \beta^2\right) = 2\beta
            \end{equation*}
            \begin{equation*}
                \begin{aligned}
                    \dfrac{\partial v^{(1)}}{\partial \beta} &= 
                    \begin{aligned}[t]
                        \dfrac{\partial \Theta^{(1)}}{\partial \beta} &= \dfrac{\partial}{\partial \beta}\left(\Theta^{(0)}\dot{\Sigma}^{(1)} + \Sigma^{(1)} + \beta^2\right)\\
                        &= \dfrac{\partial\Theta^{(0)}}{\partial \beta}\dot{\Sigma}^{(1)} + \dfrac{\partial}{\partial \beta}\Sigma^{(1)} + \dfrac{\partial}{\partial \beta}\beta^2 \\
                        &= 2\beta\dot{\Sigma}^{(1)} + 2\beta \\
                        &= 2\beta\left(\dot{\Sigma}^{(1)} + 1\right) \\
                    \end{aligned}\\[1em]
                    \dfrac{\partial v^{(2)}}{\partial \beta} &= 
                    \begin{aligned}[t]
                        \dfrac{\partial \Theta^{(2)}}{\partial \beta} &= \dfrac{\partial}{\partial \beta}\left(\Theta^{(1)}\dot{\Sigma}^{(2)} + \Sigma^{(2)} + \beta^2\right)\\
                        &= \dfrac{\partial\Theta^{(1)}}{\partial \beta}\dot{\Sigma}^{(2)} + \dfrac{\partial}{\partial \beta}\Sigma^{(2)} + \dfrac{\partial}{\partial \beta}\beta^2 \\
                        &= 2\beta\left(\dot{\Sigma}^{(1)} + 1\right)\dot{\Sigma}^{(2)} + 2\beta \\
                        &= 2\beta(\dot{\Sigma}^{(1)}\dot{\Sigma}^{(2)} + \dot{\Sigma}^{(2)} + 1)
                    \end{aligned}\\[1em]
                    \dfrac{\partial v^{(3)}}{\partial \beta} &= 
                    \begin{aligned}[t]
                        \dfrac{\partial \Theta^{(3)}}{\partial \beta} &= \dfrac{\partial}{\partial \beta}\left(\Theta^{(2)}\dot{\Sigma}^{(3)} + \Sigma^{(3)} + \beta^2\right)\\
                        &= \dfrac{\partial\Theta^{(2)}}{\partial \beta}\dot{\Sigma}^{(3)} + \dfrac{\partial}{\partial \beta}\Sigma^{(3)} + \dfrac{\partial}{\partial \beta}\beta^2 \\
                        &= 2\beta\left(\dot{\Sigma}^{(1)}\dot{\Sigma}^{(2)} + \dot{\Sigma}^{(2)} + 1\right) \dot{\Sigma}^{(3)} + 2\beta \\
                        &= 2\beta\left(\dot{\Sigma}^{(1)}\dot{\Sigma}^{(2)}\dot{\Sigma}^{(3)} + \dot{\Sigma}^{(2)}\dot{\Sigma}^{(3)} + \dot{\Sigma}^{(3)} + 1\right)\\
                        &= 2\beta\left(\prod_{i=1}^3\dot{\Sigma}^{(i)} + \dfrac{\prod_{i=1}^3\dot{\Sigma}^{(i)}}{\dot{\Sigma}^{(1)}} + \dfrac{\prod_{i=1}^3\dot{\Sigma}^{(i)}}{\dot{\Sigma}^{(1)}\dot{\Sigma}^{(2)}} + \dfrac{\prod_{i=1}^3\dot{\Sigma}^{(i)}}{\prod_{i=1}^3\dot{\Sigma}^{(i)}}\right) \\
                        &= 2\beta \left(\prod_{i=1}^3\dot{\Sigma}^{(i)}\right) \left(1 + \dfrac{1}{\dot{\Sigma}^{(1)}} + \dfrac{1}{\dot{\Sigma}^{(1)}\dot{\Sigma}^{(2)}} + \dfrac{1}{\prod_{i=1}^3\dot{\Sigma}^{(i)}}\right) \\
                        &= 2\beta \left(\prod_{i=1}^3\dot{\Sigma}^{(i)}\right) \left(1 + \sum_{i=1}^3 \left(\prod_{j=1}^i\dot{\Sigma}^{(j)}\right)^{-1}\right)
                    \end{aligned}\\
                    &\hspace{8em}\vdots \\
                    \dfrac{\partial v^{(L)}}{\partial \beta} &= \dfrac{\partial \Theta^{(L)}}{\partial \beta} = 2\beta \left(\prod_{i=1}^L\dot{\Sigma}^{(i)}\right) \left(1 + \sum_{i=1}^L \left(\prod_{j=1}^i\dot{\Sigma}^{(j)}\right)^{-1}\right)
                \end{aligned}
            \end{equation*}
            Thus, the partial derivative of $v$ with respect to $\beta$ is established.  This leaves us to establish the full partial derivative of $\ddot{k}_{NTK}(L+1, \beta)$ via the product rule:
            \begin{equation*}
                \begin{aligned}
                    \dfrac{\partial}{\partial\beta}\ddot{k}_{NTK}(L+1, \beta) = \dfrac{\partial}{\partial \beta}\left(uv^{(L)}\right) &= \left(\dfrac{d u}{d \beta}\right)\left(v^{(L)}\right) + \left(u\right)\left(\dfrac{\partial v^{(L)}}{\partial \beta}\right)\\
                \end{aligned}
            \end{equation*}
            \begin{equation*}
                \begin{aligned}
                    &= \left(-\dfrac{2\beta}{(L+1)(\beta^2+1)^{2}}\right)\left(\Theta^{(L)}\right) + \left(\dfrac{1}{(L+1)(\beta^2 + 1)}\right) \left(\dfrac{\partial \Theta^{(L)}}{\partial \beta}\right) \\
                    &= \dfrac{1}{(L+1)(\beta^2 + 1)} \left(-\dfrac{2\beta\Theta^{(L)}}{\beta^2+1} + \dfrac{\partial \Theta^{(L)}}{\partial \beta}\right) \\
                    &= \dfrac{1}{(L+1)(\beta^2 + 1)} \left(\dfrac{\partial \Theta^{(L)}}{\partial \beta} -\dfrac{2\beta\Theta^{(L)}}{\beta^2+1}\right)
                \end{aligned}
            \end{equation*}
            This completes the full derivation of the partial with respect to $\beta$ for the normalized recursive NTK.
        \end{proof}
        
        \chapter{Asymptotics of \texorpdfstring{\boldmath$\beta$}{β}}\label{ch:asymtotics}
        
        In this Appendix, we derive the limit of the NTK's parameter $\beta$ when depth $D=1$.
        
        \begin{prop}
            Let $\ddot{k}_{NTK}(1,\beta)$ be the shallow normalized neural tangent kernel as defined in Equation \eqref{eq:normNTK}.  Then the limit of $\beta\to\infty$ is as follows:
            \begin{equation}
            \lim_{\beta\to\infty} \ddot{k}_{NTK}(1,\beta) = 2 - \dfrac{\arccos\left(\lambda^{(0)}\right)}{\pi}
            \end{equation}
        \end{prop}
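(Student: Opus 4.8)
The plan is to make the depth-one kernel fully explicit and then pass to the limit directly, since once the recursion of Definition~\ref{def:recurNTK} is unwound $\ddot{k}_{NTK}(1,\beta)$ is just a ratio of two expressions that are affine in $\beta^{2}$.

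First I would unwind one step of the $\Theta$-recursion for the shallow network (one hidden layer), substituting $\Theta^{(1)} = \Theta^{(0)}\dot{\Sigma}^{(1)} + \Sigma^{(1)} + \beta^{2}$ with the base case $\Theta^{(0)} = \Sigma^{(0)} + \beta^{2}$, and then applying the normalization of Equation~\eqref{eq:normNTK}. This gives
\begin{equation*}
    \ddot{k}_{NTK}(1,\beta) \;=\; \frac{\big(\Sigma^{(0)} + \beta^{2}\big)\dot{\Sigma}^{(1)} + \Sigma^{(1)} + \beta^{2}}{\beta^{2}+1}
    \;=\; \frac{\beta^{2}\big(\dot{\Sigma}^{(1)}+1\big) + \big(\Sigma^{(0)}\dot{\Sigma}^{(1)} + \Sigma^{(1)}\big)}{\beta^{2}+1}.
\end{equation*}
The essential structural point is that the $\Sigma$-side of the recursion carries no $\beta$: $\Sigma^{(0)}(\mathbf{x},\mathbf{z}) = \mathbf{x}^{\top}\mathbf{z}$ is $\beta$-free, and $\Sigma^{(1)}$, $\dot{\Sigma}^{(1)}$ are assembled from $\Sigma^{(0)}$ alone through the cosine normalization and the arc-cosine kernels of Equation~\eqref{eq:arccos}. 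Hence the numerator is genuinely a degree-one polynomial in $\beta^{2}$ with leading coefficient exactly $\dot{\Sigma}^{(1)}+1$ and a $\beta$-independent remainder.

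Second, I would let $\beta\to\infty$: here $\beta^{2}/(\beta^{2}+1)\to 1$ and the remainder over $\beta^{2}+1$ vanishes, so the limit equals $\dot{\Sigma}^{(1)}+1$. Substituting $\dot{\Sigma}^{(1)} = \tfrac{c_{\sigma}}{2}\kappa_{0}(\lambda^{(0)})$ with $c_{\sigma}=2$ for ReLU, and then the identity $\kappa_{0}(u) = \tfrac{1}{\pi}(\pi - \arccos u) = 1 - \tfrac{\arccos u}{\pi}$ from Equation~\eqref{eq:arccos}, yields $\dot{\Sigma}^{(1)}+1 = \kappa_{0}(\lambda^{(0)}) + 1 = 2 - \arccos(\lambda^{(0)})/\pi$, which is the claim.

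There is no real obstacle here; the only thing that needs care is the bookkeeping of which terms survive the limit, i.e.\ verifying that $\Sigma^{(1)}$ and $\Sigma^{(0)}\dot{\Sigma}^{(1)}$ sit in the $O(1)$ part that is annihilated by dividing by $\beta^{2}+1$, so that only the $O(\beta^{2})$ coefficient matters. As a sanity check, this is consistent with Appendix~\ref{betagrad}: the formula there for $\partial\Theta^{(L)}/\partial\beta$ specializes at $L=1$ to $2\beta(\dot{\Sigma}^{(1)}+1)$, confirming that $\dot{\Sigma}^{(1)}+1$ is indeed the coefficient of $\beta^{2}$ in $\Theta^{(1)}$.
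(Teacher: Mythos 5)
Your proposal is correct and follows essentially the same route as the paper's own proof: unwind the depth-one recursion, note that $\Sigma^{(0)}$, $\Sigma^{(1)}$, $\dot{\Sigma}^{(1)}$ carry no $\beta$, divide by the normalization $\beta^{2}+1$, and let $\beta\to\infty$ so that only $\kappa_{0}(\lambda^{(0)})+1 = 2-\arccos(\lambda^{(0)})/\pi$ survives. The only cosmetic difference is that you collect the numerator as an affine function of $\beta^{2}$ before passing to the limit, whereas the paper takes the limit term by term; the cross-check against the Appendix~\ref{betagrad} coefficient is a nice extra but not needed.
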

        
        \begin{proof}
            We begin with the normalized NTK from Equation \eqref{eq:normNTK} with 1 hidden layer.
            \begin{equation}
                \ddot{k}_{NTK}(1,\beta) = \dfrac{1}{(\beta^2 + 1)} k_{NTK}(1,\beta) = \dfrac{1}{(\beta^2 + 1)} \Theta^{(1)}
            \end{equation}
            Then, by substituting, we get 
            \begin{equation*}
                \begin{aligned}
                    \dfrac{1}{(\beta^2 + 1)} \Theta^{(1)} &= \dfrac{1}{(\beta^2 + 1)}\left(\Theta^{(0)}\dot{\Sigma}^{(1)} + \Sigma^{(1)} + \beta^2\right) \\ 
                    &= \dfrac{1}{(\beta^2 + 1)}\left((\Sigma^{(0)} + \beta^2)\kappa_0\left(\lambda^{(0)}\right) + \kappa_1\left(\lambda^{(0)}\right)\sqrt{x^\top xz^\top z} + \beta^2\right) \\
                    &= \dfrac{(\Sigma^{(0)} + \beta^2)\kappa_0\left(\lambda^{(0)}\right)}{\beta^2 + 1} + \dfrac{\kappa_1\left(\lambda^{(0)}\right)\sqrt{x^\top xz^\top z}}{\beta^2 + 1} + \dfrac{\beta^2}{\beta^2 + 1} \\
                    &= \dfrac{\Sigma^{(0)}\kappa_0\left(\lambda^{(0)}\right)}{\beta^2 + 1} + \dfrac{\beta^2\kappa_0\left(\lambda^{(0)}\right)}{\beta^2 + 1} + \dfrac{\kappa_1\left(\lambda^{(0)}\right)\sqrt{x^\top xz^\top z}}{\beta^2 + 1} + \dfrac{\beta^2}{\beta^2 + 1},
                \end{aligned}
            \end{equation*}
            then by taking the limit with respect to $\beta$ towards infinity:
            \begin{gather*}
                \lim_{\beta\to \infty} \left(\dfrac{\Sigma^{(0)}\kappa_0\left(\lambda^{(0)}\right)}{\beta^2 + 1} + \dfrac{\beta^2\kappa_0\left(\lambda^{(0)}\right)}{\beta^2 + 1} + \dfrac{\kappa_1\left(\lambda^{(0)}\right)\sqrt{x^\top xz^\top z}}{\beta^2 + 1} + \dfrac{\beta^2}{\beta^2 + 1}\right) \\
                = 0 + \kappa_0\left(\lambda^{(0)}\right) + 0 + 1 \\
                = \dfrac{1}{\pi}(\pi - \arccos(\lambda^{(0)})) + 1 \\
                = 1 - \dfrac{\arccos(\lambda^{(0)})}{\pi} + 1 \\
                = 2 - \dfrac{\arccos(\lambda^{(0)})}{\pi}
            \end{gather*}
            thus completing the formulation.
        \end{proof}

    \chapter{Additional Figures and Tables}
    \label{ch:additionalfigstables}
        
    \begin{figure}[p]
        \centering
        \includegraphics[width=\textwidth]{./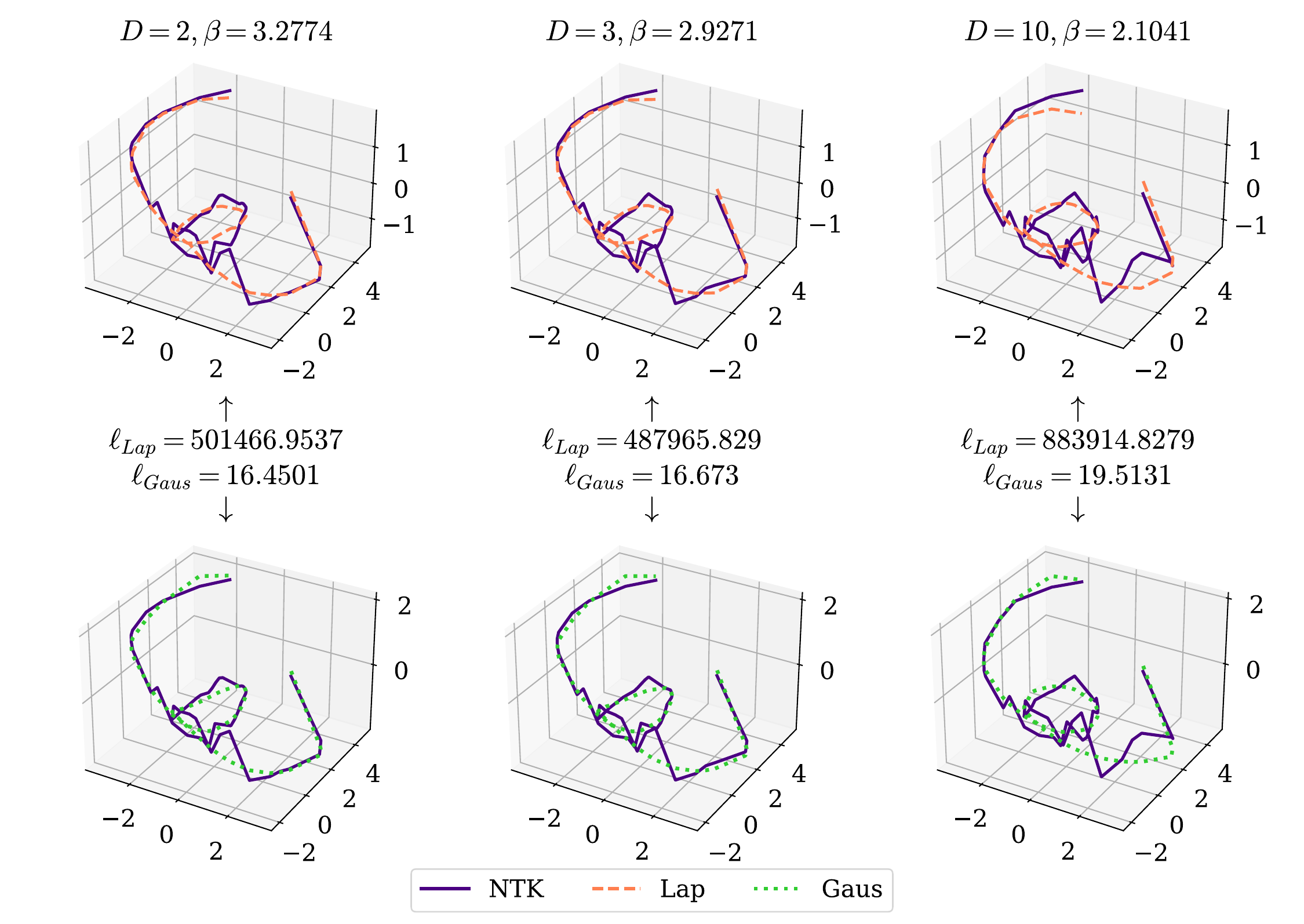}
        \caption{\setlinespacing{1.1} Posterior means generated by fitting to data in $\R^2$ and predicted on out of sample data in $\R^2$.  All kernels seem to be approximating the loop in the curve.  The Laplace and Gaussian kernels provide almost the same predictions between them.  In addition, the kernels seem to do a better job that $\Ss^1$ of approximating the underlying parametric curve.  \textit{Top:} NTK with Laplace kernel overlayed. \textit{Bottom:} NTK with Gaussian kernel overlayed.}
        \label{fig:parametricRd}
    \end{figure}
    
    \begin{figure}[p]
        \centering
        \includegraphics[width=0.45\textwidth]{./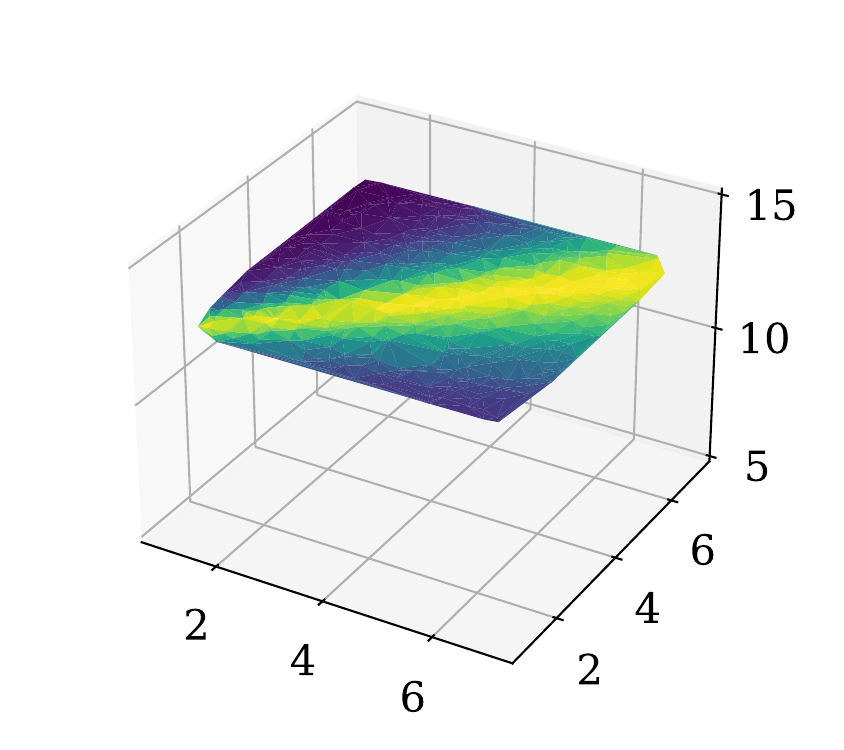}
        \caption{\setlinespacing{1.1} NTK posterior mean of the noisy the Ackley function in $\Ss^1$ for NTK depth $D=2$. The GP was trained using $n=500$ inputs $x_1, x_2\in [1, 7]$ generated using Latin hypercube sampling. The $y$ values are all concentrated around $\approx12.67$ with a difference between the minimum and maximum being $\approx 10^{-8}$ indicating that the posterior mean has zeroed out.  This is due to the kernel's constant value optimizing close to zero.  Attempting to manually fit the GP while controlling the constant value and white noise provides similar results.}
        \label{fig:2dSdNoisyD2}
    \end{figure}
    
    \begin{table}[p]
        \centering
        \begin{tabular}{|c|c|c||c|c||c|c||c|c|}
        \cline{4-9}
        \multicolumn{3}{c|}{} & \multicolumn{2}{c||}{$D=2$} & \multicolumn{2}{c||}{$D=3$} & \multicolumn{2}{c|}{$D=10$} \\
        \hline
        Metrics & Dataset & Noise & Lap & Gaus & Lap & Gaus & Lap & Gaus \\
        \hline
        \multirow[c]{6}{*}{RMSE} & \multirow[c]{2}{*}{Ackley} & No &  $\approx 0$ & 1.7505 & 0.0001 & 1.7505 & 0.0001 & 1.7504 \\
        &  & Yes &  $\approx 0$ &  $\approx 0$ &  $\approx 0$ &  $\approx 0$ &  $\approx 0$ &  $\approx 0$ \\
        \cline{2-9}
        & \multirow[c]{2}{*}{Franke} & No & 0.0101 & 0.1413 & 0.0030 & 0.1412 & 0.0001 & 0.1408 \\
        &  & Yes & 0.0046 & 0.0166 & 0.0028 & 0.0151 & 0.0031 & 0.0155 \\
        \cline{2-9}
        & \multirow[c]{2}{*}{Nonpoly} & No &  $\approx 0$ & 1.5981 & $\approx 0$ & 1.5975 & 0.0001 & 1.5955 \\
        &  & Yes & 0.0216 & 0.1924 & 0.0141 & 0.1866 & 0.0137 & 0.1685 \\
        \hline
        \hline
        \multirow[c]{6}{*}{$\rho$} & \multirow[c]{2}{*}{Ackley} & No & $\approx 1$ & 0.3526 & $\approx 1$ & 0.3526 & $\approx 1$ & 0.3526 \\
        &  & Yes & 0.9926 & 0.9028 & 0.9968 & 0.8939 & 0.9985 & 0.8738 \\
        \cline{2-9}
        & \multirow[c]{2}{*}{Franke} & No & 0.9990 & 0.7782 & 0.9999 & 0.7757 & $\approx 1$ & 0.7766 \\
        &  & Yes & 0.9997 & 0.9943 & 0.9999 & 0.9952 & $\approx 1$ & 0.9951 \\
        \cline{2-9}
        & \multirow[c]{2}{*}{Nonpoly} & No & $\approx 1$ & 0.7492 & $\approx 1$ & 0.7495 & $\approx 1$ & 0.7502 \\
        &  & Yes & 0.9999 & 0.9871 & 0.9999 & 0.9878 & $\approx 1$ & 0.9911 \\
        \hline
        \end{tabular}
        \caption{\setlinespacing{1.1} Posterior mean matching results for the 2D input surface datasets in $\Ss^1$.  As noted in Section \ref{sec:2D}, the noisy the Ackley function was difficult to properly fit resulting in a constant and meaningless posterior thus the zero RMSE should be looked at skeptically.}
        \label{tab:2dSd}
    \end{table}
    
    \begin{table}[p]
        \centering
        \begin{tabular}{|c|c||c|c|c|c||c|c|c|c|}
        \cline{3-10}
        \multicolumn{2}{c|}{} & \multicolumn{4}{c||}{Non-noisy} & \multicolumn{4}{c|}{Noisy} \\
        \hline
        $D$ & \small Sp. & None & \small $X_{rescale}$ & \small $\mathbf{y}_{rescale}$ & Both & None & \small $X_{rescale}$ & \small $\mathbf{y}_{rescale}$ & Both \\
        \hline
        \multirow[c]{2}{*}{2} & $\Ss^9$ & -0.069 & 0.852 & -0.071 & 0.855 & 0.136 & 0.849 & 0.126 & 0.852 \\
        & $\R^{10}$ & 0.229 & 0.855 & 0.240 & 0.935 & 0.232 & 0.852 & 0.242 & 0.935 \\
        \hline
        \multirow[c]{2}{*}{3} & $\Ss^9$ & -0.070 & 0.851 & -0.071 & 0.855 & 0.137 & 0.850 & 0.127 & 0.852 \\
        & $\R^{10}$ & 0.226 & 0.927 & 0.235 & 0.933 & 0.229 & 0.925 & 0.238 & 0.932 \\
        \hline
        \multirow[c]{2}{*}{10} & $\Ss^9$ & -0.072 & 0.815 & -0.073 & 0.822 & 0.128 & 0.814 & 0.129 & 0.821 \\
        & $\R^{10}$ & 0.218 & 0.810 & 0.221 & 0.894 & 0.221 & 0.873 & 0.224 & 0.892 \\
        \hline
        \end{tabular}
        \caption{\setlinespacing{1.1} Friedman 2 $R^2$ results for NTK posterior means with training done using various data transformations.}
        \label{tab:friedman2NTKr2}
        
        \vspace{0.75em}
        
        \begin{tabular}{|c|c||c|c|c|c||c|c|c|c|}
        \cline{3-10}
        \multicolumn{2}{c|}{} & \multicolumn{4}{c||}{Non-noisy} & \multicolumn{4}{c|}{Noisy} \\
        \hline
        $D$ & \small Sp. & None & \small $X_{rescale}$ & \small $\mathbf{y}_{rescale}$ & Both & None & \small $X_{rescale}$ & \small $\mathbf{y}_{rescale}$ & Both \\
        \hline
        \multirow[c]{2}{*}{2} & $\Ss^9$ & -0.198 & 0.023 & -0.198 & 0.033 & 0.063 & 0.044 & 0.075 & 0.117 \\
        & $\R^{10}$ & -0.187 & 0.667 & -0.187 & 0.692 & 0.057 & 0.317 & 0.058 & 0.378 \\
        \hline
        \multirow[c]{2}{*}{3} & $\Ss^9$ & -0.200 & 0.040 & -0.200 & 0.055 & 0.064 & 0.025 & 0.066 & 0.125 \\
        & $\R^{10}$ & -0.191 & 0.659 & -0.190 & 0.696 & 0.051 & 0.313 & 0.117 & 0.374 \\
        \hline
        \multirow[c]{2}{*}{10} & $\Ss^9$ & -0.211 & 0.209 & -0.211 & 0.260 & 0.024 & 0.039 & 0.029 & 0.177 \\
        & $\R^{10}$ & -0.195 & 0.532 & -0.207 & 0.735 & 0.022 & 0.280 & 0.025 & 0.431 \\
        \hline
        \end{tabular}
        \caption{\setlinespacing{1.1} Friedman 3 $R^2$ results for NTK posterior means with training done using various data transformations.}
        \label{tab:friedman3NTKr2}
    \end{table}
    
    \begin{table}[p]
        \centering
        \begin{tabular}{|c||c|c|c|c||c|c|c|c|}
        \cline{2-9}
        \multicolumn{1}{c|}{} & \multicolumn{4}{c||}{Non-noisy} & \multicolumn{4}{c|}{Noisy} \\
        \hline
        $D$ & N/a & \small $X_{rescale}$ & \small $\mathbf{y}_{rescale}$ & Both & N/a & \small $X_{rescale}$ & \small $\mathbf{y}_{rescale}$ & Both \\
        \hline
        2 & $\approx 1$ & 0.9995 & $\approx 1$ & 0.9995 & 0.9999 & 0.9993 & 0.9998 & 0.9993 \\
        3 & $\approx 1$ & 0.9998 & $\approx 1$ & 0.9998 & $\approx 1$ & 0.9998 & 0.9999 & 0.9997 \\
        10 & $\approx 1$ & 0.9996 & $\approx 1$ & $\approx 1$ & $\approx 1$ & 0.9996 & $\approx 1$ & $\approx 1$ \\
        \hline
        \end{tabular}
        \caption{\setlinespacing{1.1} Friedman 2 $\rho$ results for Laplace kernel and NTK posterior mean matching in $\Ss^9$ with training done using various data transformations.}
        \label{tab:friedman2NTKcorr}
        
        \vspace{0.75em}
        
        \begin{tabular}{|c||c|c|c|c||c|c|c|c|}
        \cline{2-9}
        \multicolumn{1}{c|}{} & \multicolumn{4}{c||}{Non-noisy} & \multicolumn{4}{c|}{Noisy} \\
        \hline
        $D$ & N/a & \small $X_{rescale}$ & \small $\mathbf{y}_{rescale}$ & Both & N/a & \small $X_{rescale}$ & \small $\mathbf{y}_{rescale}$ & Both \\
        \hline
        2 & $\approx 1$ & 0.9993 & $\approx 1$ & 0.9993 & $\approx 1$ & 0.9997 & $\approx 1$ & 0.9980 \\
        3 & $\approx 1$ & 0.9999 & $\approx 1$ & 0.9999 & $\approx 1$ & 0.9999 & $\approx 1$ & 0.9991 \\
        10 & $\approx 1$ & 0.9983 & $\approx 1$ & 0.9997 & $\approx 1$ & 0.9953 & $\approx 1$ & 0.9993 \\
        \hline
        \end{tabular}
        \caption{\setlinespacing{1.1} Friedman 3 $\rho$ results for Laplace kernel and NTK posterior mean matching in $\Ss^9$ with training done using various data transformations.}
        \label{tab:friedman3NTKcorr}
    \end{table}
    
    \begin{figure}[p]
    \vspace{-2em}
        \centering
        \includegraphics[width=0.9\textwidth]{./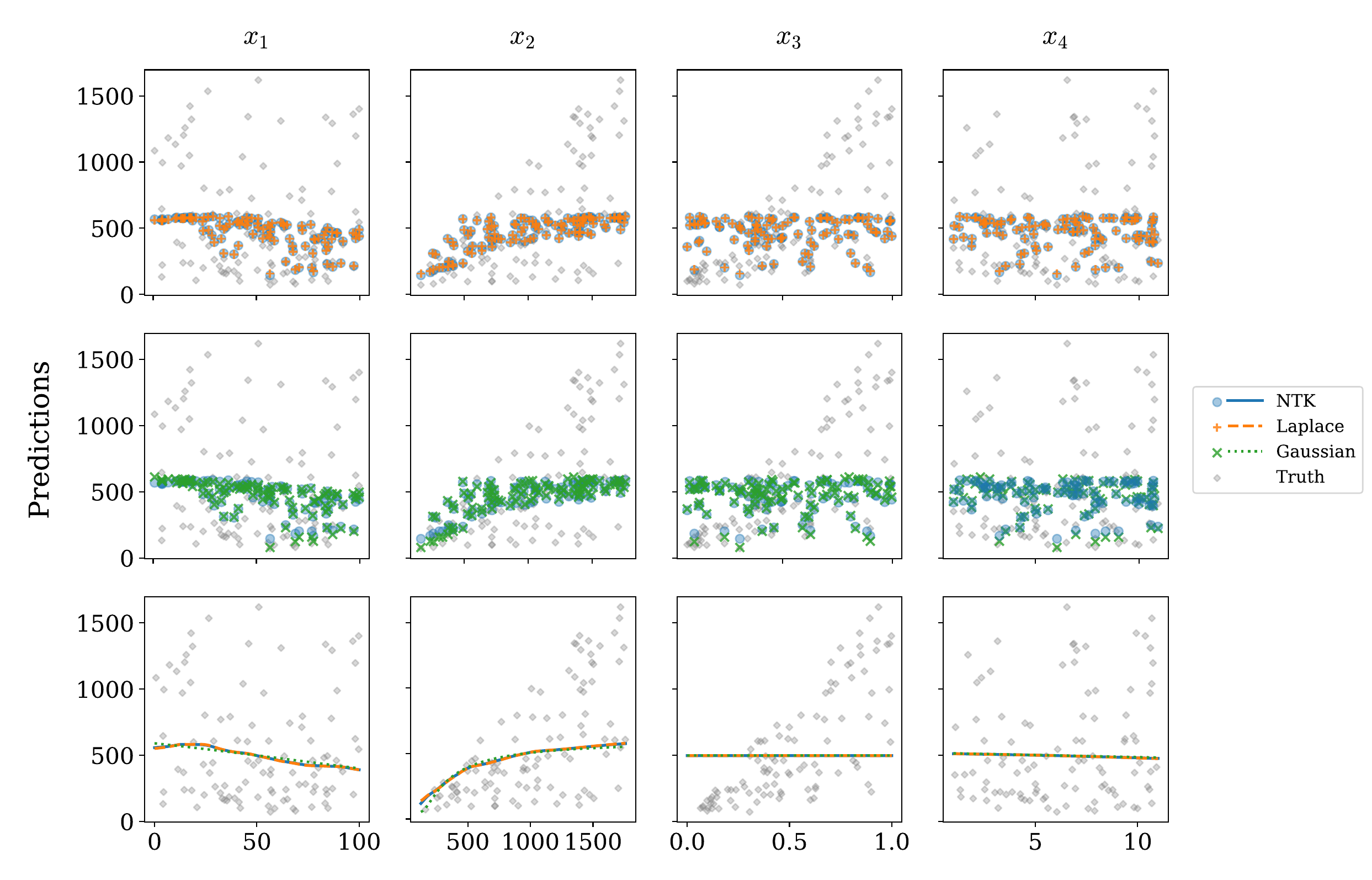}
        \caption{\setlinespacing{1.1} Predictions for noisy Friedman 2 in $\Ss^3$ for $D=2$ with $\mathbf{y}_{rescale}$. \textit{Top:} NTK and Laplace predictions overlayed. \textit{Middle:} NTK and Gaussian predictions overlayed. \textit{Bottom:} Averaged prediction plots of all kernels.}
        \label{fig:friedman2noisySdy}
        
        \vspace{0.75em}
        
        \includegraphics[width=0.9\textwidth]{./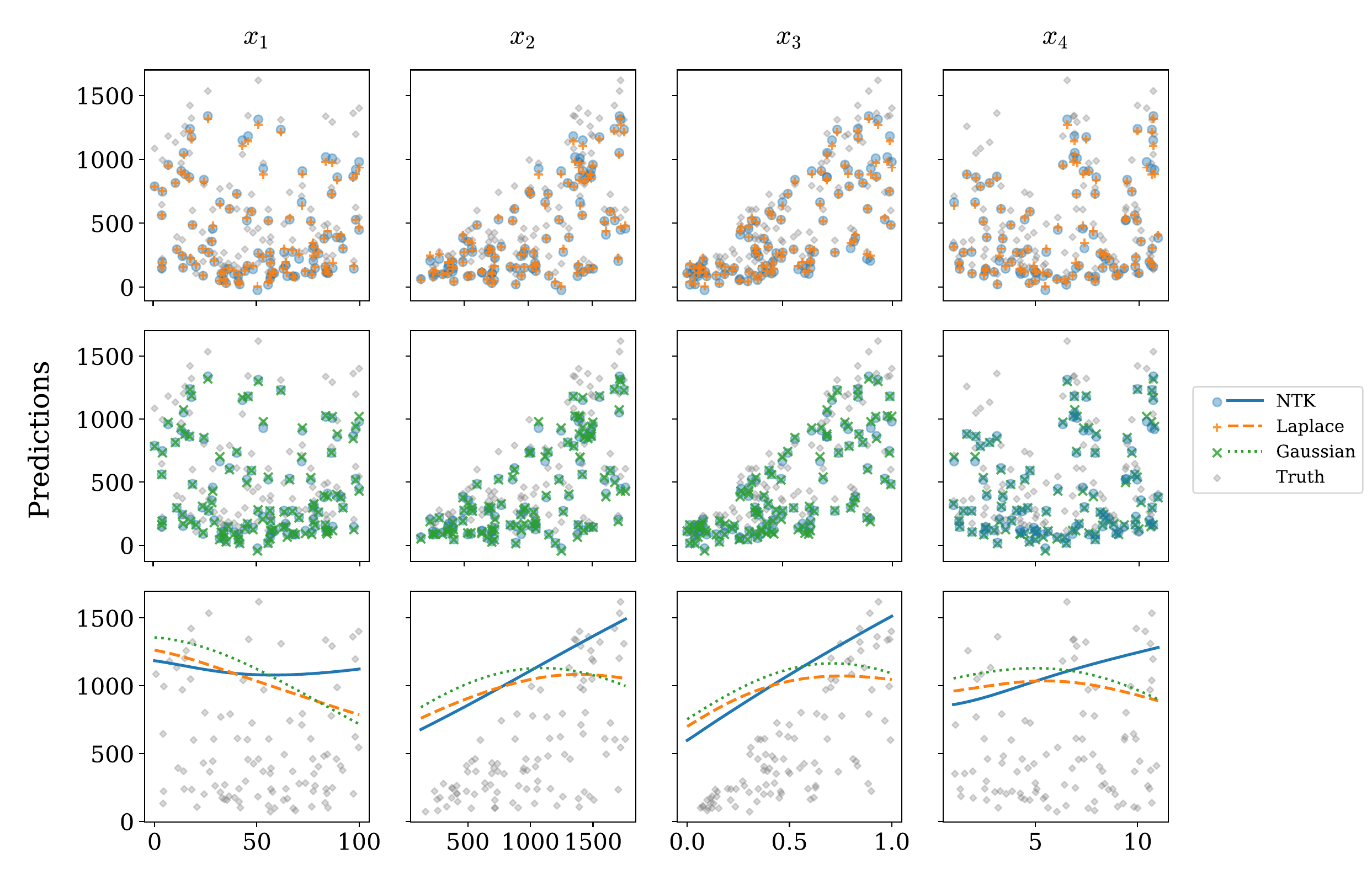}
        \caption{\setlinespacing{1.1} Predictions for noisy Friedman 2 in $\Ss^3$ for $D=2$ with $X_{rescale}$ and $\mathbf{y}_{rescale}$.  The rows of the figure are laid out as in Figure \ref{fig:friedman2noisySdy}.}
        \label{fig:friedman2noisySdXy}
    \end{figure}

    \begin{figure}[p]
        \vspace{-2em}
        \centering
        \includegraphics[width=0.9\textwidth]{./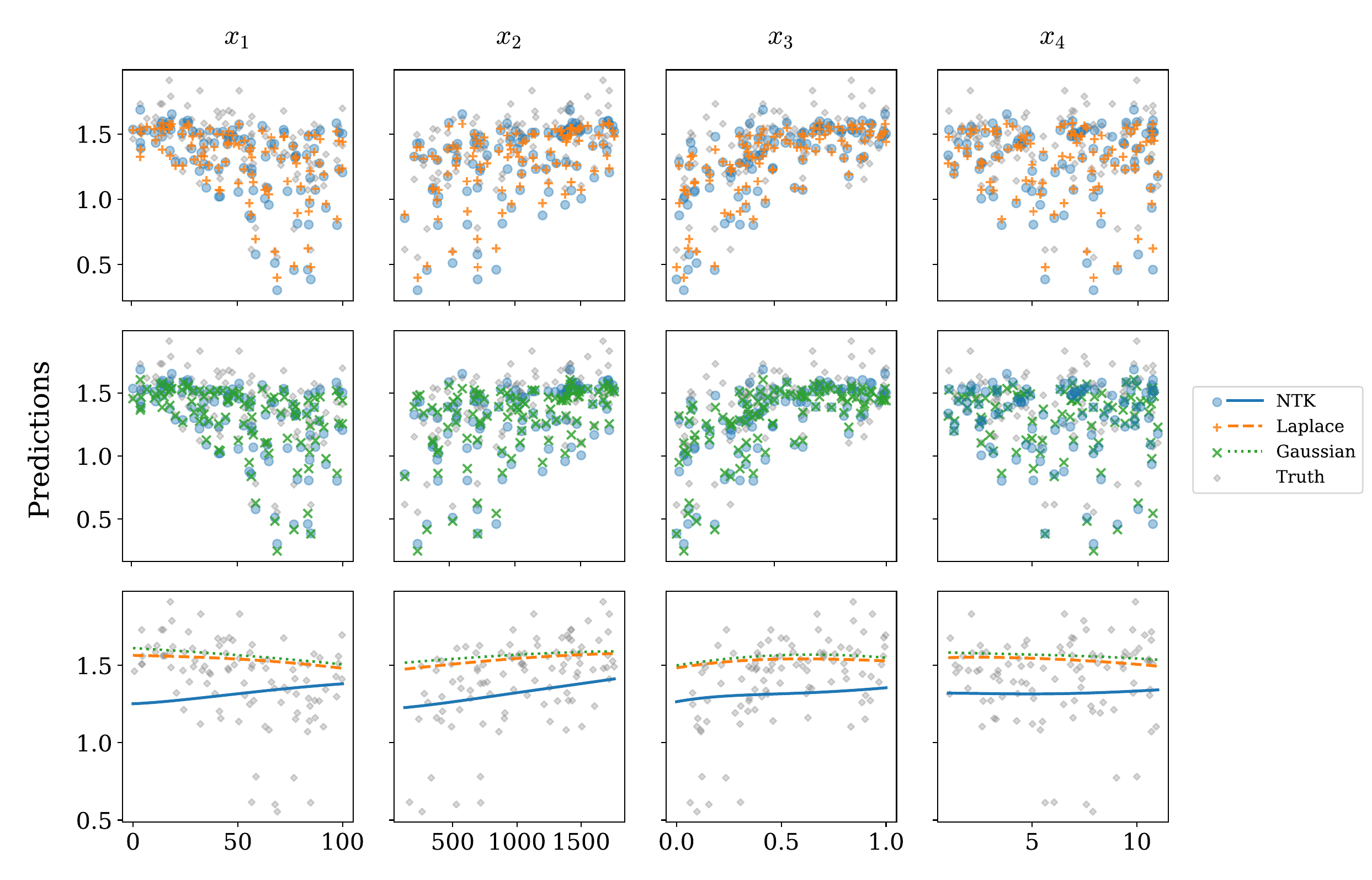}
        \caption{\setlinespacing{1.1} Predictions for noisy Friedman 3 in $\R^4$ for $D=2$ with $X_{rescale}$ and $\mathbf{y}_{rescale}$. The rows of the figure are laid out as in Figure \ref{fig:friedman2noisySdy}.}
        \label{fig:friedman3noisyRdBoth}
        
        \vspace{0.75em}
        
        \includegraphics[width=0.9\textwidth]{./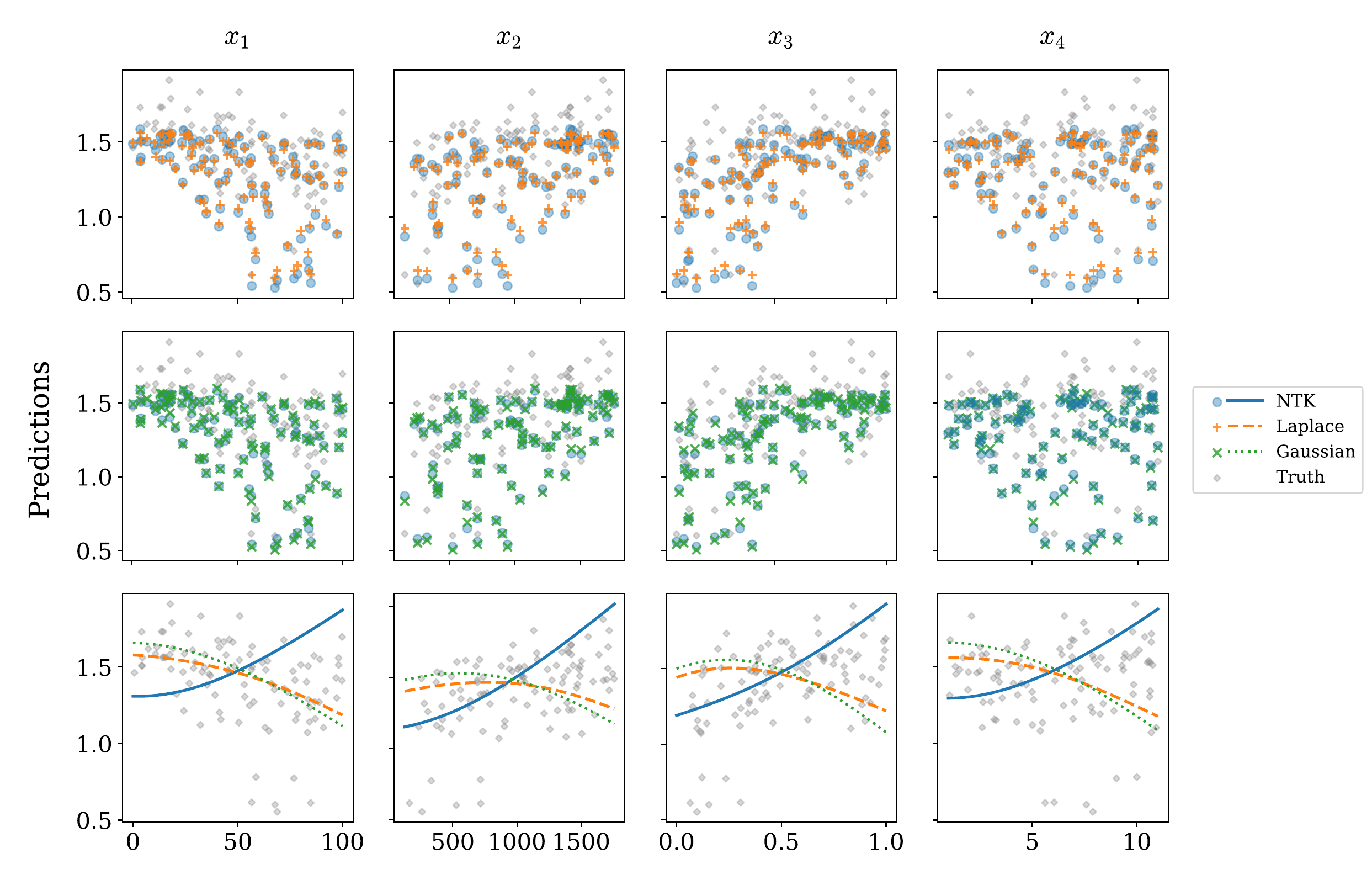}
        \caption{\setlinespacing{1.1} Predictions for noisy Friedman 3 in $\Ss^3$ for $D=2$ with $X_{rescale}$ and $\mathbf{y}_{rescale}$. The rows of the figure are laid out as in Figure \ref{fig:friedman2noisySdy}.}
        \label{fig:friedman3noisySdBoth}
    \end{figure}
    
    \begin{figure}[p]
        \centering
        \includegraphics[width=\textwidth]{./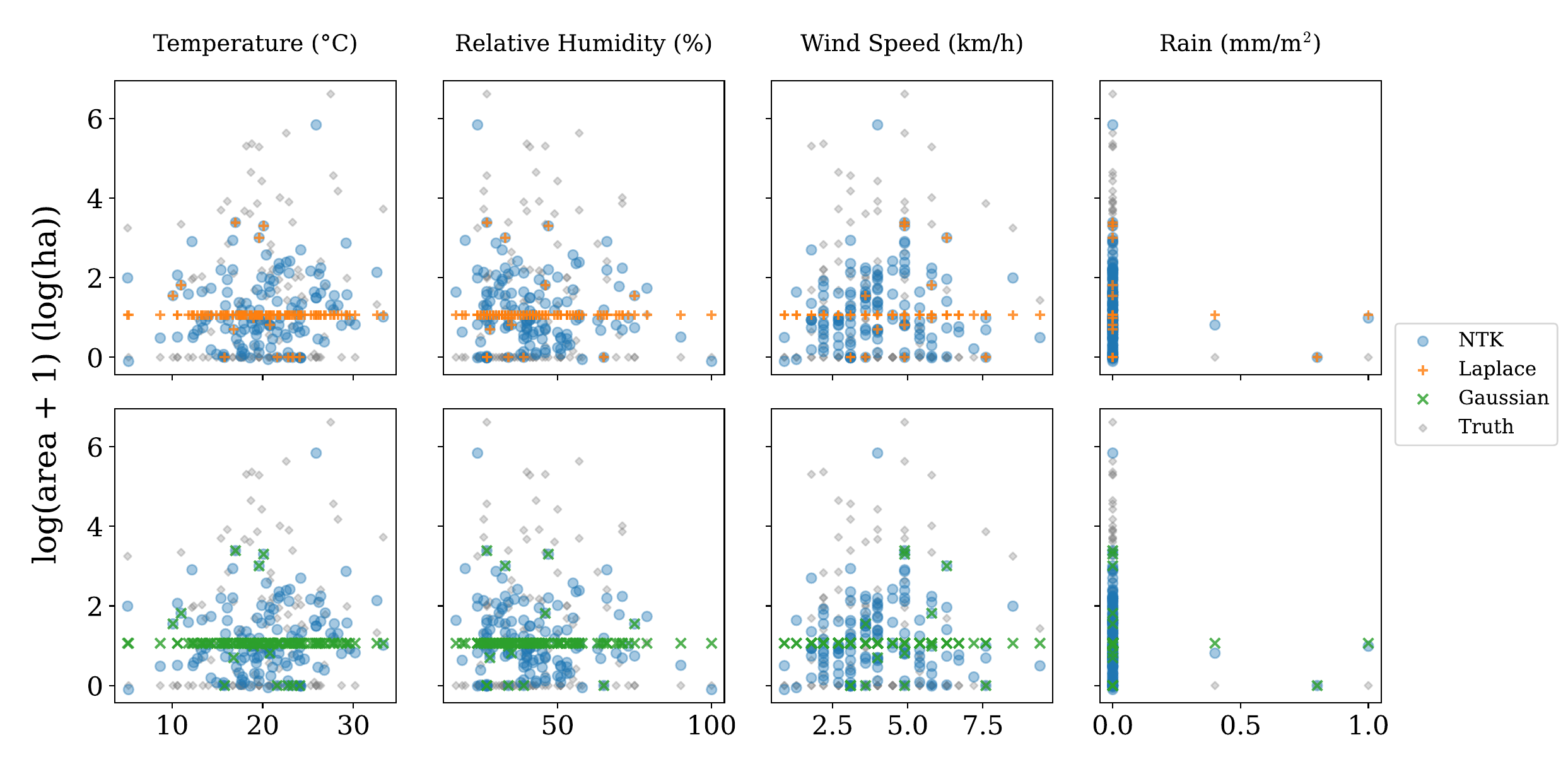}
        \caption{\setlinespacing{1.1} Fire area predictions over $\Ss^3$ without white noise term with NTK $D=10$.  Inputs are shown in $\R^4$ and output is log-transformed for visualization.  This is interesting since the NTK seems to do a better job of aligning the predictions to the ground truth in comparison to the GPs fit with a white noise term in Figure \ref{fig:firesNormTNoiseT}.}
        \label{fig:firesNormTNoiseF}
    \end{figure}
    
    \end{appendices}
\end{document}